\theoremstyle:=definition,remark,plain\do{%
        \expandafter\g@addto@macro\csname th@\theoremstyle\endcsname{%
            \addtolength\thm@preskip\parskip
            }%
        }
\crefname{lemma}{lemma}{lemmas}
\Crefname{lemma}{Lemma}{Lemmas}
\crefname{thm}{theorem}{theorems}
\Crefname{thm}{Theorem}{Theorems}
\crefname{prop}{proposition}{propositions}
\Crefname{prop}{Proposition}{Propositions}
\crefname{assumption}{assumption}{assumptions}
\crefname{assumption}{Assumption}{Assumptions}
\def\adl@drawiv#1#2#3{%
        \hskip.5\tabcolsep
        \xleaders#3{#2.5\@tempdimb #1{1}#2.5\@tempdimb}%
                #2\z@ plus1fil minus1fil\relax
        \hskip.5\tabcolsep}
\newcommand{\cdashlinelr}[1]{%
  \noalign{\vskip\aboverulesep
           \global\let\@dashdrawstore\adl@draw
           \global\let\adl@draw\adl@drawiv}
  \cdashline{#1}
  \noalign{\global\let\adl@draw\@dashdrawstore
           \vskip\belowrulesep}}
\renewcommand{\epsilon}{\varepsilon}
\declaretheorem[style=plain,name=Theorem]{theorem}
\declaretheorem[style=plain,sibling=theorem,name=Lemma]{lemma}
\newenvironment{example*}
 {\pushQED{\qed}\example}
 {\popQED\endexample}
\numberwithin{equation}{section}
\DeclareMathOperator*{\argmax}{argmax}
\DeclarePairedDelimiterX\Set[1]{\lbrace}{\rbrace}%
{  #1 }
      \OR\ifentrytype{incollection}\OR\ifentrytype{inproceedings}%
      \OR\ifentrytype{inreference}\OR\ifentrytype{misc}}
\crefname{example}{Example}{Examples}
\crefname{lemma}{Lemma}{Lemmas}
\crefname{cor}{Corollary}{Corollaries}
\crefname{theorem}{Theorem}{Theorems}
\crefname{assumption}{Assumption}{Assumptions}
\newcommand{\cfd}{\textsc{CounterFact}\xspace}
\declaretheoremstyle[
%    postheadspace=\newline,
spacebelow=\parsep,
    spaceabove=\parsep,
  mdframed={
    backgroundcolor=gray!10!white,     % vv: weird spacing issue, so leaving transpartent for now
    hidealllines=true, 
    innertopmargin=8pt, 
    innerbottommargin=4pt, 
    skipabove=8pt,
    skipbelow=10pt,
    nobreak=true
}
]{grayboxed}
\crefname{gassumption}{Assumption}{Assumptions}
\definecolor{WowColor}{rgb}{.75,0,.75}
\definecolor{SubtleColor}{rgb}{0,0,.50}
\newcommand{\LATER}[1]{\textcolor{SubtleColor}{ {\tiny \bf ($\dagger$)} #1}}
\newcommand{\TBD}[1]{\textcolor{SubtleColor}{ {\tiny \bf (!)} #1}}
\newcommand{\PROBLEM}[1]{\textcolor{WowColor}{ {\bf (!!)} {\bf #1}}}
\newcounter{margincounter}
\newcommand{\displaycounter}{{\arabic{margincounter}}}
\newcommand{\incdisplaycounter}{{\stepcounter{margincounter}\arabic{margincounter}}}
\newcommand{\fTBD}[1]{\textcolor{SubtleColor}{$\,^{(\incdisplaycounter)}$}\marginpar{\tiny\textcolor{SubtleColor}{ {\tiny $(\displaycounter)$} #1}}}
\newcommand{\fPROBLEM}[1]{\textcolor{WowColor}{$\,^{((\incdisplaycounter))}$}\marginpar{\tiny\textcolor{WowColor}{ {\bf $\mathbf{((\displaycounter))}$} {\bf #1}}}}
\newcommand{\fLATER}[1]{\textcolor{SubtleColor}{$\,^{(\incdisplaycounter\dagger)}$}\marginpar{\tiny\textcolor{SubtleColor}{ {\tiny $(\displaycounter\dagger)$} #1}}}
\newcommand{\ynote}[1]{\textcolor{blue}{Yibo: #1}}
\newcommand{\gnote}[1]{\textcolor{magenta}{Goutham: #1}}
\newcommand{\bnote}[1]{\textcolor{teal}{[BA: #1]}}
\renewcommand{\LATER}[1]{}
\renewcommand{\fLATER}[1]{}
\renewcommand{\TBD}[1]{}
\renewcommand{\fTBD}[1]{}
\renewcommand{\PROBLEM}[1]{}
\renewcommand{\fPROBLEM}[1]{}
\renewcommand{\ynote}[1]{}
\renewcommand{\gnote}[1]{}
\renewcommand{\bnote}[1]{}
\newtcolorbox{mytable}[3][]{%
    enhanced,
    float, 
    floatplacement=t,
    every float=\centering,
    capture=hbox, 
    coltitle = #2!20!black,
    title = {#3}, 
    fontupper=\small,
    attach boxed title to top left={%
        xshift=5mm, 
        yshift=-\tcboxedtitleheight/2, 
        yshifttext=-1mm},
    boxed title style={colback=red!20},
    colframe = black,
    colback = blue!20,
    #1}
\title{Do LLMs dream of elephants (when told not to)? Latent concept association and associative memory in transformers}
\author[1]{Yibo Jiang}
\author[2]{Goutham Rajendran}
\author[2]{\authorcr Pradeep Ravikumar}
\author[3]{Bryon Aragam}
\affil[1]{Department of Computer Science, University of Chicago}
\affil[2]{Machine Learning Department, Carnegie Mellon University}
\affil[3]{Booth School of Business, University of Chicago}
\date{} 
\begin{document}

\maketitle

\begin{abstract}
Large Language Models (LLMs) have the capacity to store and recall facts. Through experimentation with open-source models, we observe that this ability to retrieve facts can be easily manipulated by changing contexts, even without altering their factual meanings. 
These findings highlight that LLMs might behave like an associative memory model where certain tokens in the contexts serve as clues to retrieving facts. 
We mathematically explore this property by studying how transformers, the building blocks of LLMs, can complete such memory tasks. 
We study a simple latent concept association problem with a one-layer transformer and we show theoretically and empirically that the transformer gathers information using self-attention and uses the value matrix for associative memory. 
\end{abstract}

\section{Introduction}

\looseness=-1
What is the first thing that would come to mind if you were asked \emph{not} to think of an elephant? Chances are, you would be thinking about elephants. What if we ask the same thing to Large Language Models (LLMs)? Obviously, one would expect the outputs of LLMs to be heavily influenced by tokens in the context \citep{brown2020language}. Could such influence potentially prime LLMs into changing outputs in a nontrivial way?
To gain a deeper understanding, we focus on one specific task called fact retrieval \cite{meng2022locating, meng2023massediting} where expected output answers are given. LLMs, which are trained on vast amounts of data, are known to have the capability to store and recall facts \citep{meng2022locating, meng2023massediting, de2021editing, mitchell2021fast, mitchell2022memory, dai2021knowledge}. This ability raises natural questions: \emph{How robust is fact retrieval, and to what extent does it depend on semantic meanings within contexts? What does it reveal about memory in LLMs?}

In this paper, we first demonstrate that fact retrieval is not robust and LLMs can be easily fooled by varying contexts. For example, when asked to complete ``The Eiffel Tower is in the city of'', GPT-2 \citep{radford2019language} answers with ``Paris''. However, when prompted with ``The Eiffel Tower is not in Chicago. The Eiffel Tower is in the city of'', GPT-2 responds with ``Chicago''.
See Figure~\ref{fig:hijack_example} for more examples, including Gemma and LLaMA.
On the other hand, humans do not find the two sentences factually confusing and would answer ``Paris'' in both cases. We call this phenomenon \emph{context hijacking}. Importantly, 
these findings suggest that LLMs might behave like an associative memory model. 
Specifically, we refer to an associative memory model in which LLMs rely on certain tokens in contexts to guide the retrieval of memories, even if such associations formed are not inherently semantically meaningful. This contrasts with the ideal behavior, where LLMs would generalize by understanding new contexts, reasoning through them, and integrating prior knowledge.

\looseness=-1
This associative memory perspective raises further interpretability questions about how LLMs form such associations. Answering these questions can facilitate the development of more robust LLMs. 
Unlike classical models of associative memory in which distance between memory patterns are measured directly and the associations between inputs and outputs are well-specified, fact retrieval relies on a more nuanced notion of similarity measured by latent (unobserved) semantic concepts.
To model this, we propose a synthetic task called \emph{latent concept association} where the output token is closely related to sampled tokens in the context but wherein similarity is measured via a latent space of semantic concepts.
We then investigate how a one-layer transformer \citep{vaswani2017attention}, a fundamental component of LLMs, can tackle this memory retrieval task in which various context distributions correspond to distinct memory patterns.
We demonstrate that the transformer accomplishes the task in two stages: The self-attention layer gathers information, while the value matrix functions as associative memory. Moreover, low-rank structure also emerges in the embedding space of trained transformers. These findings provide additional theoretical validation for numerous existing low-rank editing and fine-tuning techniques \citep{meng2022locating, hu2021lora}.

\begin{figure}[t]
  \centering
  \includegraphics[width=0.9\linewidth]{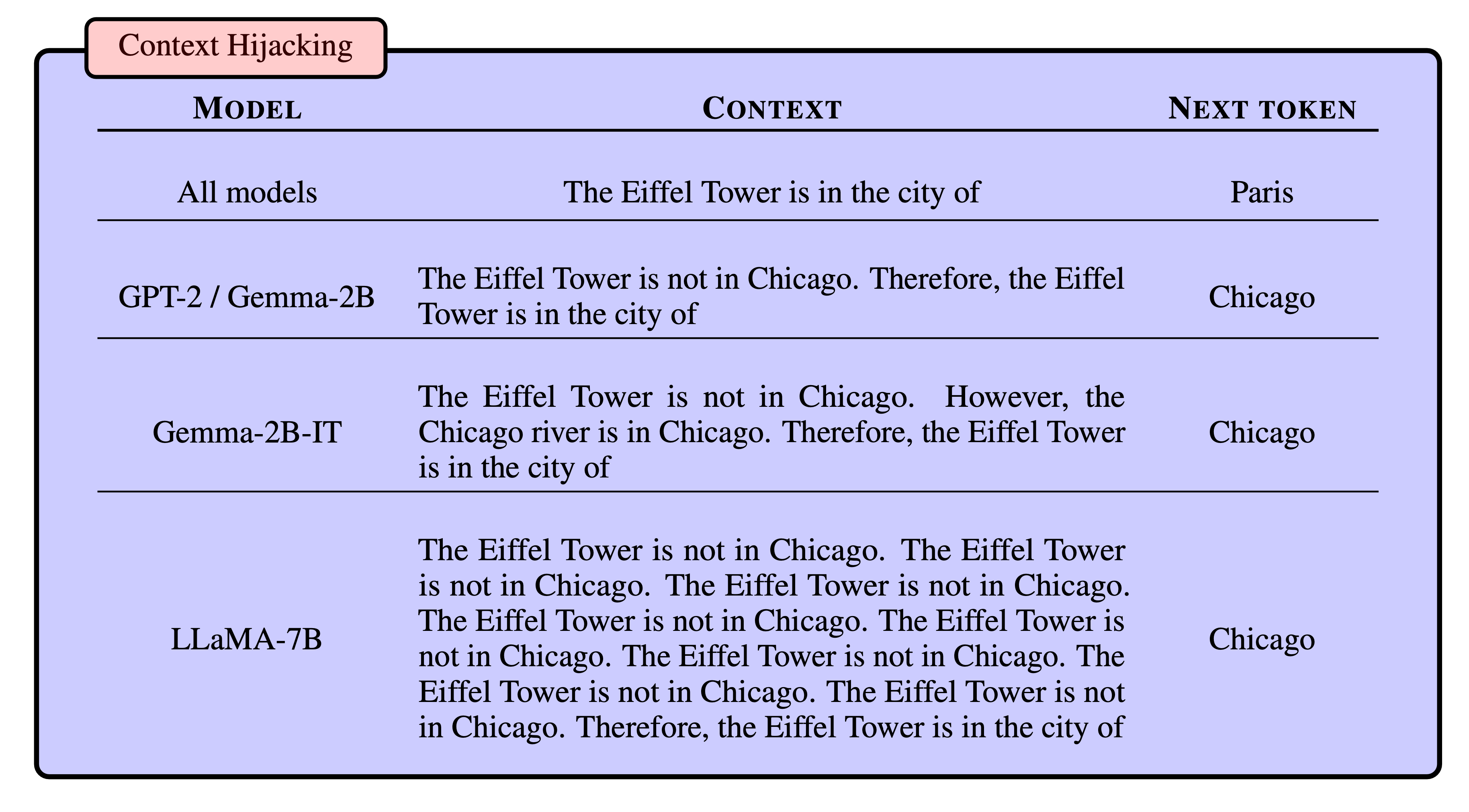}
  \caption{Examples of context hijacking for various LLMs, showcasing that fact retrieval is not robust.}
  \label{fig:hijack_example}
\end{figure}

\vspace{-5pt}
\paragraph{Contributions} Specifically, we make the following contributions:
\begin{enumerate}
    \item We systematically demonstrate context hijacking for various open source LLM models including GPT-2 \cite{radford2019language}, LLaMA-2 \cite{touvron2023llama} and Gemma \cite{team2024gemma}, which show that fact retrieval can be misled by contexts (\cref{sec:context-hijacking}), reaffirming that LLMs lack robustness to context changes \citep{shi2023large, petroni2020context, creswell2022selection, yoran2023making, pandia2021sorting}.
    \looseness=-1
    \item We propose a synthetic memory retrieval task termed latent concept association, allowing us to analyze how transformers can accomplish memory recall (\cref{sec:toy-model}). Unlike classical models of associative memory, our task creates associations in a latent, semantic concept space as opposed to directly between observed tokens. This perspective is crucial to understanding how transformers can solve fact retrieval problems by implementing associative memory based on similarity in the latent space.
    \item We theoretically (\cref{sec:theory}) and empirically (\cref{sec:exp}) study trained transformers on this latent concept association problem, showing that self-attention is used to aggregate information while the value matrix serves as associative memory. And moreover, we discover that the embedding space can exhibit a low-rank structure, offering additional support for existing editing and fine-tuning methods  \citep{meng2022locating, hu2021lora}. 
\end{enumerate}

\section{Literature review}
\label{sec:related-work}

\paragraph{Associative memory}
Associative memory has been explored within the field of neuroscience \citep{hopfield1982neural, seung1996brain, ben1995theory, skaggs1994model, steinberg2022associative}. The most popular models among them is the Hopfield network \citep{hopfield1982neural} and its modern successors \cite{ramsauer2020hopfield, millidge2022universal, zhao2023context,hu2024sparse,wu2023stanhop,hu2024nonparametric,hu2024computational,wu2024uniform,hu2024outlier} are closely related to the attention layer used in transformers \citep{vaswani2017attention}. In addition, the attention mechanism has also been shown to approximate another associative memory model known as sparse distributed memory \citep{bricken2021attention}. Beyond attention, \citet{radhakrishnan2020overparameterized, jiang2020associative} show that overparameterzed autoencoders can implement associative memory as well. 
This paper studies fact retrieval as a form of associative memory.
Another closely related area of research focuses on memorization in deep neural networks. \citet{henighan2023superposition} shows that a simple neural network trained on toy model will store data points in the overfitting regime while storing features in the underfitting regime. \citet{feldman2020does, feldman2020neural} study the interplay between memorization and long tail distributions while \citet{kim2022provable, mahdavi2023memorization} study the memorization capacity of transformers.

\paragraph{Interpreting transformers and LLMs}
There’s a growing body of work on understanding how transformers and LLMs work \citep{li2023transformers, allenzhu2023physics31, allenzhu2023physics32, allenzhu2024physics, emrullah2024self, tarzanagh2023margin, tarzanagh2023transformers, li2024mechanics}, including training dynamics \citep{tian2023scan, tian2023joma, sheen2024implicit} and in-context learning \citep{xie2021explanation, garg2022can, bai2024transformers, bai2024transformers}. Recent papers have introduced synthetic tasks to better understand the mechanisms of transformers  \citep{charton2022my, liu2022towards, nanda2023progress, zhang2022unveiling, zhong2024clock}, such as those focused on Markov chains \citep{bietti2024birth, edelman2024evolution, nichani2024transformers, makkuva2024attention}. Most notably, \citet{bietti2024birth} and subsequent works \cite{cabannes2023scaling, cabannes2024learning} study weights in transformers as associative memory but their focus is on understanding induction head \citep{olsson2022context} and one-to-one map between input query and output memory. An increasing amount of research is dedicated to understanding the internals of pre-trained LLMs,  broadly categorized under the term ``mechanistic interpretability'' \citep{elhage2021mathematical, olsson2022incontext, geva2023dissecting, meng2022locating, meng2023massediting, jiang2024origins, rajendran2024learning, hase2024does, wang2022interpretability, mcgrath2023hydra, geiger2021causal, geiger2022inducing, geiger2024finding, wu2024interpretability}.

\looseness=-1
\paragraph{Knowledge editing and adversarial attacks on LLMs} Fact recall and knowledge editing have been extensively studied \citep{meng2022locating, meng2023massediting, hase2024does, sakarvadia2023memory, de2021editing, mitchell2021fast, mitchell2022memory, dai2021knowledge, zhang2023large, tian2024instructedit, jin2023cost}, including the use of in-context learning to edit facts  \citep{zheng2023can}.
This paper aims to explore a different aspect by examining the  robustness of fact recall to variation in prompts. A closely related line of work focuses on adversarial attacks on LLMs \citep[see][for a review]{chowdhury2024breaking}. Specifically, prompt-based adversarial attacks \citep{xu2023llm, zhu2023promptbench, wang2023robustness} focus on the manipulation of answers within specific classification tasks while other works concentrate on safety issues \citep{liu2023prompt, perez2022ignore, zou2023universal, apruzzese2022real, wang2023decodingtrust, si2022so, rao2023tricking, shanahan2023role, liu2023jailbreaking}. 
\citet{yu2024enhancing,luo2024decoupled} also study jailbreak phenomena  within the context of modern Hopfield network.
There are also works showing LLMs can be distracted by irrelevant contexts in problem solving \citep{shi2023large}, question answering \citep{petroni2020context, creswell2022selection, yoran2023making} and factual reasoning \citep{pandia2021sorting}. 
Although phenomena akin to context hijacking have been reported in different instances, the goals of this paper are to give a systematic robustness study for fact retrieval, offer a framework for interpreting it in the context of associative memory, and deepen our understanding of LLMs.

\section{Context hijacking in LLMs}
\label{sec:context-hijacking}

\begin{figure}[!t]
    \centering
\begin{subfigure}{0.4\textwidth}
    \centering
    \includegraphics[width=\linewidth]{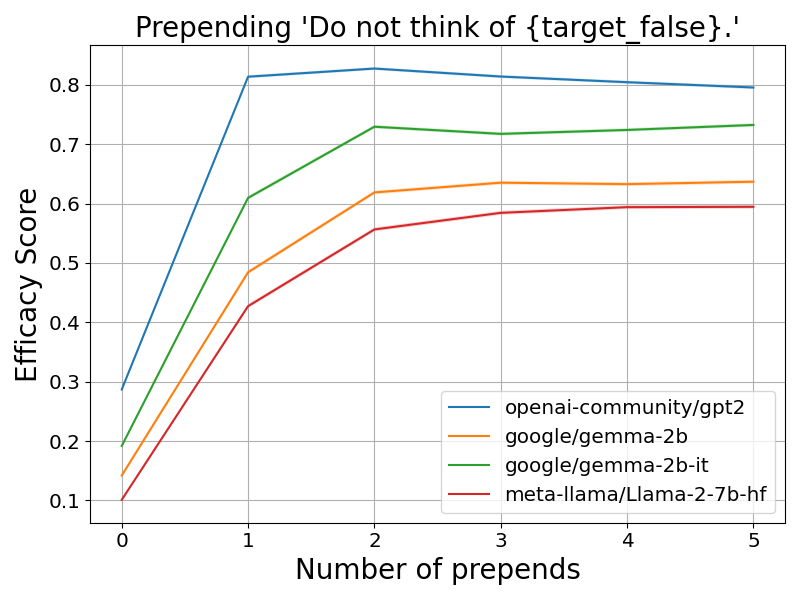}
    \caption{Hijacking generically}
    \label{fig:sub_false_accs}
\end{subfigure}
\hspace{0.05\textwidth}
\begin{subfigure}{0.4\textwidth}
    \centering
    \includegraphics[width=\linewidth]{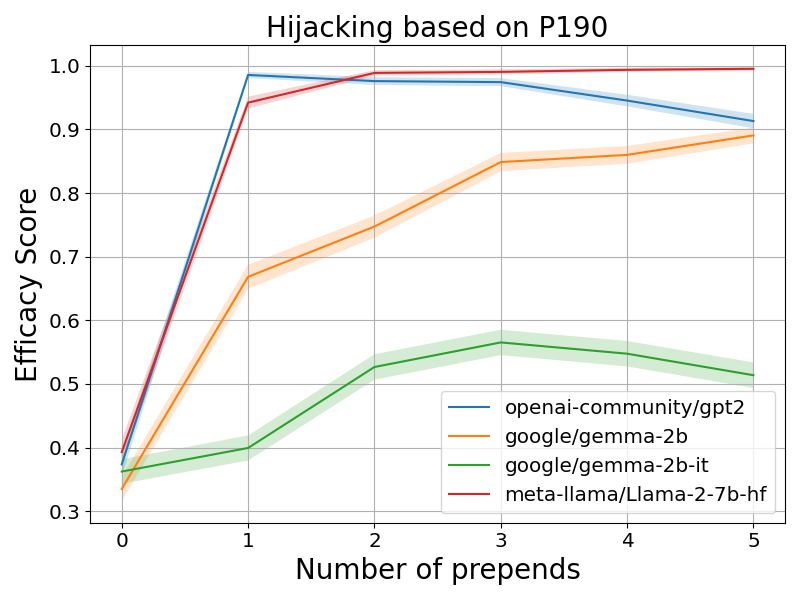}
    \caption{Hijacking based on Relation ID P190}
    \label{fig:P190_accs}
\end{subfigure}
    \caption{Context hijacking can cause LLMs to output false target. The figure shows
    efficacy score versus the number of prepends for various LLMs on the \cfd dataset under two hijacking schemes.}
    \label{fig:main_hijacking_plots}
\end{figure}

\looseness=-1
In this section, we run experiments on LLMs including GPT-2 \cite{radford2019language}, Gemma \cite{team2024gemma} (both base and instruct models) and LLaMA-2-7B \cite{touvron2023llama} to explore the effects of context hijacking on manipulating LLM outputs. 
As an example, consider Figure~\ref{fig:hijack_example}. When we prompt the LLMs with the context ``The Eiffel Tower is in the city of'', all 4 LLMs output the correct answer (``Paris''). 
However, as we see in the example, we can actually manipulate the output of the LLMs simply by modifying the context with additional \textit{factual} information that would not confuse a human. We call this \emph{context-hijacking}. 
Due to the different capacities and capabilties of each model,
the examples in Figure~\ref{fig:hijack_example} use different hijacking techniques.
% applied to various LLMs.
This is most notable on LLaMA-2-7B, which is a much larger model than the others. Of course, as expected, the more sophisticated attack on LLaMA also works on GPT-2 and Gemma. Additionally, the instruction-tuned version of Gemma can understand special words like ``not'' to some extent. Nevertheless, it is still possible to systematically hijack these LLMs, as demonstrated below.

\looseness=-1
We explore this phenomenon at scale with the \cfd dataset introduced in \cite{meng2022locating}, a dataset of difficult counterfactual assertions containing a diverse
set of subjects, relations, and linguistic variations. \cfd has $21,919$ samples, each of which are given by a tuple $(\prompt, \targetTrue, \targetFalse, \subject, \relation)$. 
From each sample, we have a context prompt $\prompt$ with a true target answer $\targetTrue$ (target\_true) and a false target answer $ \targetFalse$ (target\_false), e.g. the prompt $\prompt = \text{``Eiffel Tower can be found in''}$ has true target $\targetTrue = \text{``Paris''}$ and false target $\targetFalse = \text{``Guam''}$.
Additionally, the main entity in $\prompt$ is the subject $\subject$ ($\subject = \text{``Eiffel Tower''}$) and the prompt is categorized into relations $\relation$ (for instance, other samples with the same relation ID as the example above could be of the form ``The location of \{subject\} is'', ``\{subject\} can be found in'', ``Where is \{subject\}? It is in'').
For additional details on how the dataset was collected, see \cite{meng2022locating}.

\looseness=-1
For a hijacking scheme, we report the Efficacy Score (ES) \cite{meng2022locating}, which is the proportion of samples for which the token probabilities satisfy $Pr[\targetFalse] > Pr[\targetTrue]$ after modifying the context, that is, the proportion of the dataset that has been successfully manipulated.
We experiment with two hijacking schemes for this dataset.
We first hijack by prepending the text ``Do not think of \{target\_false\}'' to each context. For instance, the prompt ``The Eiffel Tower is in'' gets changed to ``Do not think of Guam. The Eiffel Tower is in''. 
In \cref{fig:sub_false_accs}, we see that the efficacy score rises significantly after hijacking. Here, we prepend the hijacking sentence $k$ times for $k = 0,\ldots, 5$ where $k = 0$ yields the original prompt. We see that additional prepends increase the score further. 

In the second scheme, we make use of the relation ID $\relation$ to prepend factually correct sentences. For instance, one can hijack the example above to ``The Eiffel Tower is not located in Guam. The Eiffel Tower is in''. 
We test this hijacking philosophy on different relation IDs. In particular, \cref{fig:P190_accs} reports hijacking based on relation ID $P190$ (``twin city''). And we see similar patterns that with more prepends, the ES score gets higher. It is also worth noting that one can even hijack by only including words that are semantically close to the false target (e.g., ``France'' for false target ``French''). 
This suggests that context hijacking is more than simply the LLM copying tokens from contexts.
Additional details and experiments for both hijacking schemes and for other relation IDs are in \cref{sec: additional_hijacking_expts}. 

\looseness=-1
These experiments show that context hijacking changes the behavior of LLMs, leading them to output incorrect tokens, without altering the factual meaning of the context. It is worth noting that similar fragile behaviors of LLMs have been observed in the literature in different contexts \citep{shi2023large, petroni2020context, creswell2022selection, yoran2023making, pandia2021sorting}. See \cref{sec:related-work} for more details.

Context hijacking indicates that fact retrieval in LLMs is not robust and that accurate fact recall does not necessarily depend on the semantics of the context. As a result, one hypothesis is to view LLMs as an associative memory model where special tokens in contexts, associated with the fact, provide partial information or clues to facilitate memory retrieval \citep{zhao2023context}. To better understand this perspective, we design a synthetic memory retrieval task to evaluate how the building 
blocks of LLMs, transformers, can solve it. 

\section{Problem setup}
\label{sec:toy-model}

In the context of LLMs, fact or memory retrieval, can be modeled as a next token prediction problem. Given a context (e.g., ``The capital of France is''), the objective is to accurately predict the next token (e.g., ``Paris'') based on the factual relation between context and the following token.

\looseness=-1
Previous papers \cite{ramsauer2020hopfield, millidge2022universal, bricken2021attention, zhao2023context} have studied the connection between attention and autoassociative and heteroassociative memory. For autoassociative memory, contexts are modeled as a set of existing memories and the goal of self-attention is to select the closest one or approximations to it. On top of this, heteroassociative memory \cite{millidge2022universal, bricken2021attention} has an additional projection to remap each output to a different one, whether within the same space or otherwise. In both scenarios, the goal is to locate the closest pattern within the context when provided with a query (up to a remapping if it's heteroassociative).

% \looseness=-1
Fact retrieval, on the other hand, does not strictly follow this framework. The crux of the issue is that the output token is not necessarily close to any particular token in the context but rather a combination of them and the ``closeness'' is intuitively measured by latent semantic concepts. For example, consider context sentence ``The capital of France is'' with the output ``Paris''. Here, none of the tokens in the context directly corresponds to the word ``Paris''. Yet some tokens contain partial information about ``Paris''.  
Intuitively, ``capital'' aligns with the ``isCapital'' concept of ``Paris'', 
while ``France'' corresponds to the ``isFrench'' concept linked to ``Paris'' where all the concepts are latent. 
To model such phenomenon, we propose a synthetic task called \emph{latent concept association} where the output token is closely related to tokens in the context and similarity is measured via the latent space.

\subsection{Latent concept association}
\label{sec:setup}
We propose a synthetic prediction task where for each output token $y$, tokens in the context (denoted by $x$) are sampled from a conditional distribution given $y$. Tokens that are similar to $y$ will be favored to appear more in the context, except for $y$ itself. The task of latent concept association is to successfully retrieve the token $y$ given samples from $p(x|y)$. 
The synthetic setup simplifies by not accounting for the sequential nature of language, a choice supported by previous experiments on context hijacking (\cref{sec:context-hijacking}).
We formalize this task below.

To measure similarity, we define a latent space. Here, the latent space is a collection of $m$ binary latent variables $Z_i$. These could be viewed as semantic concept variables.
Let $Z = (Z_1, ..., Z_m)$ be the corresponding random vector, $z$ be its realization, and $\mathcal{Z}$ be the collection of all latent binary vectors.
For each latent vector $z$, there's one associated token $t \in [\vocab] = \{0, ..., \vocab - 1 \}$ where $\vocab$ is the total number of tokens. Here we represent the tokenizer as $\tok$ where $\tok(z) = t$. In this paper, we assume that $\tok$ is the standard tokenizer where each binary vector is mapped to its decimal number. In other words, there's a one to one map between latent vectors and tokens. Because the map is one to one, we sometimes use latent vectors and tokens interchangeably. We also assume that every latent binary vector has a unique corresponding token, therefore $\vocab = 2^m$.

Under the latent concept association model, the goal is to retrieve specific output tokens given partial information in the contexts. This is modeled by the latent conditional distribution:
\begin{equation*}
    p(z|z^{*}) = \omega \pi(z|z^{*}) + (1 - \omega) \textrm{Unif}(\mathcal{Z})
\end{equation*}
where 
\begin{equation*}
    \pi(z|z^{*}) \propto \begin{cases}
        \exp(-D_H(z, z^{*})/\beta) & z \in \latNe( z^{*}), \\
        0 & z \notin \latNe( z^{*}).
    \end{cases}
\end{equation*}
\looseness=-1
Here $D_H$ is the Hamming distance, $\mathcal{N}(z^{*})$ is a subset of $\mathcal{Z} \setminus \{z^{*}\}$ and $\beta > 0$ is the temperature parameter. 
The use of Hamming distance draws a parallel with the notion of distributional semantics in natural language: ``a word is characterized by the company it keeps'' \citep{firth1957synopsis}.
In words, $p(z|z^{*})$ says that with probability $1-\omega$, the conditional distribution uniformly generate random latent vectors and with probability $\omega$, the latent vector is generated from the \emph{informative conditional distribution} $\pi(z|z^{*})$ where the support of the conditional distribution is $\latNe(z^{*})$. Here, $\pi$ represents the informative conditional distribution that depends on $z^{*}$ whereas the uniform distribution is uninformative and can be considered as noise. The mixture model parameter $\omega$ determines the signal to noise ratio of the contexts.

Therefore, for any latent vector $z^{*}$ and its associated token, one can generate $L$ context token words with the aforementioned latent conditional distribution:

\begin{itemize}
    \item Uniformly sample a latent vector $z^{*}$ 
    \item For $l = 1,..., L-1$, sample $z_l \sim p(z|z^{*})$ and $t_l = \tok(z_l)$.
    \item For $l=L$, sample $z \sim \pi(z|z^{*})$ and $t_L = \tok(z)$.
\end{itemize}
\looseness=-1
Consequently, we have $x= (t_1, .., t_L)$ and $y = \tok(z^{*})$. The last token in the context is generated specifically to make sure that it is not from the uniform distribution. This ensures that the last token can use attention to look for clues, relevant to the output, in the context.
Let $\mathcal{D}^L$ be the sampling distribution to generate $(x,y)$ pairs. The conditional probability of $y$ given $x$ is given by $p(y|x)$. With slight abuse of notation, given a token $t \in [\vocab]$, we define $\tNe(t) = \latNe(\tok^{-1}(t))$. we also define $D_H(t, t') =  D_H(\tok^{-1}(t), \tok^{-1}(t'))$ for any pair of tokens $t$ and $t'$.

For any function $f$ that maps the context to estimated logits of output labels, the training objective is to minimize this loss of the last position:
\begin{equation*}
    \mathbb{E}_{(x,y) \in \mathcal{D}^L}[\ell(f(x), y)]
\end{equation*}
where $\ell$ is the cross entropy loss with softmax. The error rate of latent concept association is defined by the following:
\begin{equation*}
    R_{\mathcal{D}^L}(f) = \mathbb{P}_{(x, y) \sim \mathcal{D}^L}[\argmax f(x) \neq y]
\end{equation*}
And the accuracy is $1 - R_{\mathcal{D}^L}(f)$.

\subsection{Transformer network architecture}

Given a context $x = (t_1, .., t_L)$ which consists of $L$ tokens, we define $X \in \{0,  1\}^{\vocab \times L}$ to be its one-hot encoding where $\vocab$ is the vocabulary size. Here we use $\chi$ to represent the one-hot encoding function (i.e., $\chi(x) = X$). Similar to \citep{li2023transformers, tarzanagh2023transformers, li2024mechanics}, we also consider a simplified one-layer transformer model without residual connections and normalization:
\begin{equation}
\label{eqn:transformer}
    f^L(x) = \bigg[ {W_E}^T W_V \textrm{attn} (W_E \chi(x)) \bigg]_{:L}
\end{equation}
where 
\begin{equation*}
\begin{split}
     \textrm{attn} (U) =  U \sigma \Big(\frac{(W_K U )^T(W_Q U)}{\sqrt{d_a}} \Big),
\end{split}
\end{equation*}
\looseness=-1
$W_K \in \mathbb{R}^{d_a \times d}$ is the key matrix, and $W_Q \in \mathbb{R}^{d_a \times d}$ is the query matrix and $d_a$ is the attention head size. $\sigma: \mathbb{R}^{L \times L} \to (0, 1)^{L \times L}$ is the column-wise softmax operation. 
$W_V \in \mathbb{R}^{d \times d}$ is the value matrix and $W_E \in \mathbb{R}^{d \times V}$ is the embedding matrix. 
Here, we adopt the weight tie-in implementation which is used for Gemma \cite{team2024gemma}. We focus solely on the prediction of the last position, as it is the only one relevant for latent concept association. For convenience, we also use $h(x)$ to mean $\big[\textrm{attn} (W_E \chi(x)) \big]_{:L}$, which is the hidden representation after attention for the last position, and $f^L_t(x)$ to represent the logit for output token $t$. 

\section{Theoretical analysis}
\label{sec:theory}

In this section, we theoretically investigate how a single-layer transformer can solve the latent concept association problem.
We first introduce a hypothetical associative memory model that utilizes self-attention for information aggregation and employs the value matrix for memory retrieval. This hypothetical model turns out to mirror trained transformers in experiments. We also examine the role of each individual component of the network: the value matrix, embeddings, and the attention mechanism. We validate our theoretical claims in \cref{sec:exp}.

\subsection{Hypothetical associative memory model}
\label{sec:hypo-am}

In this section, we show that a simple single-layer transformer network can solve the latent concept association problem. The formal result is presented below in \cref{thm:idealmodel-informal}; first we require a few more definitions.
Let $W_E(t)$ be the $t$-th column of the embedding matrix $W_E$. In other words, this is the embedding for token $t$. Given a token $t$, define $\tNeOne(t)$ to be the subset of tokens whose latent vectors are only $1$ Hamming distance away from $t$'s latent vector: $\tNeOne(t) = \{ t' :  D_H(t', t)) = 1      \} \cap \tNe(t)$.
For any output token $t$, $\tNeOne(t)$ contains tokens with the highest probabilities to appear in the context.

The following theorem formalizes the intuition that a one-layer transformer that uses self-attention 
to summarize statistics about the context distributions and whose value matrix uses aggregated representations to retrieve output tokens can solve the latent concept association problem defined in \cref{sec:setup}.

\begin{theorem}[informal]
\label{thm:idealmodel-informal}
Suppose the data generating process follows \cref{sec:setup} where $m \geq 3$, $\omega = 1$, and $\tNe(t) = \vocab \setminus \{ t \}$. Then for any $\epsilon>0$, there exists a transformer model given by \cref{eqn:transformer} that achieves error $\epsilon$, i.e. $R_{\mathcal{D}^L}(f^L)<\epsilon$ given sufficiently large context length $L$. 
\end{theorem}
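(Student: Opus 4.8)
The plan is to exhibit an explicit choice of weights realizing the hypothetical associative memory model: the attention head only \emph{averages} the context, and the value matrix is loaded with the informative conditional distributions $\pi(\cdot \mid z)$ so that the read-out is matched-filter retrieval. Concretely, take $d=\vocab$ and $W_E = I_\vocab$ (so $W_E(t)$ is the $t$-th standard basis vector $e_t$); take $W_K = 0$, which makes every pre-softmax score equal and hence the column-wise softmax at the last position exactly uniform, so that the hidden representation $h(x) = \tfrac1L\sum_{l=1}^L e_{t_l}$ is precisely the empirical token-frequency vector $\hat p_x$ of the context; and take $W_V\in\bbR^{\vocab\times\vocab}$ to have $t$-th row equal to $\pi(\cdot\mid \tok^{-1}(t))^\top$. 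Then the logit for token $t$ is $f^L_t(x) = e_t^\top W_V \hat p_x = \langle \pi(\cdot\mid \tok^{-1}(t)),\,\hat p_x\rangle$.

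\textbf{Population identifiability.} Since $\omega=1$, the context tokens $t_1,\dots,t_L$ are i.i.d.\ draws from $\pi(\cdot\mid z^*)$, so $\hat p_x \to \pi(\cdot\mid z^*)$ and the logits converge to $\langle\pi(\cdot\mid\tok^{-1}(t)),\,\pi(\cdot\mid z^*)\rangle$. The crux is that this family of distributions is self-separating: because Hamming distance is invariant under coordinatewise XOR, the $\pi(\cdot\mid z)$, $z\in\mathcal Z$, are all relabelings of one another and therefore share a common $\ell_2$ norm; and they are pairwise distinct, since the support of $\pi(\cdot\mid z)$ is $\mathcal Z\setminus\{z\}$, so $\pi(\cdot\mid z)$ is the unique member of the family vanishing at token $\tok(z)$. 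Cauchy--Schwarz then gives $\langle\pi(\cdot\mid z),\pi(\cdot\mid z^*)\rangle \le \|\pi(\cdot\mid z^*)\|_2^2$ with equality iff $z=z^*$, so in the limit $\argmax_t f^L_t(x) = \tok(z^*) = y$; and since $\mathcal Z$ is finite, there is a uniform margin $\gamma \defeq \min_{z^*}\big(\|\pi(\cdot\mid z^*)\|_2^2 - \max_{z\ne z^*}\langle\pi(\cdot\mid z),\pi(\cdot\mid z^*)\rangle\big) > 0$.

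\textbf{From the limit to a finite-$L$ guarantee.} By Hoeffding's inequality applied coordinatewise and a union bound over the $\vocab$ tokens, $\Pr\big[\,\|\hat p_x - \pi(\cdot\mid z^*)\|_\infty > \delta\,\big] \le 2\vocab\, e^{-2L\delta^2}$ for every $z^*$. On the complementary event, since each row $\pi(\cdot\mid\tok^{-1}(t))$ of $W_V$ has $\ell_1$ norm $1$, every logit lies within $\delta$ of its limiting value, so choosing $\delta < \gamma/2$ leaves the $\argmax$ unchanged and equal to $y$. Hence $R_{\mathcal D^L}(f^L) \le 2\vocab\, e^{-2L\delta^2}$, which falls below any prescribed $\epsilon$ once $L$ is large enough.

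\textbf{Where the work is.} The only step with real content is the population identifiability claim---that matched-filter retrieval against the stored conditionals returns the correct token with a strictly positive margin. The XOR-invariance plus Cauchy--Schwarz argument settles it; the point requiring care is that distinct latent centers yield genuinely distinct distributions, which is exactly where the support hypothesis $\tNe(t)=\vocab\setminus\{t\}$ (and the exclusion of $z$ from its own neighborhood) enters. The rest is routine: the reduction ``attention $=$ averaging'' is immediate from $W_K=0$, and the concentration-plus-$\argmax$-stability step is a standard empirical-process argument. A more parsimonious construction would instead take $d=m$ with $W_E(t)=\tok^{-1}(t)$ (the binary code of $t$) augmented by a constant coordinate: then $\hat p_x$ concentrates on the coordinatewise mean, which has the affine form $a\mathbf 1 + b\,z^*$, and a diagonal $W_V$ together with a bias recovers $z^*$ by thresholding. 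This variant ties in directly with the low-rank embedding phenomenon studied later, but it needs the temperature-dependent coefficient $b$ to be nonzero, whereas the one-hot construction carries no such caveat.
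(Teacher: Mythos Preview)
Your proof is correct and follows the same high-level template as the paper---uniform attention via $W_K=0$ turns $h(x)$ into the empirical token histogram, and a carefully chosen $W_V$ turns the read-out into a matched filter that identifies $y$ once the histogram concentrates---but the two constructions and identifiability arguments are genuinely different. The paper loads the $t$-th row of $W_V$ (in the orthonormal basis) with the \emph{indicator} of $\tNeOne(t)$, so the logit for $t$ is $\sum_{j\in\tNeOne(t)}\alpha_j$; the margin then comes from a combinatorial lemma that, for $m\ge 3$, no $\tNeOne(t)$ is contained in another, yielding the explicit gap $\exp(-1/\beta)-\exp(-2/\beta)$ and hence an explicit polynomial requirement on $L$. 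You instead load the full conditional $\pi(\cdot\mid\tok^{-1}(t))$ and obtain the margin by Cauchy--Schwarz plus the XOR-invariance observation that all $\pi(\cdot\mid z)$ share a common $\ell_2$ norm; this is slicker, actually does not need the hypothesis $m\ge 3$ (which in the paper's construction is essential---for $m=2$ one has $\tNeOne(00)=\tNeOne(11)$ and the paper's logits coincide), and the distinctness step is exactly where $\tNe(t)=\vocab\setminus\{t\}$ enters, as you note. The price is that your margin $\gamma$ is nonconstructive, so you get only a qualitative ``$L$ sufficiently large'' rather than the paper's quantitative bound; and the paper's nearest-neighbor $W_V$ is the construction \eqref{eqn:wv-constrution} that is later compared against trained networks, so it has downstream empirical relevance that your full-$\pi$ variant does not. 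Your closing remark on a rank-$m$ embedding is a nice pointer to the low-rank phenomenon but, as you acknowledge, needs the temperature-dependent coefficient to be nonzero and is not part of the proof proper.
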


More precisely, for the transformer in \cref{thm:idealmodel-informal}, we will have $W_K = 0$ and $W_Q=0$. Each row of $W_E$ is orthogonal to each other and normalized. And $W_V$ is given by
\begin{equation}
\label{eqn:wv-constrution}
    W_V = \sum_{t \in [\vocab]} W_E(t) (\sum_{t' \in \tNeOne(t)}W_E(t')^T) 
\end{equation}
A more formal statement of the theorem and its proof is given in \cref{appendix:proofs} (\cref{thm:idealmodel}). 

Intuitively, \cref{thm:idealmodel-informal} suggests having more samples from $p(x|y)$ can lead to a better recall rate. On the other hand, if contexts are modified to contain more samples from $p(x|\tilde{y})$ where $\tilde{y} \neq y$, then it is likely for transformer to output the wrong token. This is similar to context hijacking (see \cref{sec:misclass}). The construction of the value matrix is similar to the associative memory model used in \cite{bietti2024birth, cabannes2024learning}, but in our case, there is no explicit one-to-one input and output pairs stored as memories. Rather, a combination of inputs are mapped to a single output. 

\looseness=-1
While the construction in \cref{thm:idealmodel-informal} is just one way that a single-layer transformer can tackle this task, it turns out empirically this construction of $W_V$ is close to the trained $W_V$, even in the noisy case ($\omega \neq 1$). In \cref{sec:expwv}, we will demonstrate that substituting trained value matrices with constructed ones can retain accuracy, and the constructed and trained value matrices even share close low-rank approximations. Moreover, in this hypothetical model, a simple uniform attention mechanism is deployed to allow self-attention to count occurrences of each individual tokens. Since the embeddings are orthonormal vectors, there is no interference. Hence, the self-attention layer can be viewed as aggregating information of contexts. It is worth noting that, in different settings, more sophisticated embedding structures and attention patterns are needed. This is discussed in the following sections.

\subsection{On the role of the value matrix}
\label{sec:value}

The construction in \cref{thm:idealmodel-informal} relies on the value matrix acting as associative memory. But is it necessary? Could we integrate the functionality of the value matrix into the self-attention module to solve the latent concept association problem? Empirically, the answer seems to be negative as will be shown in \cref{sec:expwv}. In particular, when the context length is small, setting the value matrix to be the identity would lead to subpar memory recall accuracy. 

This is because if the value matrix is the identity, the transformer would be more susceptible to the noise in the context. To see this, notice that given any pair of context and output token $(x,y)$, the latent representation after self-attention $h(x)$ must live in the polyhedron $S_y$ to be classified correctly where $S_y$ is defined as:
\begin{equation*}
    S_y = \{v:  (W_E(y) - W_E(t))^T v > 0 \text{ where } t \not\in [\vocab] \setminus \{ y \}\}
\end{equation*}
Note that, by definition, for any two tokens $y$ and $\tilde{y}$, $S_y \cap S_{\tilde{y}} = \emptyset$.
On the other hand, because of the self-attention mechanism, $h(x)$ must also live in the convex hull of all the embedding vectors: 
\begin{equation*}
    CV = \textrm{Conv}(W^{E}(0), ..., W^{E}(|\vocab|-1))
\end{equation*}
In other words, for any pair $(x,y)$ to be classified correctly, $h(x)$ must live in the intersection of $S_y$ and $CV$. Due to the stochastic nature of $x$, it is likely for $h(x)$ to be outside of this intersection. The remapping effect of the value matrix can help with this problem. The following lemma explains this intuition.

\vspace{-2pt}
\begin{restatable}{lemma}{remap}
\label{lemma:reamp}
Suppose the data generating process follows \cref{sec:setup} where $m \geq 3$, $\omega=1$ and $\tNe(t) = \{ t' :  D_H(t, t')) = 1      \} $.
For any single layer transformer given by \cref{eqn:transformer} 
where each row of $W_E$ is orthogonal to each other and normalized, if $W_V$ is constructed as in \cref{eqn:wv-constrution}, then the error rate is $0$. If $W_V$ is the identity matrix, then the error rate is strictly larger than $0$.
\end{restatable}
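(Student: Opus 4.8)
The plan is to split the argument into two halves: (i) the constructed $W_V$ gives zero error, which is essentially the same computation that underlies \cref{thm:idealmodel-informal} but now with the restricted neighborhood $\tNe(t) = \{t' : D_H(t,t')=1\}$; and (ii) the identity $W_V$ cannot give zero error because the relevant intersection $S_y \cap CV$ fails to contain the hidden representation $h(x)$ with positive probability.

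For part (i), I would first record what $h(x)$ looks like. Since $W_K=W_Q=0$ is not assumed here, but the embeddings are orthonormal, the natural choice (and the one to verify works) is uniform attention, so that $h(x) = \frac{1}{L}\sum_{l=1}^L W_E(t_l) = \sum_{t} \widehat{c}_t\, W_E(t)$, where $\widehat{c}_t$ is the empirical frequency of token $t$ in the context. Applying $W_V$ from \cref{eqn:wv-constrution} and then $W_E^T$, orthonormality collapses everything: the logit for token $t$ becomes $f^L_t(x) = \sum_{t'\in\tNeOne(t)}\widehat{c}_{t'}$, i.e. the total empirical mass on the Hamming-1 neighbors of $t$. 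So classification is correct iff the true label $y$ maximizes the neighbor-mass functional. Under $\omega=1$ and $\tNe(z^*)=\{z : D_H(z,z^*)=1\}$, the conditional $p(z|z^*)$ is supported exactly on the $m$ neighbors of $z^*$ and is (by the $\exp(-D_H/\beta)$ weighting, all equal here) uniform on them; so $\mathbb{E}[\widehat{c}_{t'}]$ for $t'$ a neighbor of $y$ is $1/m$, and the expected neighbor-mass of $y$ is $1$ while for any $t\neq y$ it is strictly less than $1$ (a token $t$ can have at most... its neighbors overlapping $y$'s support, and $y\notin\supp p(\cdot|y)$ so it never gets its own full mass — this is where $m\geq 3$ is used to rule out degenerate small cases). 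A concentration argument (Hoeffding/union bound over the finitely many tokens) then shows that for $L$ large enough the empirical neighbor-mass of $y$ beats that of every competitor with probability $\geq 1-\epsilon$, but since the lemma asserts error rate exactly $0$ rather than $\le\epsilon$, I would instead argue directly: actually the cleanest route is to note that $h(x)$ always lies in $CV$ and to check that the constructed $W_V$ maps $S_y \cap CV$-type configurations correctly for \emph{every} realization whose last token lies in $\tNe(y)$ — hmm, this needs care, so more honestly I expect the formal statement in the appendix pins down the precise sense (possibly "zero asymptotic error" or an exact combinatorial guarantee), and I would follow that. Let me phrase the plan as: reduce correct classification to the neighbor-mass comparison, then show the comparison holds deterministically given the support structure (the last token is forced into $\tNe(y)$, and one shows no other token can dominate), invoking $m \geq 3$ to handle the boundary.

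For part (ii), with $W_V = I$ the logit for token $t$ is $W_E(t)^T h(x) = \widehat{c}_t$ by orthonormality, so the transformer outputs $\argmax_t \widehat{c}_t$ — the most frequent token in the context. But the data-generating process conditioned on $y$ never emits $y$ itself (every $z_l$ and the last $z$ are drawn from distributions supported off $z^*$), so $\widehat{c}_y = 0$ always, hence $\argmax_t \widehat{c}_t \neq y$ with probability $1$ whenever some other token appears — which happens always for $L\geq 2$. Therefore the error rate is $1$ on this event, in particular strictly larger than $0$. The main obstacle is part (i): getting an \emph{exactly} zero error rate (as opposed to arbitrarily small) requires either that the appendix version weakens to "$\to 0$" or that there is a deterministic combinatorial reason the neighbor-mass of $y$ is strictly largest for every realization in the support; I would check the appendix statement and, if it is genuinely "$=0$", lean on the fact that the last token is guaranteed to be a neighbor of $y$ together with a pigeonhole argument on Hamming-1 neighborhoods in the hypercube (using $m\geq 3$) to rule out ties and reversals.
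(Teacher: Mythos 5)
Your two-half decomposition matches the paper's proof, and you arrive at essentially the right reductions for both halves.

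For the identity case, your argument is actually cleaner and stronger than the paper's: with orthonormal embeddings the logit of token $t$ reduces to the empirical frequency $\alpha_t$ of $t$ in the context, and since $\omega=1$ together with $\tNe(t)=\{t':D_H(t,t')=1\}$ means the label $y$ never appears among the context tokens, $\alpha_y = 0$ on every realization while some other $\alpha_j>0$, so $\argmax_t f^L_t(x)\neq y$ always and the error rate is in fact $1$, not merely positive. The paper instead exhibits one positive-probability bad event (the context consisting entirely of a single neighbor $j$, so that $h(x)=W_E(j)$ and $f^L_y(x)-f^L_j(x)=-\|W_E(j)\|^2<0$), which suffices for ``strictly larger than $0$'' but is weaker; both routes are valid.

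For the constructed $W_V$, you correctly collapse the logit of $t$ to the neighbor-mass $\sum_{t'\in\tNeOne(t)}\alpha_{t'}$ and you are right that a deterministic support argument, not concentration, is what should yield error exactly $0$. The ingredient you were groping for is: with $\omega=1$ and $\tNe(z^{*})$ the Hamming-$1$ ball, every realized context token lies in $\tNeOne(y)$, so $\sum_{i\in\tNeOne(y)}\alpha_i=1$ deterministically; the paper then invokes \cref{lemma:no-twin} (no Hamming-$1$ neighborhood in the hypercube is contained in another) to claim $\sum_{i\in\tNeOne(\tilde{y})}\alpha_i<1$ for every $\tilde{y}\neq y$, giving strict domination. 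Your instinct that ``this needs care'' is legitimate, though: $\tNeOne(y)\not\subseteq\tNeOne(\tilde{y})$ only guarantees that \emph{some} token of $\tNeOne(y)$ lies outside $\tNeOne(\tilde{y})$, not that such a token actually appears in the realized context. If all $L$ draws land in $\tNeOne(y)\cap\tNeOne(\tilde{y})$ — a positive-probability event, e.g.\ all $L$ tokens equal to a single shared neighbor of $y$ and $\tilde{y}$ — the two neighbor-masses tie at $1$. The paper's proof asserts the strict inequality from \cref{lemma:no-twin} without addressing this, so the discomfort you flagged is real, but it is a gap in the paper's writeup rather than a flaw peculiar to your plan.
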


Another intriguing phenomenon occurs when the value matrix is the identity matrix. In this case, the inner product between embeddings and their corresponding Hamming distance varies linearly. This relationship can be formalized by the following theorem.

\begin{restatable}{theorem}{hamming}
\label{thm:hamming}
Suppose the data generating process follows \cref{sec:setup} where $m \geq 3$, $\omega = 1$ and $\tNe(t) = \vocab \setminus \{ t \}$.
For any single layer transformer given by \cref{eqn:transformer} with $W_V$ being the identity matrix, if the cross entropy loss is minimized so that for any sampled pair $(x, y)$,
\begin{equation*}
    p(y|x) = \hat{p}(y|x) = \textrm{softmax}(f^L_y(x))
\end{equation*}
there exists $a>0$ and $b$ such that for two tokens $t \neq t'$,
\begin{equation*}
    \langle W_E(t), W_E(t') \rangle = -a D_H(t, t') + b
\end{equation*}
\end{restatable}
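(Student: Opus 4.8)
The plan is to exploit the weight-tied architecture together with the orthonormality that is forced on us in the $\omega=1$, $W_V = I$ regime, and then explicitly compute $p(y\mid x)$ from the data-generating process and match it to the softmax logits. First I would write out the logit for token $t$ at the last position. With $W_V = I$ and weight tying, $f^L_t(x) = W_E(t)^T h(x)$ where $h(x) = \sum_{l} \alpha_l W_E(t_l)$ is a convex combination of the embeddings of the context tokens, the weights $\alpha_l$ coming from the attention softmax. So $\hat p(t\mid x) = \mathrm{softmax}_t\big(W_E(t)^T h(x)\big)$, and the logit differences that determine the softmax are controlled entirely by the Gram matrix $G_{tt'} = \langle W_E(t), W_E(t')\rangle$. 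The key structural fact to extract first is that, for the loss to be globally minimized so that $\hat p(y\mid x) = p(y\mid x)$ for \emph{every} sampled pair, the Gram matrix cannot be arbitrary: the true posterior $p(y\mid x)$, computed from the mixture model of \cref{sec:setup}, depends on $x$ only through the multiset of Hamming distances $\{D_H(y, t_l)\}$ (more precisely through the counts of each token, but the informative part $\pi$ weights a token $t_l$ by $\exp(-D_H(t_l, z^*)/\beta)$), so matching it pins down how $G_{tt'}$ may depend on $(t,t')$.

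Next I would carry out the explicit Bayes computation. Under $\omega = 1$ and $\tNe(t) = \vocab\setminus\{t\}$, we have $p(z_l \mid z^*) \propto \exp(-D_H(z_l, z^*)/\beta)$ for all $z_l \neq z^*$, so for a context $x = (t_1,\dots,t_L)$,
\[
p(y\mid x) \;\propto\; \mathbb{1}\{y \notin x\}\,\prod_{l} \exp(-D_H(t_l, y)/\beta) \;=\; \mathbb{1}\{y\notin x\}\,\exp\!\Big(-\tfrac{1}{\beta}\sum_l D_H(t_l, y)\Big),
\]
after absorbing the uniform prior on $z^*$ and the normalizing constant of $\pi$ (which is the same for every candidate $y$ because every token has the full complement as its neighborhood). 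Hence $\log p(y\mid x) - \log p(y'\mid x) = -\tfrac1\beta \sum_l\big(D_H(t_l,y) - D_H(t_l,y')\big) + (\text{const independent of }l)$ on the event that both $y,y'\notin x$. On the model side, $f^L_y(x) - f^L_{y'}(x) = \sum_l \alpha_l\big(G_{y,t_l} - G_{y',t_l}\big)$. Matching these two expressions as functions of the attention weights $\alpha_l$ and over a rich enough family of contexts — in particular contexts where I can isolate a single token $t_l$ in the support and vary which token it is — forces $G_{y,t} - G_{y',t}$ to be an affine function of $D_H(t,y) - D_H(t,y')$ with the \emph{same} slope for all $y,y'$, and then a symmetry/consistency argument across the roles of $t,y$ yields $G_{t,t'} = -a\,D_H(t,t') + b$ for a single $a>0$ and $b$. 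The sign $a>0$ comes from the fact that closer tokens (smaller Hamming distance) must get larger posterior weight, hence larger inner product.

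The main obstacle I anticipate is the identifiability step: the softmax only determines logits up to an additive constant per context (a shift of all $f^L_t(x)$ by the same scalar), and the attention weights $\alpha_l$ are themselves functions of $x$ that I do not get to choose freely, so I cannot naively "read off" $G$ by plugging in delta-like contexts. The clean way around this is to use contexts in which the last token is forced (by the generative process it is drawn from $\pi(\cdot\mid z^*)$, so it is never $z^*$ itself) and to consider two contexts that differ in exactly one coordinate of one token, so that the difference of log-posteriors is a single Hamming-distance increment while the attention weight on the changed position is pinned down by symmetry; comparing the induced logit-difference equations across all such local moves, and using that the relation must hold simultaneously for all contexts, over-determines the Gram entries and collapses them to the claimed affine form. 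I would also need to check that the forced orthonormality of the rows of $W_E$ (inherited from the setting of \cref{lemma:reamp} / the construction) is actually consistent with a rank-deficient Gram matrix of the form $-aD_H + b\mathbf{1}\mathbf{1}^T$ — i.e. that such a $G$ is PSD with the right rank, which is a standard fact about the Hamming-distance kernel on the hypercube (it is a conditionally negative definite kernel, so $\exp$-type and affine-shifted versions are PSD), and this is what guarantees an embedding realizing it exists.
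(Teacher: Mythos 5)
Your high-level strategy is the right one and matches the paper's: use Bayes to compute the true posterior ratio $p(y_1\mid x)/p(y_2\mid x)$, note that the uniform prior and the (token-independent, by hypercube symmetry) normalizer of $\pi$ cancel, and equate the resulting log-ratio with the logit difference $(W_E(y_1)-W_E(y_2))^T h(x)$ produced by the weight-tied model with $W_V=I$. You also correctly identify the central obstacle: the attention weights $\alpha_l$ are themselves functions of $x$, so you cannot plug in arbitrary ``delta'' contexts and simply read off $G_{t,t'}$. However, the resolution you sketch (contexts differing in one coordinate, with the attention weight on the changed position ``pinned down by symmetry'') is not worked out, and the symmetry claim is dubious — attention weights at every position generically change when you change any one token. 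This is a genuine gap.

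The paper's resolution is much simpler and you should look for it: take a context in which all $L$ tokens are the same token $t_i$ (this has positive probability under $\pi$ as long as $t_i\neq y$). Then $h(x)$ is a convex combination of identical vectors and therefore equals $W_E(t_i)$ exactly, no matter what the attention weights are. With this choice, $p(x\mid y)=p(t_i\mid y)^L\propto\exp(-L\,D_H(t_i,y)/\beta)$, and matching gives
\[
\tfrac{L}{\beta}\bigl(D_H(t_i,y_2)-D_H(t_i,y_1)\bigr)=\bigl(W_E(y_1)-W_E(y_2)\bigr)^T W_E(t_i),
\]
i.e.\ $\Psi(t_i,y_1)=\Psi(t_i,y_2)$ where $\Psi(t,t')=\tfrac{L}{\beta}D_H(t,t')-W_E(t)^T W_E(t')$. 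The paper then chains $\Psi(y_3,y_1)=\Psi(y_3,y_2)=\Psi(y_4,y_2)=\Psi(y_4,y_5)$ over five distinct labels (valid since $m\ge 3$ gives at least eight tokens) to conclude $\Psi$ is a single constant off the diagonal, which is exactly your affine form with $a=L/\beta>0$. This chaining step is the concrete ``consistency argument'' your sketch gestures at without specifying.

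Two smaller corrections. First, you invoke ``the forced orthonormality of the rows of $W_E$'' as if it were part of this theorem's hypotheses; it is not. Orthonormality is assumed in \cref{lemma:reamp} and \cref{thm:idealmodel-informal}, but \cref{thm:hamming} is a different statement about \emph{any} transformer with $W_V=I$ that realizes the exact posterior, and its very conclusion (a Gram matrix varying linearly in $D_H$) is incompatible with orthonormal embeddings. Second, your concern about whether $-aD_H+b\mathbf{1}\mathbf{1}^T$ is PSD and realizable is not something the theorem asks you to verify: the theorem is a conditional ``if the loss is minimized, then the Gram matrix has this form,'' so realizability is assumed rather than to be established.
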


\subsection{Embedding training and geometry}
\label{sec:geometry}

The hypothetical model in \cref{sec:hypo-am} requires embeddings to form an orthonormal basis. In the overparameterization regime where the embedding dimension $d$ is larger than the number of tokens $\vocab$, this can be approximately achieved by Gaussian initialization. However, in practice, the embedding dimension is typically smaller than the vocabulary size, in which case it is impossible for the embeddings to constitute such a basis. Empirically, in \cref{sec:expemb}, we observe that with overparameterization ($d > \vocab$), embeddings can be frozen at their Gaussian initialization, whereas in the underparameterized regime, embedding training is required to achieve better recall accuracy.

This raises the question: What kind of embedding geometry is learned in the  underparameterized regime? Experiments reveal a close relationship between the inner product of embeddings for two tokens and the Hamming distance of these tokens (see \cref{fig:selected-embedding-structure-8} and \cref{fig:hamming} in \cref{appen:emb}). Approximately, we have the following relationship:
\begin{equation}
\label{eqn:emb-linear}
    \langle W_E(t), W_E(t') \rangle = 
    \begin{cases}
        b_0 \;\;\; &t=t'\\
        - a D_H(t, t') + b \;\;\; &t\neq t'\\
    \end{cases}
\end{equation}
for any two tokens $t$ and $t'$ where $b_0 > b$ and $a > 0$. One can view this as a combination of the embedding geometry under Gaussian initialization and the geometry when $W_V$ is the identity matrix (\cref{thm:hamming}). 
Importantly, this structure demonstrates that trained embeddings inherently capture similarity within the latent space.
Theoretically, this embedding structure \cref{eqn:emb-linear} can also lead to low error rate under specific conditions on $b_0, b$ and $a$, which is articulated by the following theorem.

\begin{theorem}[Informal]
\label{thm:modifiedmodel-informal}
Following the same setup as in \cref{thm:idealmodel-informal}, but embeddings obey \cref{eqn:emb-linear}, then under certain conditions on $a, b$ and if $b_0$ and context length $L$ are sufficiently large, the error rate can be arbitrarily small, i.e. $R_{\mathcal{D}^L}(f^L)<\epsilon$ for any $0 < \epsilon < 1$.
\end{theorem}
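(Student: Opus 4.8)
The plan is to keep the architecture of \cref{thm:idealmodel-informal} verbatim --- set $W_K=W_Q=0$ so the attention head is uniform, and take $W_V$ to be the construction in \cref{eqn:wv-constrution} --- but evaluate it on embeddings obeying \cref{eqn:emb-linear} instead of an orthonormal basis. Two structural facts carry over unchanged. First, because $\omega=1$ all $L$ context tokens (including the $L$-th) are i.i.d.\ samples from $\pi(\,\cdot\mid z^{*})$, so if $q^{(k)}$ denotes the probability that such a sample lies at Hamming distance $k$ from $z^{*}$, then $q^{(0)}=0$ and $q^{(1)}>q^{(2)}>\dots>q^{(m)}>0$. Second, uniform attention makes the last-position hidden state the empirical average $h(x)=\tfrac1L\sum_{l=1}^{L}W_E(t_l)$, with expectation $\bar h=\sum_{s}q_s W_E(s)$ where $q_s$ is a function of $D_H(s,z^{*})$ only.

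I would then compute the expected logit difference between the true token $y=\tok(z^{*})$ and any other token $t$. Expanding $\bar f_t:=\langle W_E(t),W_V\bar h\rangle$ with \cref{eqn:wv-constrution} and \cref{eqn:emb-linear} --- every Gram entry being $b_0$ on the diagonal and $-aD_H+b$ off it --- and using that $\sum_s D_H(t,s)$ is the same for every $t$ (vertex-transitivity of the hypercube), the terms that do not depend on $t$ cancel in the difference and one is left with
\begin{equation*}
\bar f_{y}-\bar f_t \;=\; (b_0-b)^2\big(\mathrm{cp}_{y}-\mathrm{cp}_t\big)\;-\;a(b_0-b)\big[(E_{y}+\Phi_{y})-(E_t+\Phi_t)\big]\;+\;a^2\big(\Psi_{y}-\Psi_t\big),
\end{equation*}
where $\mathrm{cp}_t:=\sum_{s:\,D_H(s,t)=1}q_s$ is the ``neighbourhood mass'' of $t$ and $E_t,\Phi_t,\Psi_t$ are determined by $D_H(t,z^{*})$, $m$ and $\beta$. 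The combinatorial heart of the argument is that $\mathrm{cp}_{y}=m\,q^{(1)}$ is strictly the largest neighbourhood mass: a token $t$ at Hamming distance $k\ge1$ from $z^{*}$ has $k$ of its $m$ Hamming-$1$ neighbours at distance $k-1$ and the remaining $m-k$ at distance $k+1$, so $\mathrm{cp}_t=k\,q^{(k-1)}+(m-k)\,q^{(k+1)}$ (with $q^{(0)}=q^{(m+1)}=0$); since $q^{(k-1)}\le q^{(1)}$ and $q^{(k+1)}\le q^{(1)}$ and, when $m\ge3$, these two equalities never hold simultaneously, $\mathrm{cp}_t<m\,q^{(1)}$ with a gap $\gamma_0>0$ depending only on $m,\beta$. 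Hence the leading term is at least $(b_0-b)^2\gamma_0$, which for $b_0$ large relative to $a,b,m,\beta$ dominates the $O\!\big(a(b_0-b)\big)+O(a^2)$ corrections; thus $\bar f_{y}>\bar f_t$ for every $t\neq y$ with some margin $\gamma>0$ that, by vertex-transitivity, is the same whichever $z^{*}$ is drawn.

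The last step is concentration. Since $f^L_t(x)=\langle W_E(t),W_V h(x)\rangle$ is linear in the average $h(x)$ of $L$ i.i.d.\ bounded vectors, a vector Hoeffding (Azuma) bound together with a union bound over the $\vocab$ tokens gives $\max_t|f^L_t(x)-\bar f_t|<\gamma/2$ except with probability at most $2\,\vocab\,d\,e^{-cL\gamma^2}$, with $c$ depending only on the now-fixed embedding norms; on that event $\argmax_t f^L(x)=y$, so $R_{\mathcal{D}^L}(f^L)\le 2\,\vocab\,d\,e^{-cL\gamma^2}$, which is $<\epsilon$ once $L$ is large. The quantifier order matches the statement: given $\epsilon$, first fix $a>0$ and $b$ --- the ``conditions on $a,b$'' being essentially $a>0$ together with positive semidefiniteness of the Gram matrix in \cref{eqn:emb-linear}, both compatible with $b_0$ large --- then take $b_0$ large enough for the deterministic margin, then take $L$ large enough for the concentration bound.

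The step I expect to be the main obstacle is the expansion in the second paragraph: because $W_V$ in \cref{eqn:wv-constrution} is itself assembled from $W_E$, the factors $(b_0-b)$ compound, and one must verify that the \emph{only} order-$(b_0-b)^2$ contribution to $\bar f_t$ is the neighbourhood-mass term $(b_0-b)^2\mathrm{cp}_t$ --- in particular that the cross terms created by the $-aD_H$ part of \cref{eqn:emb-linear}, which appears simultaneously inside $W_V$ and in the outer inner product, are only of order $a(b_0-b)$ and so cannot generate a second quadratic-in-$b_0$ term competing with $\mathrm{cp}_{y}$. Carrying out this bookkeeping cleanly, and isolating the combinatorial inequality $\mathrm{cp}_{y}>\mathrm{cp}_t$ with an explicit gap $\gamma_0(m,\beta)$, is where almost all the effort lies; the concentration argument and the chaining of quantifiers at the end are routine.
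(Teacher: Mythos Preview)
Your plan is correct, and it gives a genuinely different argument from the paper's own proof of the formal version (\cref{thm:modifiedmodel}). Both proofs keep $W_K=W_Q=0$ and $W_V$ as in \cref{eqn:wv-constrution}, but the way the logit gap is controlled differs.

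The paper writes $f^L_t(x)=\sum_k v_k(t)\,\phi_k(x)$ with $v_k(t)=\langle W_E(t),W_E(k)\rangle$ and $\phi_k(x)=\sum_{j\in\tNeOne(k)}\langle W_E(j),h(x)\rangle$, then exploits an XOR-pairing symmetry: for every $k$ there is a $\tilde k$ with $v_k(y)-v_k(\tilde y)=-(v_{\tilde k}(y)-v_{\tilde k}(\tilde y))$, so the logit difference collapses to a single ``diagonal'' term $(b_0-b+aD_H(y,\tilde y))(\phi_y-\phi_{\tilde y})$ plus a sum of paired corrections each carrying a factor $a$. Bounding $\phi_{k_1}-\phi_{k_2}$ via \cref{lemma:sum} then yields explicit sufficient conditions, including an \emph{upper bound} on $a$ and a matching lower bound on $b_0$.

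Your route instead expands both Gram factors simultaneously as $(b_0-b)\mathbf 1[\cdot]+(-aD_H+b)$ and reads off the coefficient of $(b_0-b)^2$ directly as the neighbourhood mass $\mathrm{cp}_t$; the combinatorial inequality $\mathrm{cp}_y>\mathrm{cp}_t$ is then immediate from the Hamming-sphere decomposition you wrote down. This is cleaner conceptually and, notably, does not need $a$ to be small --- any fixed $a>0$ is absorbed once $b_0$ is large enough --- whereas the paper's pairing argument produces a quantitative threshold $a<2e^{1/\beta}/((\vocab-2)m^2)$. The price you pay is that your constants are implicit. Your worry about the bookkeeping is well-placed but resolves favourably: the only $(b_0-b)^2$ term is indeed $\mathrm{cp}_t$; the $b$- and $b^2$-pieces at lower orders are $t$-independent (the vertex-transitivity cancellation you invoke handles the $ab$ cross-term as well), so every surviving $t$-dependent correction does carry at least one factor of $a$. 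On concentration, the paper controls $\max_i|\alpha_i-p(i\mid y)|$ via a multinomial large-deviation bound and then plugs into a deterministic margin, while you apply Hoeffding directly to each scalar logit; both are fine, though your extra factor of $d$ is unnecessary since each $f^L_t$ is already a scalar average.
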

The formal statement of the theorem and its proof is given in \cref{appendix:proofs} (\cref{thm:modifiedmodel}).

Notably, this embedding geometry also implies a low-rank structure. Let's first consider the special case when $b_0 = b$. In other words, the inner product between embeddings and their corresponding Hamming distance varies linearly.

\begin{restatable}{lemma}{embrank}
\label{lemma:emb-rank}
If embeddings follow \cref{eqn:emb-linear} and $b = b_0$ and $\tNe(t) = \vocab \setminus \{ t \}$, then $\text{rank}(W_E) \leq m+2$.
\end{restatable}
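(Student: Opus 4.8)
The plan is to reduce the claim to a statement about the Gram matrix, and then exploit the fact that Hamming distance is a bilinear form in the bit representations of tokens, up to rank‑one ``bias'' corrections. Since $\mathrm{rank}(W_E) = \mathrm{rank}(W_E^T W_E)$ over $\mathbb{R}$, it suffices to bound the rank of the Gram matrix $G \in \mathbb{R}^{\vocab \times \vocab}$ with entries $G_{t,t'} = \langle W_E(t), W_E(t')\rangle$. Under \eqref{eqn:emb-linear} with $b_0 = b$ the two branches of the formula collapse into one: $G_{t,t'} = -a\, D_H(t,t') + b$ holds for \emph{all} pairs $t,t'$ (the case $t=t'$ gives $D_H=0$, consistent with $b_0=b$).

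Next I would rewrite $D_H$ bilinearly. Identifying tokens with their latent vectors, write $u = \tok^{-1}(t)$ and $v = \tok^{-1}(t')$ in $\{0,1\}^m$; since $u_i,v_i\in\{0,1\}$ we have $D_H(t,t') = \sum_{i=1}^m (u_i - v_i)^2 = \|u\|_1 + \|v\|_1 - 2\langle u,v\rangle$. Collecting the columns $\tok^{-1}(t)$ into a matrix $B \in \{0,1\}^{m\times\vocab}$, letting $s \in \mathbb{R}^{\vocab}$ be the vector of Hamming weights $s_t = \|\tok^{-1}(t)\|_1$, and letting $\mathbf{1}$ be the all‑ones vector in $\mathbb{R}^{\vocab}$, the formula for $G$ becomes the matrix identity
\begin{equation*}
G \;=\; 2a\,B^T B \;-\; a\,s\mathbf{1}^T \;-\; a\,\mathbf{1}s^T \;+\; b\,\mathbf{1}\mathbf{1}^T
\;=\; 2a\,B^T B \;+\; \begin{bmatrix}\mathbf{1} & s\end{bmatrix}\begin{bmatrix} b & -a \\ -a & 0 \end{bmatrix}\begin{bmatrix}\mathbf{1} & s\end{bmatrix}^{T}.
\end{equation*}

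To finish, I would invoke subadditivity of rank: the first summand has rank at most $m$ (it factors through $B$, which has $m$ rows), and the second summand has rank at most $2$ (it factors through the $\vocab\times 2$ matrix $[\,\mathbf{1}\ \ s\,]$), so $\mathrm{rank}(W_E) = \mathrm{rank}(G) \le m+2$. There is no serious obstacle here; the only point needing a little care is the bookkeeping that forces the three affine correction terms $s\mathbf{1}^T$, $\mathbf{1}s^T$, $\mathbf{1}\mathbf{1}^T$ to jointly contribute only $2$ (rather than $3$) to the rank, which is exactly what the displayed bilinear‑form factorization encodes. (One can in fact sharpen the bound to $m+1$ by noting that $s = B^T\mathbf{1}_m$ lies in the column space of $B^T$, but $m+2$ already gives the claim.)
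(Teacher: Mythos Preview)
Your proof is correct. Both you and the paper reduce to bounding the rank of the Gram matrix $G = -aD_H + b\,\mathbf{1}\mathbf{1}^T$, but you take a different route to control $\mathrm{rank}(D_H)$. The paper argues directly that every row of $D_H$ is an affine combination of the $m+1$ rows indexed by the ``basis'' tokens $e_0,e_1,\dots,e_m$ (all-zeros and the $m$ unit bit-vectors), giving $\mathrm{rank}(D_H)\le m+1$ and hence $\mathrm{rank}(G)\le m+2$ after adding the rank-one term. You instead use the bilinear identity $D_H(u,v)=\|u\|_1+\|v\|_1-2\langle u,v\rangle$ to write $G$ explicitly as $2aB^TB$ plus a matrix that factors through the $\vocab\times 2$ matrix $[\,\mathbf{1}\ s\,]$. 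Your decomposition is a bit more explicit (an actual low-rank factorization rather than a spanning-set argument) and, as you note, immediately yields the sharper bound $m+1$ once one observes $s=B^T\mathbf{1}_m$; the paper's argument also implicitly contains this sharpening but does not state it.
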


When $b_0 > b$, the embedding matrix will not be strictly low rank. However, it can still exhibit approximate low-rank behavior, characterized by an eigengap between the top and bottom singular values. This is verified empirically (see Figure~\ref{fig:7_32}-\ref{fig:8_64} in \cref{appen:spectrum}).

\subsection{The role of attention selection}
\label{sec:atten}
As of now, attention does not play a significant role in the analysis. But perhaps unsurprisingly, the attention mechanism is useful in selecting relevant information. To see this, let's consider a specific setting where for any latent vector $z^{*}$, $\latNe(z^{*}) = \{ z: z^{*}_1 = z_1 \} \setminus \{ z^{*} \}$.

Essentially, latent vectors are partitioned into two clusters based on the value of the first latent variable, and the informative conditional distribution $\pi$ only samples latent vectors that are in the same cluster as the output latent vector. Empirically, when trained under this setting, the attention mechanism will pay more attention to tokens within the same cluster (\cref{sec:expatten}). This implies that the self-attention layer can mitigate noise and concentrate on the informative conditional distribution $\pi$.

To understand this more intuitively, we will study the gradient of unnormalized attention scores. In particular, the unnormalized attention score is defined as:
\begin{equation*}
    u_{t,t'} = (W_K W_E(t) )^T(W_Q W_E(t')) / \sqrt{d_a}.
\end{equation*}

\begin{restatable}{lemma}{attengrad}
\label{lemma:attengrad}
Suppose the data generating process follows \cref{sec:setup} and $\latNe(z^{*}) = \{ z: z^{*}_1 = z_1 \} \setminus \{ z^{*} \}$. Given the last token in the sequence $t_{L}$, then
\begin{equation*}
    \nabla_{u_{t, t_{L}}} \ell(f^L) = \nabla \ell(f^L)^T (W_E)^T W^{V} (  \alpha_t \hat{p}_{t}  W_E(t)  - \hat{p}_t \sum_{l=1}^L \hat{p}_{t_l} W_E(t_l) )
\end{equation*}
where for token $t$, $\alpha_t = \sum_{l=1}^L \mathbf{1}[t_l = t]$ and $\hat{p}_t$ is the normalized attention score for token $t$.
\end{restatable}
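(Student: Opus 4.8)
The plan is to obtain the identity by a direct chain-rule (i.e.\ backpropagation-through-attention) computation for a fixed sample $(x,y)$ with $x = (t_1,\dots,t_L)$; the specific neighborhood structure $\latNe(z^{*}) = \{z : z^{*}_1 = z_1\}\setminus\{z^{*}\}$ does not enter the calculation itself and only fixes the regime where the lemma is later invoked. First I would rewrite the network at the last position in factored form, $f^L(x) = (W_E)^{T} W_V\, h(x)$ with $h(x) = \sum_{l=1}^{L}\hat{p}_{t_l}\, W_E(t_l)$, where $\hat{p}_{t_l} = \exp(u_{t_l,t_L}) / \sum_{k=1}^{L}\exp(u_{t_k,t_L})$ is exactly the column-$L$ softmax weight placed on position $l$; this is just unpacking the definitions of $\textrm{attn}(\cdot)$ and of $u_{t,t'}$. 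Since neither $W_E$ nor $W_V$ depends on the attention logits, $\partial f^L / \partial u_{t,t_L} = (W_E)^{T} W_V\, (\partial h/\partial u_{t,t_L})$, and the chain rule gives $\nabla_{u_{t,t_L}}\ell(f^L) = (\nabla\ell(f^L))^{T} (W_E)^{T} W_V\, (\partial h / \partial u_{t,t_L})$, where $\nabla\ell(f^L)$ is the gradient of the cross-entropy loss with respect to the logit vector $f^L(x)$. The whole statement then reduces to computing $\partial h/\partial u_{t,t_L}$.

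The core step is this softmax derivative. The one delicate point — and the place I would be most careful — is that the scalar $u_{t,t_L}$ is the pre-softmax logit $u_{t_k,t_L}$ for \emph{every} position $k$ with $t_k = t$, i.e.\ for $\alpha_t = \sum_{l}\mathbf{1}[t_l = t]$ positions, so the derivative with respect to $u_{t,t_L}$ is the sum of the coordinatewise derivatives over all of those positions. I would then apply the standard softmax Jacobian, $\partial\hat{p}_{t_l}/\partial u_{t_k,t_L} = \hat{p}_{t_l}(\mathbf{1}[l=k] - \hat{p}_{t_k})$, sum over the $\alpha_t$ positions $k$ with $t_k = t$, and split into a ``diagonal'' contribution ($l = k$), which collects into a multiple of $W_E(t)$, and an ``off-diagonal'' contribution, which collects into a multiple of $h(x) = \sum_l \hat{p}_{t_l} W_E(t_l)$. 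This yields the displayed expression for $\partial h/\partial u_{t,t_L}$, namely $\alpha_t\hat{p}_t W_E(t)$ minus the corresponding multiple of $h(x)$; substituting it into the chain-rule expression from the first step gives the claim.

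I expect the only real obstacle to be bookkeeping — correctly tracking the multiplicity $\alpha_t$ that comes from repeated tokens in the context, and keeping the signs in the softmax Jacobian straight — with everything else being routine linear algebra. A useful consistency check along the way is that $\sum_{t}\nabla_{u_{t,t_L}}\ell(f^L) = 0$: this must hold because softmax is invariant under adding a common constant to all of its logits, and it follows from $\sum_t \alpha_t\hat{p}_t W_E(t) = h(x)$ together with $\sum_t \alpha_t\hat{p}_t = 1$.
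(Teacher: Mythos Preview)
Your proposal is correct and is essentially the paper's own argument: write $f^L(x)=(W_E)^T W_V\sum_l \hat p_{t_l}W_E(t_l)$, apply the chain rule, and differentiate the softmax weights directly. One remark on the bookkeeping you rightly flag as the delicate point: carried through, the off-diagonal piece comes out as $-\alpha_t\hat p_t\sum_l\hat p_{t_l}W_E(t_l)$ rather than $-\hat p_t\sum_l\hat p_{t_l}W_E(t_l)$ as displayed---your consistency check detects this, since it is $\sum_t\alpha_t\hat p_t=1$ that holds, not $\sum_t\hat p_t=1$---and the paper's proof makes the matching slip, writing $\partial\hat p_{t_l}/\partial u_{t,t_L}=\hat p_{t_l}(1-\hat p_{t_l})$ for $t_l=t$ and $-\hat p_{t_l}\hat p_t$ otherwise, where the correct expressions carry an extra factor of $\alpha_t$ on the $\hat p_t$.
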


Typically, $\alpha_t$ is larger when token $t$ and $t_L$ belong to the same cluster because tokens within the same cluster tend to co-occur frequently. 
As a result, the gradient contribution to the unnormalized attention score is usually larger for tokens within the same cluster.

\subsection{Context hijacking and the misclassification of memory recall}
\label{sec:misclass}

In light of the theoretical results on latent concept association, a natural question arises: How do these results connect to context hijacking in LLMs? In essence, for the latent concept association problem, the differentiation of output tokens is achieved by distinguishing between the various conditional distributions $p(x|y)$.  Thus, adding or changing tokens in the context $x$ so that it resembles a different conditional distribution can result in misclassification. In \cref{appen:hijack-latent}, we present experiments showing that mixing different contexts can cause transformers to misclassify. This partially explains context hijacking in LLMs (\cref{sec:context-hijacking}). On the other hand, it is well-known that the error rate is related to the KL divergence between conditional distributions of contexts \cite{cover1999elements}. The closer the distributions are, the easier it is for the model to misclassify. Here, longer contexts, primarily composed of i.i.d samples, suggest larger divergences, thus higher memory recall rate. This is theoretically implied by \cref{thm:idealmodel-informal} and \cref{thm:modifiedmodel-informal} and empirically verified in \cref{sec:explength}. Such result is also related to reverse context hijacking (\cref{sec: additional_hijacking_expts}) where prepending sentences including true target words can improve fact recall rate.

\section{Experiments}
\label{sec:exp}

\begin{figure}[t]
    \centering
    \begin{subfigure}{0.3\textwidth}
        \centering
        \includegraphics[width=\linewidth]{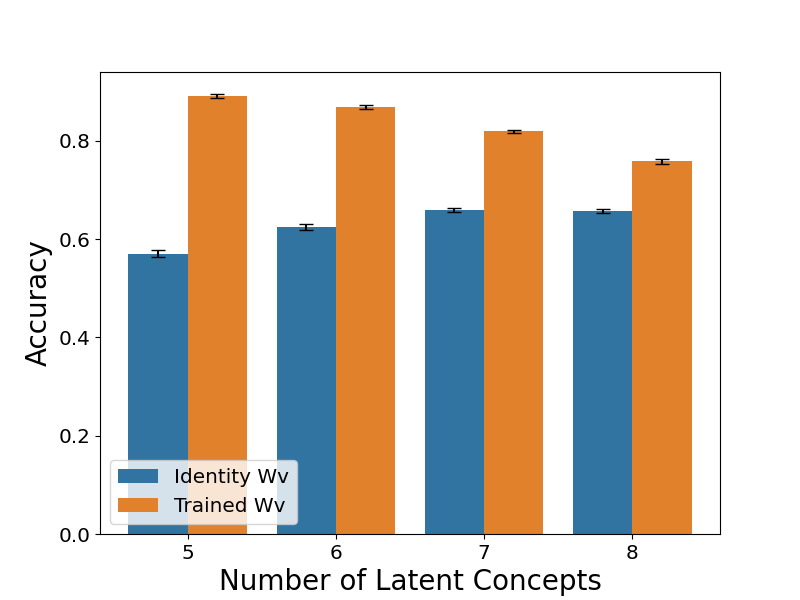}
        \caption{Value matrix training}
        \label{fig:selected-value-64}
    \end{subfigure}
    \hfill
    \begin{subfigure}{0.3\textwidth}
        \centering
        \includegraphics[width=\linewidth]{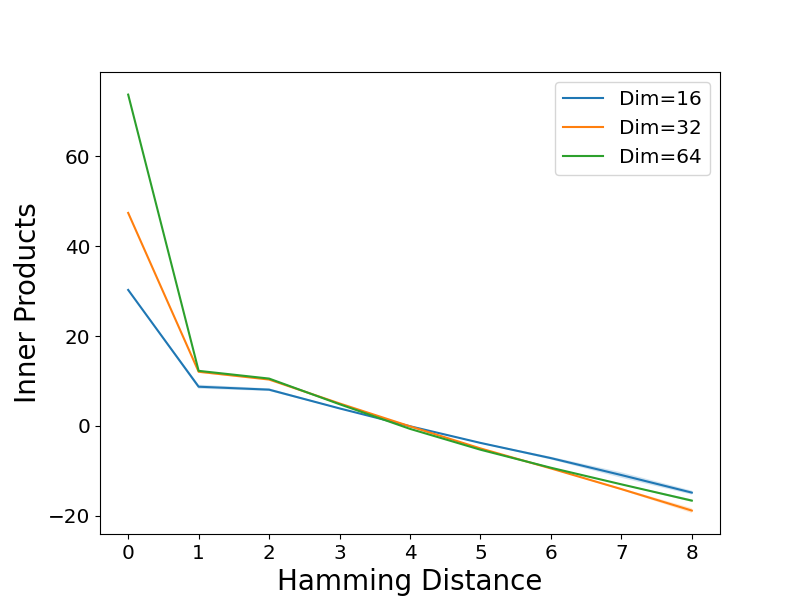}
        \caption{Embedding structure}
        \label{fig:selected-embedding-structure-8}
    \end{subfigure}
    \hfill
    \begin{subfigure}{0.3\textwidth}
        \centering
        \includegraphics[width=\linewidth]{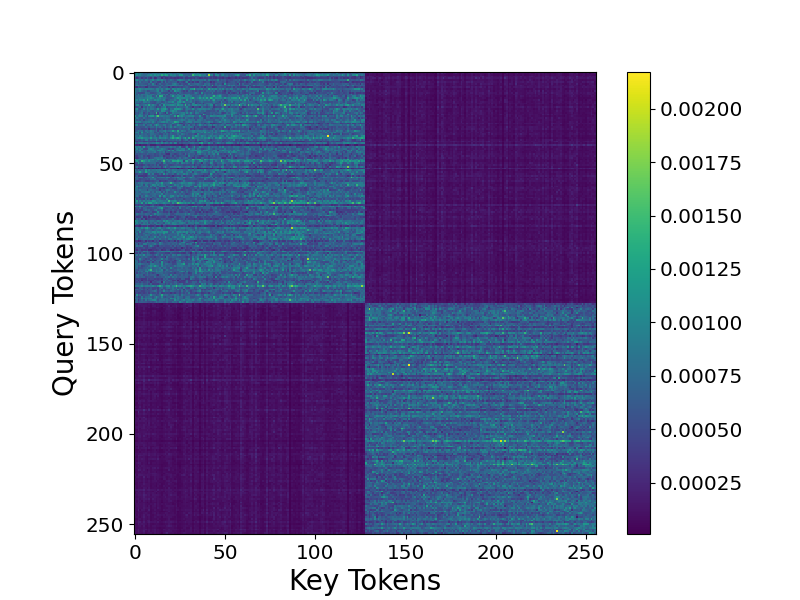}
        \caption{Attention Pattern}
        \label{fig:selected-attention-8}
    \end{subfigure}
    \caption{Key components of the single-layer transformer working together on the latent concept association problem. (a) Fixing the value matrix $W_V$ as the identity matrix results in lower accuracy compared to training $W_V$. The figure reports average accuracy for both fixed and trained $W_V$ with $L=64$. (b) When training in the underparameterized regime, the embedding structure is approximated by \cref{eqn:emb-linear}. The graph displays the average inner product between embeddings of two tokens against the corresponding Hamming distance between these tokens when $m=8$. (c) The self-attention layer can select tokens within the same cluster. The figure shows average attention score heat map with $m=8$ and the cluster structure from \cref{sec:atten}.}
    \label{fig:threefigures}
\end{figure}

The main implications of the theoretical results in the previous section are:
\begin{enumerate}
    \item The value matrix is important and has associative memory structure as in \cref{eqn:wv-constrution}. 
    \item Training embeddings is crucial in the underparameterized regime, where embeddings exhibit certain geometric structures. 
    \item Attention mechanism is used to select the most relevant tokens. 
\end{enumerate}
To evaluate these claims, we conduct several experiments on synthetic datasets. Additional experimental details and results can be found in \cref{append:exp-latent}.

\subsection{On the value matrix $W_V$}
\label{sec:expwv}
\looseness=-1
In this section, we study the necessity of the value matrix $W_V$ and its structure. First, we conduct experiments to compare the effects of training versus freezing $W_V$ as the identity matrix, with the context lengths $L$ set to $64$ and $128$. \cref{fig:selected-value-64} and \cref{fig:identity} show that when the context length is small, freezing $W_V$ can lead to a significant decline in accuracy. This is inline with \cref{lemma:reamp} and validates it in a general setting, implying the significance of the value matrix in maintaining a high memory recall rate.

Next, we investigate the degree of alignment between the trained value matrix $W_V$ and the construction in \cref{eqn:wv-constrution}. The first set of experiments examines the similarity in functionality between the two matrices. We replace value matrices in trained transformers with the constructed ones like in \cref{eqn:wv-constrution} and then report accuracy with the new value matrix. As a baseline, we also consider randomly constructed value matrix, where the outer product pairs are chosen randomly (detailed construction can be found in \cref{append:expwv}). \cref{fig:dim_wv_replace} indicates that the accuracy does not significantly decrease when the value matrix is replaced with the constructed ones. Furthermore, not only are the constructed value matrix and the trained value matrix functionally alike, but they also share similar low-rank approximations. We use singular value decomposition to get the best low rank approximations of various value matrices where the rank is set to be the same as the number of latent variables ($m$). We then compute smallest principal angles between low-rank approximations of trained value matrices and those of constructed, randomly constructed, and Gaussian-initialized value matrices. \cref{{fig:dim_wv_replace_angle}} shows that the constructed ones have, on average, smallest principal angles with the trained ones.

\subsection{On the embeddings}
\label{sec:expemb}
In this section, we explore the significance of embedding training in the underparamerized regime and embedding structures. 
We conduct experiments to compare the effects of training versus freezing embeddings with different embedding dimensions. The learning rate is selected as the best option from $\{0.01, 0.001 \}$ depending on the dimensions. \cref{fig:dim_acc} clearly shows that when the dimension is smaller than the vocabulary size $(d < \vocab)$, embedding training is required. It is not necessary in the overparameterized regime $(d > \vocab)$, partially confirming \cref{thm:idealmodel-informal} because if embeddings are initialized from a high-dimensional multi-variate Gaussian, they are approximately orthogonal to each other and have the same norms. 

The next question is what kind of embedding structures are formed for trained transformers in the underparamerized regime. From \cref{fig:selected-embedding-structure-8} and \cref{fig:hamming}, it is evident that the relationship between the average inner product of embeddings for two tokens and their corresponding Hamming distance roughly aligns with \cref{eqn:emb-linear}. Perhaps surprisingly, if we plot the same graph for trained transformers with a fixed identity value matrix, the relationship is mostly linear as shown in \cref{fig:no_wv_hamming}, confirming our theory (\cref{thm:hamming}).

As suggested in \cref{sec:geometry}, such embedding geometry \cref{eqn:emb-linear} can lead to low rank structures. We verify this claim by studying the spectrum of the embedding matrix $W_E$. As illustrated in \cref{appen:spectrum}, Figure~\ref{fig:7_32}-\ref{fig:8_64} demonstrate that there are eigengaps between top and bottom singular values, suggesting low-rank structures.

\subsection{On the attention selection mechanism}
\label{sec:expatten}
In this section, we examine the role of attention pattern by considering a special class of latent concept association model as defined in \cref{sec:atten}. \cref{fig:selected-attention-8} and \cref{fig:cluster_1} clearly show that the self-attention select tokens in the same clusters. This suggests that attention can filter out noise and focus on the informative conditional distribution $\pi$. We extend experiments to consider cluster structures that depend on the first two latent variables (detailed construction can be found in \cref{appen:expatten}) and \cref{fig:cluster_2} shows attention pattern as expected.

\section{Conclusions}
In this work, we first presented the phenomenon of context hijacking in LLMs, which suggested that fact retrieval is not robust against variations of contexts. This indicates that LLMs might function like associative memory where tokens in contexts are clues to guide memory retrieval. To investigate this perspective further, we devised a synthetic task called latent concept association and examined theoretically and empirically how single-layer transformers are trained to solve this task. 
These results provide further insights into the inner workings of transformers and LLMs, and can hopefully stimulate further work into interpreting and understanding the mechanisms by which LLMs predict tokens and recall facts.

\section{Limitations} 
The context hijacking experiments were only conducted on open-source models and not on commercial models like GPT-4.
Nevertheless, even in the official GPT-4 technical report~\cite{achiam2023gpt}, there is an example similar to context hijacking (the Elvis Perkins example). In that example, the prompt is “Son of an actor, this American guitarist and rock singer released many songs and albums and toured with his band. His name is "Elvis" what?”. GPT-4 answers with Presley, even though the answer is Perkins (Elvis Presley is not the son of an actor). GPT-4 can be viewed as distracted by all the information related to music and answers Presley. In fact, it is known that LLMs can be easily distracted by contexts in use cases other than fact retrieval such as problem-solving \citep{shi2023large}. So we reasonably suspect that similar behavior still exists in larger models but is harder to exploit. On the other hand, the theoretical section only focuses on single-layer transformer network. While single-layer networks already demonstrate some interesting phenomena including low-rank structures, the functionality of multi-layer transformers is much different compared to single-layer transformers with the notable emergence of induction head \cite{elhage2021mathematical}.

\paragraph{Acknowledgments}

We thank Victor Veitch for insightful discussions that helped shape the initial idea of this work. We acknowledge the support of AFRL and DARPA via FA8750-23-2-1015, ONR via N00014-23-1-2368, NSF via IIS-1909816, IIS-1955532, IIS-1956330, and NIH R01GM140467. We also acknowledge the support of the Robert H. Topel Faculty Research Fund at the University of Chicago Booth School of Business.

\printbibliography

%%%%%%%%%%%%%%%%%%%%%%%%%%%%%%%%%%%%%%%%%%%%%%%%%%%%%%%%%%%%
\newpage
\appendix

\section{Additional Theoretical Results and Proofs}
\label{appendix:proofs}
\counterwithin{figure}{section}
\renewcommand{\thefigure}{\thesection.\arabic{figure}}

\subsection{Proofs for \cref{sec:hypo-am}}

\cref{thm:idealmodel-informal} can be stated more formally as follows:

\begin{theorem}
\label{thm:idealmodel}
Suppose the data generating process follows \cref{sec:setup} where $m \geq 3$, $\omega = 1$, and $\tNe(t) = \vocab \setminus \{ t \}$.
Assume there exists a single layer transformer given by \cref{eqn:transformer} such that a) $W_K = 0$ and $W_Q=0$, b) Each row of $W_E$ is orthogonal to each other and normalized, and c) $W_V$ is given by
\begin{equation*}
    W_V = \sum_{i \in [\vocab]} W_E(i) (\sum_{j \in \tNeOne(i)}W_E(j)^T).
\end{equation*}
Then if $L > \max \{\frac{100m^2 \log(3/\epsilon)}{(\exp(-\frac{1}{\beta}) - \exp(-\frac{2}{\beta}))^2}, \frac{80m^2|\tNe(y)|}{(\exp(-\frac{1}{\beta}) - \exp(-\frac{2}{\beta}))^2} \}$ for any $y$, then
\begin{equation*}
     R_{\mathcal{D}^L}(f^L) \leq \epsilon,
\end{equation*}
where $0 < \epsilon < 1$.
\end{theorem}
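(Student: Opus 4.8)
## Proof Plan

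\textbf{Setup and overall strategy.} The plan is to analyze the transformer $f^L$ under the stated hypotheses directly, reducing everything to a concentration argument. Since $W_K = 0$ and $W_Q = 0$, the softmax argument $\sigma((W_K U)^T(W_Q U)/\sqrt{d_a})$ is the softmax of the all-zeros matrix, hence uniform: each column is $(1/L, \ldots, 1/L)^T$. Therefore $h(x) = \textrm{attn}(W_E \chi(x))_{:L} = \frac{1}{L}\sum_{l=1}^L W_E(t_l)$, the average of the embeddings of the context tokens. The logit for output token $t$ is then $f^L_t(x) = W_E(t)^T W_V h(x)$. Using the construction of $W_V$ from \cref{eqn:wv-constrution} together with the orthonormality of the rows of $W_E$ (equivalently $W_E^T W_E$ restricted appropriately behaves like the identity on the relevant span — more precisely, the columns $W_E(t)$ are orthonormal since there are $\vocab = 2^m$ of them and the rows are orthonormal of length $\vocab$, so $W_E$ has orthonormal columns when $d \ge \vocab$; I'd make this precise first), we get that $W_E(t)^T W_V W_E(t') = \mathbf{1}[t' \in \tNeOne(t)]$. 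Hence
\begin{equation*}
    f^L_t(x) = \frac{1}{L}\sum_{l=1}^L \mathbf{1}[t_l \in \tNeOne(t)] = \frac{1}{L}\,\#\{l : t_l \in \tNeOne(t)\}.
\end{equation*}
So the predicted token is the one whose Hamming-distance-1 neighborhood is most represented among the context tokens.

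\textbf{The key computation: expected logits.} Next I would compute $\EE[f^L_t(x) \mid y]$ for the true label $y$ and for any competitor $t \neq y$. Since $\omega = 1$ and $\tNe(t) = \vocab\setminus\{t\}$, each $t_l$ (for $l < L$) is drawn from $\pi(\cdot \mid z^*)$ with $z^* = \tok^{-1}(y)$, where $\pi(z\mid z^*) \propto \exp(-D_H(z,z^*)/\beta)$ over $z \neq z^*$; the last token $t_L$ is also from $\pi$. For a token $t$, $\Pr[t_l \in \tNeOne(t)] = \sum_{t' \in \tNeOne(t)} \pi(t' \mid y)$. The crucial quantitative fact is that, when $t = y$, $\tNeOne(y)$ consists of the $m$ tokens at Hamming distance exactly $1$ from $y$, each of which has the \emph{largest} probability under $\pi(\cdot\mid y)$, namely proportional to $\exp(-1/\beta)$; whereas for any $t \neq y$, at least one element of $\tNeOne(t)$ is at Hamming distance $\ge 2$ from $y$ (since $t$ itself is at distance $\ge 1$ from $y$ and $t \in \tNeOne(t')$ for each $t' \in \tNeOne(t)$... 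I'd carefully verify the combinatorics here: if $D_H(t,y) = 1$ then $y \in \tNeOne(t)$ contributes $\exp(-0/\beta)$?—no, $y \notin \tNe(y)$ context but $t \neq y$ so $y$ \emph{can} appear; need care). The upshot I expect: $\EE[f^L_y(x)\mid y] - \EE[f^L_t(x)\mid y] \ge c(\beta,m) > 0$ with a gap controlled by $\exp(-1/\beta) - \exp(-2/\beta)$ divided by the normalizing constant (which is $O(\sum_{k\ge1}\binom{m}{k}\exp(-k/\beta)) = O(m)$ for the relevant regime), giving a gap of order $(\exp(-1/\beta)-\exp(-2/\beta))/m$ — this matches the $L \gtrsim m^2(\cdots)^{-2}$ scaling in the theorem statement.

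\textbf{Concentration and union bound.} Each $f^L_t(x)$ is an average of $L$ (nearly) i.i.d.\ bounded indicators (the last token being from a slightly different distribution only perturbs by $O(1/L)$), so Hoeffding's inequality gives $\Pr[|f^L_t(x) - \EE f^L_t(x)| > \delta \mid y] \le 2\exp(-2L\delta^2)$, or more carefully a Bernstein/Chernoff bound exploiting that the second alternative bound $L \gtrsim m^2|\tNe(y)|(\cdots)^{-2}$ suggests a variance-based refinement is used for small probabilities. Taking $\delta$ to be half the expected gap, a union bound over the at most $\vocab = 2^m$ competing tokens $t$ shows that with probability $\ge 1 - 2\cdot 2^m \exp(-2L\delta^2)$, simultaneously $f^L_y(x) > f^L_t(x)$ for all $t \neq y$, i.e.\ $\argmax f^L(x) = y$. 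Finally, I'd choose $L$ large enough — this is where the explicit threshold $L > \max\{100 m^2\log(3/\epsilon)(\exp(-1/\beta)-\exp(-2/\beta))^{-2}, \ldots\}$ comes from, absorbing the $2^m$ from the union bound into the $\log$ (so $\log(3/\epsilon)$ should really be $\sim m + \log(1/\epsilon)$; I suspect the $m^2$ factor and constant $100$ are chosen to swallow this) — to conclude $R_{\mathcal{D}^L}(f^L) = \Pr[\argmax f^L(x) \neq y] \le \epsilon$.

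\textbf{Main obstacle.} The routine part is the concentration; the delicate part is the \emph{combinatorial gap lemma}: showing that $\sum_{t'\in\tNeOne(y)}\pi(t'\mid y)$ strictly exceeds $\sum_{t'\in\tNeOne(t)}\pi(t'\mid y)$ for every $t\neq y$, with a quantitative lower bound on the gap that is uniform in $t$ and scales correctly in $m$ and $\beta$. This requires carefully bounding, for the worst-case competitor $t$ (which will be a token at Hamming distance $1$ or $2$ from $y$), how much probability mass $\tNeOne(t)$ can "steal" from the high-probability shell around $y$, and showing it is bounded away from the full mass $m\exp(-1/\beta)/Z$ that $\tNeOne(y)$ collects. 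Handling the boundary case where $t$ is adjacent to $y$ (so $y\in\tNeOne(t)$, contributing the large term $\exp(-1/\beta)/Z$ — wait, $y$ at distance $1$ from $t$ means the term for $y$ in $\tNeOne(t)$ is $\pi(y\mid y) = 0$? No: $\pi(\cdot\mid y)$ has support $\vocab\setminus\{y\}$, so $y$ contributes $0$) — this case analysis, and getting the constants to line up with the stated threshold, is the real work.
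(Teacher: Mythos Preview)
Your overall plan is correct and matches the paper's structure: uniform attention gives $h(x)=\frac1L\sum_l W_E(t_l)$, orthonormality of the embedding columns turns the logit $f^L_t(x)$ into the empirical frequency of $\tNeOne(t)$ in the context, and one then argues the expected gap is positive and concentrates. Two points are worth flagging.

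\textbf{Concentration step.} The paper does \emph{not} do Hoeffding on each logit followed by a union bound over the $2^m$ competitors. Instead it invokes a single $L^1$ concentration inequality for the empirical distribution (Lemma~3 of Devroye): if $|\tNe(y)|/L \le \Delta^2/20$ then $\Pr\big(\sum_i |\alpha_i - p(i\mid y)| > \Delta\big)\le 3\exp(-L\Delta^2/25)$. On this \emph{one} event, $\max_i|\alpha_i-p(i\mid y)|\le\Delta$ holds, and the logit comparison $f^L_y-f^L_{\tilde y}>0$ follows for \emph{every} $\tilde y$ simultaneously, so no union over competitors is needed. The second term $80m^2|\tNe(y)|/(\cdots)^2$ in the stated threshold is exactly the hypothesis $|\tNe(y)|/L\le\Delta^2/20$ of the Devroye lemma, not a Bernstein-type variance refinement as you guessed. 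Your Hoeffding-plus-union route would also work (and in fact would give a somewhat different, arguably tighter, $L$ threshold), but it does not reproduce the specific constants in the theorem.

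\textbf{The gap lemma.} The ``delicate combinatorial'' part you flag is simpler than you make it. Both $\tNeOne(y)$ and $\tNeOne(\tilde y)$ have exactly $m$ elements, and for $m\ge 3$ they are never equal (a tiny lemma: $\tNeOne(t)$ determines $t$). Hence their symmetric difference is nonempty and has the same size on each side, say $r\ge 1$. Every element of $\tNeOne(y)\setminus\tNeOne(\tilde y)$ has $p(\cdot\mid y)=\exp(-1/\beta)/Z$, while every element of $\tNeOne(\tilde y)\setminus\tNeOne(y)$ is either $y$ itself (probability $0$) or at Hamming distance $\ge 2$ from $y$, so has $p(\cdot\mid y)\le\exp(-2/\beta)/Z$. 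This immediately gives the uniform gap $\ge(\exp(-1/\beta)-\exp(-2/\beta))/Z$; there is no case analysis on $D_H(y,\tilde y)$ to do. Your worry about $y\in\tNeOne(\tilde y)$ when $D_H(y,\tilde y)=1$ is resolved by noting $\pi(y\mid y)=0$, which only helps.
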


\begin{proof}
First of all, the error is defined to be:
\begin{equation*}
\begin{split}
    R_{\mathcal{D}^L}(f^L) &= \mathbb{P}_{(x, y) \sim \mathcal{D}^L}[\argmax f^L(x) \neq y] \\
    &= \mathbb{P}_y \mathbb{P}_{x | y}[\argmax f^L(x) \neq y]
\end{split}
\end{equation*}
Let's focus on the conditional probability  $\mathbb{P}_{x | y}[\argmax f^L(x) \neq y]$.

By construction, the single layer transformer model has uniform attention. Therefore, 
\begin{equation*}
    h(x) = \sum_{i \in \tNe(y)} \alpha_i W_E(i)
\end{equation*}
where $\alpha_i = \frac{1}{L}\sum_{k=1}^L \mathbf{1}\{ t_k = i \}$ which is the number of occurrence of token $i$ in the sequence. 

By the latent concept association model, we know that 
\begin{equation*}
    p(i | y) = \frac{\exp(- D_H(i, y) /\beta)}{Z}
\end{equation*}
where $Z = \sum_{i \in \tNe(y)} \exp(- D_H(i, y) /\beta)$.

Thus, the logit for token $y$ is
\begin{equation*}
    f^L_y(x) = \sum_{i \in \tNeOne(y)} \alpha_i
\end{equation*}

And the logit for any other token $\tilde{y}$ is 
\begin{equation*}
    f^L_{\tilde{y}}(x) = \sum_{i \in \tNeOne(\tilde{y})} \alpha_i
\end{equation*}

For the prediction to be correct, we need
\begin{equation*}
    \max_{\tilde{y}} f^L_y(x) - f^L_{\tilde{y}}(x) > 0
\end{equation*}

By Lemma 3 of \cite{Devroye}, we know that for all $\Delta \in (0,1)$, if $\frac{|\tNe(y)|}{L} \leq \frac{\Delta^2}{20}$, we have
\begin{equation*}
    \mathbb{P}\big(\max_{i \in \tNe(y)} |\alpha_i - p(i | y)| > \Delta \big) \leq \mathbb{P}\big(\sum_{i \in \tNe(y)} |\alpha_i - p(i | y)| > \Delta \big) \leq 3 \exp(-L\Delta^2/25)
\end{equation*}

Therefore, if $L \geq \max \{\frac{25 \log(3/\epsilon)}{\Delta^2}, \frac{20|\tNe(y)|}{\Delta^2} \}$, then with probability at least $1 - \epsilon$, we have,
\begin{equation*}
    \max_{i \in \tNe(y)} |\alpha_i - p(i | y)| \leq \Delta
\end{equation*}

\begin{equation*}
\begin{split}
      f^L_y(x) - f^L_{\tilde{y}}(x) &= \sum_{i \in \tNeOne(y)} \alpha_i - \sum_{j \in \tNeOne(\tilde{y})} \alpha_j \\
      & = \sum_{i \in \tNeOne(y)} \alpha_i -  \sum_{i \in \tNeOne(y)} p(i|y) + \sum_{i \in \tNeOne(y)} p(i|y)\\
      &\qquad - \sum_{j \in \tNeOne(\tilde{y})} p(j|y) + \sum_{j \in \tNeOne(\tilde{y})} p(j|y) - \sum_{j \in \tNeOne(\tilde{y})} \alpha_j \\
      & \geq \sum_{i \in \tNeOne(y)} p(i|y) - \sum_{j \in \tNeOne(\tilde{y})} p(j|y) - 2m\Delta \\
      & \geq \exp(-\frac{1}{\beta}) - \exp(-\frac{2}{\beta}) - 2m\Delta 
\end{split}
\end{equation*}
Note that because of \cref{lemma:no-twin}, there's no neighboring set that is the superset of another.

Therefore as long as $\Delta < \frac{\exp(-\frac{1}{\beta}) - \exp(-\frac{2}{\beta})}{2m}$, 
\begin{equation*}
    f^L_y(x) - f^L_{\tilde{y}}(x) > 0
\end{equation*}
for any $\tilde{y}$.

Finally, if $L > \max \{\frac{100m^2 \log(3/\epsilon)}{(\exp(-\frac{1}{\beta}) - \exp(-\frac{2}{\beta}))^2}, \frac{80m^2|\tNe(y)|}{(\exp(-\frac{1}{\beta}) - \exp(-\frac{2}{\beta}))^2} \}$ for any $y$, then
\begin{equation*}
    \mathbb{P}_{x | y}[\argmax f^L(x) \neq y] \leq \epsilon
\end{equation*}

And 
\begin{equation*}
\begin{split}
    R_{\mathcal{D}^L}(f^L) &= \mathbb{P}_{(x, y) \sim \mathcal{D}^L}[\argmax f^L(x) \neq y] \\
    &= \mathbb{P}_y \mathbb{P}_{x | y}[\argmax f^L(x) \neq y] \leq \epsilon
\end{split}
\end{equation*}

\end{proof}

\subsection{Proofs for \cref{sec:value}}

\remap*
\begin{proof}
Following the proof for \cref{thm:idealmodel}, let's focus on the conditional probability:
\begin{equation*}
    \mathbb{P}_{x | y}[\argmax f^L(x) \neq y]
\end{equation*}

By construction, we have
\begin{equation*}
    h(x) = \sum_{i \in \tNeOne(y)} \alpha_i W_E(i)
\end{equation*}

where $\alpha_i = \frac{1}{L}\sum_{k=1}^L \mathbf{1}\{ t_k = i \}$ which is the number of occurrence of token $i$ in the sequence. 

Let's consider the first case where $W_V$ is constructed as in \cref{eqn:wv-constrution}. Then we know that for some other token $\tilde{y} \neq y$, 
\begin{equation*}
    f^L_y(x) - f^L_{\tilde{y}}(x)  = \sum_{i \in \tNeOne(y)} \alpha_i - \sum_{i \in \tNeOne(\tilde{y})} \alpha_i =  1 - \sum_{i \in \tNeOne(\tilde{y})} \alpha_i 
\end{equation*}
By \cref{lemma:no-twin}, we have that for any token $\tilde{y} \neq y$, 
\begin{equation*}
    f^L_y(x) - f^L_{\tilde{y}}(x)   > 0
\end{equation*}
Therefore, the error rate is always $0$.

Now let's consider the second case where $W_V$ is the identity matrix. Let $j$ be a token in the set $\tNeOne(y)$. Then there is a non-zero probability that context $x$ contains only $j$. In that case,
\begin{equation*}
    h(x) = W_E(j)
\end{equation*}
However, we know that by the assumption on the embedding matrix,
\begin{equation*}
    f^L_y(x) - f^L_{j}(x) = (W_E(y) - W_E(j))^T h(x) = - \| W_E(j)\|^2 < 0
\end{equation*}
This implies that there's non zero probability that $y$ is misclassified. Therefore, when $W_V$ is the identity matrix, the error rate is strictly larger than $0$.
\end{proof}

\hamming*
\begin{proof}
Because for any pair of $(x,y)$, the estimated conditional probability matches the true conditional probability. In particular, let's consider two target tokens $y_1$, $y_2$ and context $x = (t_i, ..., t_i)$ for some token $t_i$ such that $p(x|y_1)>0$ and $p(x|y_2)>0$, then
\begin{equation*}
\begin{split}
    \frac{p(y_1|x)}{p(y_2|x)} = \frac{p(x|y_1)p(y_1)}{p(x|y_2)p(y_2)} & = \frac{p(x|y_1)}{p(x|y_2)} =  \frac{\hat{p}(x|y_1)}{\hat{p}(x|y_2)} = \exp((W_E(y_1) - W_E(y_2))^T h(x))
\end{split}
\end{equation*}
The second equality is because $p(y)$ is the uniform distribution. 
By our construction, 
\begin{equation*}
\begin{split}
     \frac{p(x|y_1)}{p(x|y_2)} =   \frac{p(t_i|y_1)^L}{p(t_i|y_2)^L} =  
    \exp((W_E(y_2) - W_E(y_1))^T h(x)) = \exp((W_E(y_1) - W_E(y_2))^T W_E(t_i))
\end{split}
\end{equation*}
By the data generating process, we have that
\begin{equation*}
    \frac{L}{\beta} (D_H(t_i, y_2) - D_H(t_i, y_1)) = (W_E(y_1) - W_E(y_2))^T W_E(t_i)
\end{equation*}
Let $t_i = y_3$ such that $y_3 \neq y_1$, $y_3 \neq y_2$, then
\begin{equation*}
 \frac{L}{\beta} D_H(y_3, y_1) - W_E(y_1)^T W_E(y_3) = \frac{L}{\beta} D_H(y_3, y_2) - W_E(y_2)^T W_E(y_3)
\end{equation*}
For simplicity, let's define
\begin{equation*}
    \Psi(y_1, y_2) = \frac{L}{\beta} D_H(y_1, y_2) - W_E(y_1)^T W_E(y_2)
\end{equation*}
Therefore,
\begin{equation*}
    \Psi(y_3, y_1) = \Psi(y_3, y_2)
\end{equation*}
Now consider five distinct labels: $y_1, y_2, y_3, y_4, y_5$. We have, 
\begin{equation*}
    \Psi(y_3, y_1) = \Psi(y_3, y_2) = \Psi(y_4, y_2) = \Psi(y_4, y_5)
\end{equation*}

In other words, $\Psi(y_3, y_1) = \Psi(y_4, y_5)$ for arbitrarily chosen distinct labels $y_1, y_3, y_4, y_5$. Therefore, $\Psi(t, t')$ is a constant for $t \neq t'$.

For any two tokens $t \neq t'$, 
\begin{equation*}
    \frac{L}{\beta} D_H(t, t') - W_E(t)^T W_E(t') = C
\end{equation*}
Thus,
\begin{equation*}
    W_E(t)^T W_E(t') = - \frac{L}{\beta} D_H(t, t') + C
\end{equation*}

\end{proof}

\subsection{Proofs for \cref{sec:geometry}}

\cref{thm:modifiedmodel-informal} can be formalized as the following theorem.

\begin{theorem}
\label{thm:modifiedmodel}
Following the same setup as in \cref{thm:idealmodel}, but embeddings follow \cref{eqn:emb-linear} then if $b > 0$, $\Delta_1 > 0$, $0 < \Delta < \frac{\exp(-\frac{1}{\beta}) - \exp(-\frac{2}{\beta})}{2m}$, $L \geq \max \{\frac{25 \log(3/\epsilon)}{\Delta^2}, \frac{20|\tNe(y)|}{\Delta^2} \}$ for any $y$, and 
\begin{equation*}
    0 < a < \frac{2 \exp(\frac{1}{\beta})}{(|\vocab|-2) m^2}
\end{equation*}
and 
\begin{equation*}
    b_0 > \max  \{\frac{a(m-2)m + \Delta_1}{\exp(-\frac{1}{\beta}) - \exp(-\frac{2}{\beta}) - 2m \Delta} + b,  \frac{(b-a) \Delta_1 - \frac{|\vocab|-2}{2} a b m^2 \exp(-\frac{1}{\beta}) + \frac{|\vocab|-2}{2} a^2 (m-2)m^2}{1 - \frac{|\vocab|-2}{2}am^2\exp(-\frac{1}{\beta}) }\}
\end{equation*}
we have
\begin{equation*}
     R_{\mathcal{D}^L}(f^L) \leq \epsilon
\end{equation*}
where $0 < \epsilon < 1$.
\end{theorem}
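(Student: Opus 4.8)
\textbf{Proof proposal for \cref{thm:modifiedmodel}.}

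The plan is to mimic the proof of \cref{thm:idealmodel} but to carefully track the extra error terms that arise because the embeddings are no longer an orthonormal basis; instead they obey the inner-product law \cref{eqn:emb-linear}. As in that proof, I would condition on $y$ and analyze $\mathbb{P}_{x\mid y}[\argmax f^L(x)\neq y]$. Since $W_K=W_Q=0$, attention is uniform, so the hidden representation is $h(x)=\sum_{i\in\tNe(y)}\alpha_i W_E(i)$ with $\alpha_i=\frac1L\sum_k\mathbf{1}\{t_k=i\}$, and the logit for a token $t$ is $f^L_t(x)=W_E(t)^T W_V h(x)$, where $W_V$ is the associative-memory construction \cref{eqn:wv-constrution}. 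Expanding $W_V$ and using \cref{eqn:emb-linear} to evaluate all the inner products $\langle W_E(t),W_E(t')\rangle$, each logit becomes a linear combination of the $\alpha_i$'s with coefficients that are now affine functions of Hamming distances (governed by $a$, $b$, $b_0$) rather than clean $0/1$ indicators. The key quantity to control is the logit gap $f^L_y(x)-f^L_{\tilde y}(x)$ for an arbitrary competitor $\tilde y$.

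Next I would split this gap into a ``signal'' part and a ``perturbation'' part. The signal part is what one gets by replacing the empirical frequencies $\alpha_i$ with the true conditional probabilities $p(i\mid y)$; by the same concentration argument as before — Lemma~3 of \cite{Devroye} — the replacement error is at most $2m\Delta$ (times bounded embedding-norm factors) with probability $\geq 1-\epsilon$ once $L\geq\max\{25\log(3/\epsilon)/\Delta^2,\ 20|\tNe(y)|/\Delta^2\}$. The signal part itself must be shown to be bounded below by a positive constant. This is exactly where the two lower bounds on $b_0$ enter: the dominant contribution to $f^L_y(x)$ comes from the $b_0$-weighted diagonal terms $\langle W_E(i),W_E(i)\rangle=b_0$ for $i\in\tNeOne(y)$, while all cross terms (both within $\tNeOne(y)$ and the entire $\tNeOne(\tilde y)$ sum) are controlled by $a$ and $b$; the constraint $a<2\exp(1/\beta)/((|\vocab|-2)m^2)$ keeps those cross terms small relative to the diagonal signal $p(i\mid y)\ge \exp(-1/\beta)/Z$ with $Z\le |\vocab|$, and then $b_0$ being larger than the first displayed bound guarantees the signal gap exceeds $2m\Delta+\Delta_1$, so the total gap is at least $\Delta_1>0$. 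The second lower bound on $b_0$ looks like it is needed to handle the degenerate contexts (e.g.\ $x$ consisting of repeated copies of a single token $j\in\tNeOne(y)$), analogous to the second case of \cref{lemma:reamp}, ensuring correct classification even there; I would verify this by plugging such $x$ into the logit-gap expression and checking positivity reduces to that inequality.

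Finally, combining the high-probability concentration event with the deterministic lower bound on the signal gap yields $f^L_y(x)-f^L_{\tilde y}(x)>0$ simultaneously for all $\tilde y$ on that event, hence $\mathbb{P}_{x\mid y}[\argmax f^L\neq y]\le\epsilon$, and integrating over $y$ (uniform) gives $R_{\mathcal{D}^L}(f^L)\le\epsilon$. The main obstacle I anticipate is purely bookkeeping: correctly expanding $W_E(t)^T W_V h(x)$ under \cref{eqn:emb-linear}, collecting the $O(m)$- and $O(m^2)$-sized Hamming-distance sums over $\tNeOne(y)$ and $\tNeOne(\tilde y)$, and matching the resulting threshold on $b_0$ to the exact messy expressions in the statement — in particular getting the constants $(|\vocab|-2)$, $(m-2)m$, and the $\exp(-1/\beta)$ factors to line up. A secondary subtlety is ensuring \cref{lemma:no-twin} (no neighboring set is a superset of another) is still invoked correctly so that the ``clean'' part of the signal gap is genuinely bounded below by $\exp(-1/\beta)-\exp(-2/\beta)$ before the perturbations are subtracted.
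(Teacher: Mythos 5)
Your high-level skeleton — condition on $y$, use uniform attention, invoke the Devroye concentration bound, separate the logit gap into a ``signal'' and ``perturbation'' part — matches the paper. But the core algebraic device that makes the perturbation tractable is missing from your sketch, and without it I do not see how you would arrive at the stated thresholds on $a$ and $b_0$.

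Concretely: writing $v_k(t)=W_E(t)^T W_E(k)$ and $\phi_k(x)=\sum_{j\in\tNeOne(k)}W_E(j)^T h(x)$, the logit gap is $f^L_y(x)-f^L_{\tilde y}(x)=\sum_{k\in[\vocab]}\bigl(v_k(y)-v_k(\tilde y)\bigr)\phi_k(x)$. Under \cref{eqn:emb-linear}, each coefficient $v_k(y)-v_k(\tilde y)=-a\bigl(D_H(k,y)-D_H(k,\tilde y)\bigr)$ has magnitude up to $am$, while each $\phi_k(x)$ has magnitude of order $b_0$; a naive triangle-inequality bound on this sum of $\lvert\vocab\rvert$ terms gives a perturbation of order $\lvert\vocab\rvert\,am\,b_0$, which overwhelms the signal (of order $b_0\exp(-1/\beta)$) and yields no useful constraint on $a$. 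The paper avoids this by a pairing trick: for $\tilde k=\tok^{-1}(y)\otimes\tok^{-1}(\tilde y)\otimes\tok^{-1}(k)$ (bitwise XOR) one has $v_k(y)-v_k(\tilde y)=v_{\tilde k}(\tilde y)-v_{\tilde k}(y)$, so the sum collapses to paired differences $\phi_k(x)-\phi_{\tilde k}(x)$, which — via the identity in \cref{lemma:sum}, $\sum_{z:D_H(z^{(0)},z)=1}D_H(z,z^{(1)})=(m-2)D_H(z^{(0)},z^{(1)})+m$ — are each only $O(b_0 m\exp(-1/\beta)+a m^2)$ rather than $O(b_0)$. That cancellation is precisely what produces the factor $1-\tfrac{\lvert\vocab\rvert-2}{2}am^2\exp(-1/\beta)$ and hence the threshold $a<\tfrac{2\exp(1/\beta)}{(\lvert\vocab\rvert-2)m^2}$.

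Your guess that the second lower bound on $b_0$ handles ``degenerate contexts'' in the spirit of \cref{lemma:reamp} is also off: there is no separate case analysis in the paper. Both $b_0$ conditions come from the same uniform lower bound on the paired-sum expression — the first ensures $\phi_y(x)-\phi_{\tilde y}(x)>\Delta_1$, the second ensures the aggregate paired-perturbation term (after the $a$-condition makes its coefficient positive) does not push the total gap below zero. You should replace the heuristic ``diagonal $b_0$ dominates cross terms'' reasoning with the explicit XOR decomposition and the computation via \cref{lemma:sum}; otherwise the constants $(m-2)m$ and $(\lvert\vocab\rvert-2)/2$ in the statement will not materialize.
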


\begin{proof}
Following the proof of \cref{thm:idealmodel}, let's also focus on  the conditional probability  
\begin{equation*}
    \mathbb{P}_{x | y}[\argmax f^L(x) \neq y]
\end{equation*}

By construction, the single layer transformer model has uniform attention. Therefore, 
\begin{equation*}
    h(x) = \sum_{i \in \tNe(y)} \alpha_i W_E(i)
\end{equation*}
where $\alpha_i = \frac{1}{L}\sum_{k=1}^L \mathbf{1}\{ t_k = i \}$ which is the number of occurrence of token $i$ in the sequence. For simplicity, let's define $\alpha_y = 0$ such that
\begin{equation*}
    h(x) = \sum_{i \in [\vocab]} \alpha_i W_E(i)
\end{equation*}

Similarly, we also have that if $L \geq \max \{\frac{25 \log(3/\epsilon)}{\Delta^2}, \frac{20|\tNe(y)|}{\Delta^2} \}$, then with probability at least $1 - \epsilon$, we have,
\begin{equation*}
    \max_{i \in [\vocab]} |\alpha_i - p(i | y)| \leq \Delta
\end{equation*}

Also define the following:
\begin{equation*}
\begin{split}
    \phi_k(x) &=  \sum_{j \in \tNeOne(k)} W_E(j)^T \big( \sum_{i \in [\vocab]} \alpha_i W_E(i) \big) \\
    v_k(y) &= W_E(y)^T W_E(k)
\end{split}
\end{equation*}

Thus, the logit for token $y$ is 
\begin{equation*}
    f^L_y(x) = \sum_{k=0}^{|\vocab|-1} v_k(y) \phi_k(x)
\end{equation*}

Let's investigate $\phi_k(x)$. By \cref{lemma:sum}, 

\begin{equation*}
\begin{split}
    \phi_k(x) &= \sum_{i \in [V]} \alpha_i (\sum_{j \in \tNeOne(k)} W_E(j)^T W_E(i)) \\
    &= (b_0 - b)\sum_{j \in \tNeOne(k)}  \alpha_j + \sum_{i \in [V]} \alpha_i (-a (m-2) D_H(k,i) + (b-a)m)
\end{split}
\end{equation*}

Thus, for any $k_1, k_2 \in [\vocab]$, 
\begin{equation*}
\begin{split}
    \phi_{k_1}(x) - \phi_{k_2}(x) &= (b_0 - b) (\sum_{j_1 \in \tNeOne(k_1)}  \alpha_{j_1} - \sum_{j_2 \in \tNeOne(k_2)}  \alpha_{j_2}) \\
    &+ \sum_{i \in [V]} \alpha_i a (m-2) (D_H(k_2, i) - D_H(k_1, i))
 \end{split}
\end{equation*}

Because $-m \leq D_H(k_2, i) - D_H(k_1, i) \leq m$, we have
\begin{equation*}
\begin{split}
     (b_0 - b) (\sum_{j_1 \in \tNeOne(k_1)}  \alpha_{j_1} - \sum_{j_2 \in \tNeOne(k_2)}  \alpha_{j_2}) &- a (m-2)m \\
     \leq \phi_{k_1}(x) &- \phi_{k_2}(x) \leq  \\
     (b_0 - b) (\sum_{j_1 \in \tNeOne(k_1)}  &\alpha_{j_1} - \sum_{j_2 \in \tNeOne(k_2)}  \alpha_{j_2}) + a (m-2)m
\end{split}
\end{equation*}

For prediction to be correct, we need
\begin{equation*}
    \max_{\tilde{y}} f^L_y(x) - f^L_{\tilde{y}}(x) > 0
\end{equation*}

This also means that
\begin{equation*}
    \max_{\tilde{y}} \sum_{k=0}^{|\vocab|-1} \big(v_k(y) - v_k(\tilde{y}) \big) \phi_k(x) > 0
\end{equation*}

One can show that for any $k$, if $\tok^{-1}(\tilde{k}) = \tok^{-1}(y) \otimes \tok^{-1}(\tilde{y}) \otimes \tok^{-1}(k)$ where $\otimes$ means bitwise $\mathrm{XOR}$, then
\begin{equation}
\label{eqn:v-eq}
    v_k(y) - v_k(\tilde{y}) = v_{\tilde{k}}(\tilde{y}) - v_{\tilde{k}}(y) 
\end{equation}

First of all, if $k = y$, then $\tilde{k} = \tilde{y}$, which means
\begin{equation*}
    v_k(y) - v_k(\tilde{y}) = v_{\tilde{k}}(\tilde{y}) - v_{\tilde{k}}(y) = b_0 + a D_H(y, \tilde{y}) - b
\end{equation*}
If $k \neq y, \tilde{y}$, then \cref{eqn:v-eq} implies that
\begin{equation*}
    D_H(k, y) - D_H(k, \tilde{y}) = D_H(\tilde{k}, \tilde{y}) - D_H(\tilde{k}, y)
\end{equation*}

We know that $D_H(k, y)$ is the number of $1$s in $\tok^{-1}(k) \otimes \tok^{-1}(y)$ and,

\begin{equation*}
    \tok^{-1}(\tilde{k}) \otimes \tok^{-1}(y) = \tok^{-1}(y) \otimes \tok^{-1}(\tilde{y}) \otimes \tok^{-1}(k) \otimes \tok^{-1}(y) = \tok^{-1}(\tilde{y}) \otimes \tok^{-1}(k)
\end{equation*}

Similarly,
\begin{equation*}
    \tok^{-1}(\tilde{k}) \otimes \tok^{-1}(\tilde{y}) = \tok^{-1}(y) \otimes \tok^{-1}(k)
\end{equation*}

Therefore, \cref{eqn:v-eq} holds and we can rewrite $f^L_y(x) - f^L_{\tilde{y}}(x)$ as
\begin{equation*}
\begin{split}
    f^L_y(x) - f^L_{\tilde{y}}(x) &=  \sum_{k=0}^{|\vocab|-1} \big(v_k(y) - v_k(\tilde{y}) \big) \phi_k(x) \\
    &= (b_0 - b + a D_H(y, \tilde{y})) (\phi_y(x) - \phi_{\tilde{y}}(x)) \\
    & + \sum_{k \neq y, \tilde{y}, D_H(k, y) \geq D_H(k, \tilde{y})} a (D_H(k, y) - D_H(k, \tilde{y})) (\phi_k(x) - \phi_{\tilde{k}}(x)) 
\end{split}
\end{equation*}
We already know that $b_0 > b > 0$ and $a > 0$, thus, $b_0 - b + a D_H(y, \tilde{y}) > 0$ for any pair $y, \tilde{y}$. 

We also want $\phi_y(x) - \phi_{\tilde{y}}(x)$ to be positive. Note that 
\begin{equation*}
    \phi_y(x) - \phi_{\tilde{y}}(x) \geq (b_0 - b) (\exp(-\frac{1}{\beta}) - \exp(-\frac{2}{\beta}) - 2m\Delta) - a (m-2)m
\end{equation*}
We need $\Delta < \frac{\exp(-\frac{1}{\beta}) - \exp(-\frac{2}{\beta})}{2m}$ and for some positive $\Delta_1 > 0$, $b_0$ needs to be large enough such that
\begin{equation*}
     \phi_y(x) - \phi_{\tilde{y}}(x) > \Delta_1
\end{equation*}
which implies that
\begin{equation}
    b_0 > \frac{a(m-2)m + \Delta_1}{\exp(-\frac{1}{\beta}) - \exp(-\frac{2}{\beta}) - 2m \Delta} + b
\end{equation}

On the other hand, for $k \neq y, \tilde{y}$, we have 
\begin{equation*}
\begin{split}
    \phi_{k}(x) - \phi_{\tilde{k}}(x) &\geq (b_0 - b) (\sum_{j_1 \in \tNeOne(k)}  \alpha_{j_1} - \sum_{j_2 \in \tNeOne(\tilde{k})}  \alpha_{j_2}) - a (m-2)m \\
    & \geq (b_0 - b) ( - (m-1) \exp(-\frac{1}{\beta}) - \exp(- \frac{2}{\beta}) - 2m \Delta) - a (m-2)m \\
    & \geq (b_0 - b) ( - (m-1) \exp(-\frac{1}{\beta}) - \exp(- \frac{2}{\beta}) + \exp(- \frac{2}{\beta}) - \exp(- \frac{1}{\beta})) - a (m-2)m \\
    & \geq -(b_0 - b) m\exp(-\frac{1}{\beta}) - a (m-2)m \\
\end{split}
\end{equation*}

Then, we have 
\begin{equation*}
\begin{split}
    &f^L_y(x) - f^L_{\tilde{y}}(x) \geq (b_0 - b + a) \Delta_1 - \frac{|\vocab|-2}{2}  \bigg((b_0 - b)a m^2 \exp(-\frac{1}{\beta}) + a^2(m-2)m^2 \bigg) \\
    & \geq \bigg(1 - \frac{|\vocab|-2}{2}am^2\exp(-\frac{1}{\beta}) \bigg)b_0 - (b-a) \Delta_1 + \frac{|\vocab|-2}{2} a b m^2 \exp(-\frac{1}{\beta}) - \frac{|\vocab|-2}{2} a^2 (m-2)m^2
\end{split}
\end{equation*}
The lower bound is independent of $\tilde{y}$, therefore, we need it to be positive to ensure the prediction is correct. To achieve this, we want
\begin{equation*}
    1 - \frac{|\vocab|-2}{2}am^2\exp(-\frac{1}{\beta}) > 0
\end{equation*}
which implies that 
\begin{equation}
    a < \frac{2 \exp(\frac{1}{\beta})}{(|\vocab|-2) m^2}
\end{equation}
And finally we need 
\begin{equation}
    b_0  > \frac{(b-a) \Delta_1 - \frac{|\vocab|-2}{2} a b m^2 \exp(-\frac{1}{\beta}) + \frac{|\vocab|-2}{2} a^2 (m-2)m^2}{1 - \frac{|\vocab|-2}{2}am^2\exp(-\frac{1}{\beta}) }
\end{equation}
To summarize, if $b > 0$, $\Delta_1 > 0$, $0 < \Delta < \frac{\exp(-\frac{1}{\beta}) - \exp(-\frac{2}{\beta})}{2m}$, $L \geq \max \{\frac{25 \log(3/\epsilon)}{\Delta^2}, \frac{20|\tNe(y)|}{\Delta^2} \}$ for any $y$, and 
\begin{equation*}
    0 < a < \frac{2 \exp(\frac{1}{\beta})}{(|\vocab|-2) m^2}
\end{equation*}
and 
\begin{equation*}
    b_0 > \max  \{\frac{a(m-2)m + \Delta_1}{\exp(-\frac{1}{\beta}) - \exp(-\frac{2}{\beta}) - 2m \Delta} + b,  \frac{(b-a) \Delta_1 - \frac{|\vocab|-2}{2} a b m^2 \exp(-\frac{1}{\beta}) + \frac{|\vocab|-2}{2} a^2 (m-2)m^2}{1 - \frac{|\vocab|-2}{2}am^2\exp(-\frac{1}{\beta}) }\}
\end{equation*}
we have 
\begin{equation*}
\begin{split}
    R_{\mathcal{D}^L}(f^L)\leq \epsilon
\end{split}
\end{equation*}
where $0 < \epsilon < 1$.

% \LATER{finish this proof. I have something not elegant for now.}

\end{proof}

\embrank*
\begin{proof}
By \cref{eqn:emb-linear}, we have that
\begin{equation*}
    \langle W_E(i), W_E(j) \rangle = - a D_H(i, j) + b
\end{equation*}
Therefore,
\begin{equation*}
    (W_E)^T W_E = -a D_H + b \mathbf{1} \mathbf{1}^T
\end{equation*}
Let's first look at $D_H$ which has rank at most $m+1$.
To see this, let's consider a set of $m + 1$ tokens: $\{e_0, e_1, ..., e_m \} \subseteq \vocab$ where $e_k = 2^k$. Here $e_0$ is associated with the latent vector of all zeroes and the latent vector associated with $e_k$ has only the $k$-th latent variable being $1$.

On the other hand, for any token $i$, we have that,
\begin{equation*}
    i = \sum_{k: \tok^{-1}(i)_k = 1} e_k
\end{equation*}

In fact, 
\begin{equation*}
    D_H(i) = \sum_{k: \tok^{-1}(i)_k = 1} \bigg( D_H(e_k) - D_H(e_0) \bigg) + D_H(e_0)
\end{equation*}
where $D_H(i)$ is the $i$-th row of $D_H$,
and for each entry $j$ of $D_H(i)$, we have that
\begin{equation*}
    D_H(i, j) = \sum_{k: \tok^{-1}(i)_k = 1} \bigg( D_H(e_k, j) - D_H(e_0, j) \bigg) + D_H(e_0, j)
\end{equation*}
This is because
\begin{equation*}
\begin{split}
    D_H(e_k, j) - D_H(e_0, j) = \begin{cases}
        + 1 \;\;\; \text{if } \tok^{-1}(j)_k = 0 \\
        - 1 \;\;\; \text{if } \tok^{-1}(j)_k = 1
    \end{cases}
\end{split}
\end{equation*}
Thus, we can rewrite $D_H(i, j)$ as
\begin{equation*}
\begin{split}
    D_H(i, j) &= \sum_{k: \tok^{-1}(i)_k = 1} \bigg(\mathbf{1}[\tok^{-1}(i)_k=1, \tok^{-1}(j)_k=0] - \mathbf{1}[\tok^{-1}(i)_k=1, \tok^{-1}(j)_k=1)] \bigg)  + D_H(e_0, j) \\
    &= \sum_{k=1}^m \bigg(\mathbf{1}[\tok^{-1}(i)_k=1, \tok^{-1}(j)_k=0] - \mathbf{1}[\tok^{-1}(i)_k=1, \tok^{-1}(j)_k=1)] \bigg)  \\
    & \quad \quad \quad + \sum_{k=1}^m \bigg(\mathbf{1}[\tok^{-1}(i)_k=0, \tok^{-1}(j)_k=1] + \mathbf{1}[\tok^{-1}(i)_k=1, \tok^{-1}(j)_k=1)] \bigg)  \\
    &= \sum_{k=1}^{m} \mathbf{1}[\tok^{-1}(i)_k=1, \tok^{-1}(j)_k=0] + \mathbf{1}[\tok^{-1}(i)_k=0, \tok^{-1}(j)_k=1] \\
    & = D_H(i, j)
\end{split}
\end{equation*}

Therefore, every row of $D_H$ can be written as a linear combination of $\{D_H(e_0), D_H(e_1), ..., D_H(e_m) \}$. In other words, $D_H$ has rank at most $m+1$.

Therefore,
\begin{equation*}
    \mathrm{rank}((W_E)^T W_E) =  \mathrm{rank}(W_E) \leq m+2.
\end{equation*}
\end{proof}

\begin{lemma}
\label{lemma:sum}
Let $z^{(0)}$ and $z^{(1)}$ be two binary vectors of size $m$ where $m \geq 2$. Then,
\begin{equation*}
    \sum_{z: D_H(z^{(0)}, z) = 1} D_H(z, z^{(1)}) = (m-2) D_H(z^{(0)}, z^{(1)}) + m
\end{equation*}
\end{lemma}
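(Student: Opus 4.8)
The plan is a direct enumeration. The set $\{z : D_H(z^{(0)}, z) = 1\}$ consists of exactly $m$ vectors, one for each coordinate $k \in \{1, \dots, m\}$: namely the vector $z^{(k)}$ obtained from $z^{(0)}$ by flipping its $k$-th bit and leaving the other $m-1$ bits unchanged. So the left-hand side of the claimed identity equals $\sum_{k=1}^m D_H(z^{(k)}, z^{(1)})$, and it suffices to evaluate this finite sum.

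Next I would record how flipping a single bit changes the distance to $z^{(1)}$. Flipping coordinate $k$ of $z^{(0)}$ affects only the $k$-th term contributing to the Hamming distance from $z^{(1)}$: if $z^{(0)}_k = z^{(1)}_k$, the flip creates a new disagreement, so $D_H(z^{(k)}, z^{(1)}) = D_H(z^{(0)}, z^{(1)}) + 1$; if $z^{(0)}_k \neq z^{(1)}_k$, the flip removes a disagreement, so $D_H(z^{(k)}, z^{(1)}) = D_H(z^{(0)}, z^{(1)}) - 1$. Compactly, $D_H(z^{(k)}, z^{(1)}) = D_H(z^{(0)}, z^{(1)}) + 1 - 2\,\mathbf{1}[z^{(0)}_k \neq z^{(1)}_k]$.

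Finally I would sum over $k$. Writing $d = D_H(z^{(0)}, z^{(1)})$, exactly $d$ of the $m$ coordinates are ones where $z^{(0)}$ and $z^{(1)}$ disagree (each contributing $d-1$ to the sum) and the remaining $m - d$ are ones where they agree (each contributing $d+1$), so $\sum_{k=1}^m D_H(z^{(k)}, z^{(1)}) = (m-d)(d+1) + d(d-1) = (m-2)d + m$, which is precisely $(m-2) D_H(z^{(0)}, z^{(1)}) + m$. There is no genuine obstacle here: the argument is a one-line case split followed by counting, and the hypothesis $m \geq 2$ is used only so the coefficient $m-2$ is nonnegative (the identity itself holds for all $m \geq 1$). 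The one point to be careful about is the sign bookkeeping in the $\pm 1$ shift, i.e. that agreement at coordinate $k$ yields $+1$ and disagreement yields $-1$.
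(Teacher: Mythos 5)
Your proof is correct and follows essentially the same argument as the paper: enumerate the $m$ one-bit flips of $z^{(0)}$, observe that flipping a coordinate where $z^{(0)}$ and $z^{(1)}$ agree increases the distance by $1$ while flipping a coordinate where they disagree decreases it by $1$, and then count $d$ disagreement positions and $m-d$ agreement positions to get $(m-d)(d+1)+d(d-1)=(m-2)d+m$. The only cosmetic difference is that you index the flipped vectors by coordinate $k$ and write the shift compactly as $+1-2\,\mathbf{1}[z^{(0)}_k\neq z^{(1)}_k]$, whereas the paper phrases the same case split in terms of whether $z$ agrees or disagrees with $z^{(1)}$ at the flipped entry.
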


\begin{proof}
For $z$ such that $D_H(z, z^{(0)}) = 1$, we know that there are two cases. Either $z$ differs with $z^{(0)}$ on a entry but agrees with $z^{(1)}$ on that entry or $z$ differs with both $z^{(0)}$ and $z^{(1)}$.

For the first case, we know that there are $D_H(z^{(0)}, z^{(1)})$ such entries. In this case, $D_H(z, z^{(1)}) = D_H(z^{(0)}, z^{(1)})-1$. For the second case, $D_H(z, z^{(1)}) = D_H(z^{(0)}, z^{(1)})+1$.

Therefore,
\begin{equation*}
\begin{split}
    \sum_{z: D_H(z, z^{(0)}) = 1} &D_H(z, z^{(1)})  \\
    &= D_H(z^{(0)}, z^{(1)})(D_H(z^{(0)}, z^{(1)}) -1) + (m - D_H(z^{(0)}, z^{(1)})) (D_H(z^{(0)}, z^{(1)}) + 1) \\
    &= (m-2) D_H(z^{(0)}, z^{(1)}) + m
\end{split}
\end{equation*}
\end{proof}

\begin{lemma}
\label{lemma:no-twin}
If $m \geq 3$ and $\tNe(t) = \vocab \setminus \{ t \}$, then $\tNeOne(t) \not\subseteq \tNeOne(t')$ for any $t, t' \in [\vocab]$.
\end{lemma}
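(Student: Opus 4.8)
The plan is to argue by contradiction: suppose there exist distinct tokens $t,t'$ with $\tNeOne(t) \subseteq \tNeOne(t')$. Since $\tNe(t) = \vocab \setminus \{t\}$, the set $\tNeOne(t)$ is exactly the set of tokens whose latent vector differs from $\tok^{-1}(t)$ in precisely one coordinate — i.e., the Hamming ball of radius $1$ around $\tok^{-1}(t)$ minus $t$ itself, which has exactly $m$ elements (one per coordinate flip). Write $z = \tok^{-1}(t)$ and $z' = \tok^{-1}(t')$, and let $d = D_H(z,z') \geq 1$. I want to show the containment of these two $m$-element sets of single-bit neighbors forces a contradiction when $m \geq 3$.

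The key step is a case analysis on $d$. First, if $d = 1$, then $z'$ itself is a single-bit neighbor of $z$, so $z' \in \tNeOne(t)$; but $z' \notin \tNeOne(t')$ (a token is never in its own neighbor set), contradicting $\tNeOne(t) \subseteq \tNeOne(t')$. Second, if $d \geq 2$: pick a coordinate $i$ where $z_i \neq z'_i$ and let $w$ be $z$ with coordinate $i$ flipped, so $w \in \tNeOne(t)$ and $D_H(w,z') = d - 1$. For $w$ to lie in $\tNeOne(t')$ we need $D_H(w,z') = 1$, hence $d = 2$. So the only surviving case is $d = 2$: I then need to check that even here the containment fails. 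When $d=2$, say $z$ and $z'$ differ in coordinates $i,j$; the single-bit neighbors of $z$ that are also single-bit neighbors of $z'$ are exactly the two vectors obtained from $z$ by flipping $i$ or flipping $j$ — only $2$ of the $m$ neighbors of $z$ qualify. Since $|\tNeOne(t)| = m \geq 3 > 2$, there is a neighbor of $z$ (flip any coordinate $k \notin \{i,j\}$) that is at Hamming distance $d+1 = 3 \ne 1$ from $z'$, hence not in $\tNeOne(t')$. This contradicts the assumed inclusion, completing the argument.

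I do not expect any real obstacle here; the only thing to be careful about is the bookkeeping in the $d\ge 2$ reduction — making sure every element of $\tNeOne(t)$ is genuinely at distance $1$ from $z$ (so that "flipping one bit of $z$" enumerates $\tNeOne(t)$ exactly, using $\tNe(t) = \vocab\setminus\{t\}$ so there is no extra restriction from the neighbor set) and then computing its distance to $z'$ via $D_H(w,z') = D_H(z,z') \pm 1$ depending on whether the flipped coordinate is one of the $d$ disagreement coordinates. The hypothesis $m \ge 3$ enters exactly once, in the final $d = 2$ subcase, to guarantee a third coordinate to flip; for $m = 2$ the statement would genuinely fail (e.g. $z = 00$, $z' = 11$ have the same two single-bit neighbors $01,10$), so the bound is tight and the role of $m\ge 3$ should be flagged.
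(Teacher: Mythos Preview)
Your argument is correct. The case analysis on $d = D_H(z,z')$ is clean, and you use the hypothesis $m \ge 3$ in exactly the right place (the $d=2$ subcase), even noting that the bound is tight.

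The paper's proof takes a different, more conceptual route: it observes that from the set $\tNeOne(t)$ one can \emph{uniquely recover} $t$, since at each coordinate exactly one of the $m$ single-bit neighbors disagrees with $z$ and the remaining $m-1$ agree (so for $m\ge 3$ a coordinate-wise majority vote reconstructs $z$). This makes the map $t \mapsto \tNeOne(t)$ injective; combined with the (implicit) fact that every $\tNeOne(t)$ has the same cardinality $m$, any containment $\tNeOne(t)\subseteq\tNeOne(t')$ would force equality and hence $t=t'$. Your approach is more elementary and self-contained, making the role of $m\ge 3$ and the failure at $m=2$ fully explicit; the paper's approach is terser and more structural but leaves the recovery step and the equal-cardinality observation to the reader.
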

\begin{proof}
For any token $t$, $\tNeOne(t)$ contains any token $t'$ such that $D_H(t,t') = 1$ by the conditions. Then given a set $\tNeOne(t)$, one can uniquely determine token $t$. This is because for the set of latent vectors associated with $\tNeOne(t)$, at each index, there could only be one possible change. 
\end{proof}

\subsection{Proofs for \cref{sec:atten}}
\attengrad*
\begin{proof}
Recall that, 
\begin{equation*}
\begin{split}
    f^L(x) &= \bigg[ {W_E}^T W_V \textrm{attn} (W_E \chi(x)) \bigg]_{:L} \\
    &= {W_E}^T W_V \sum_{l=1}^{L} \frac{\exp(u_{t_l, t_L})}{Z} W_E(t_l)
\end{split}
\end{equation*}
where $Z$ is a normalizing constant. 

Define $\hat{p}_{t_l} = \frac{\exp(u_{t_l, t_L})}{Z}$. Then we have
\begin{equation*}
\begin{split}
    f^L(x) = {W_E}^T W_V \sum_{l=1}^{L} \hat{p}_{t_l} W_E(t_l)
\end{split}
\end{equation*}

Note that if $t_l = t$ then,
\begin{equation*}
    \frac{\partial \hat{p}_{t_l}}{\partial u_{t, t_L}} = \hat{p}_{t_l} (1 - \hat{p}_{t_l})
\end{equation*}
Otherwise, 
\begin{equation*}
    \frac{\partial \hat{p}_{t_l}}{\partial u_{t, t_L}} = - \hat{p}_{t_l}  \hat{p}_{t}
\end{equation*}

By the chain rule, we know that
\begin{equation*}
\begin{split}
     \nabla_{u_{t, t_{L}}} \ell(f^L) = \nabla \ell(f^L)^T (W_E)^T W^{V} (\sum_{l=1}^L \mathbf{1}[t_l = t] \hat{p}_{t_l}  W_E(t)  - \sum_{l=1}^L \hat{p}_{t_l}\hat{p}_t W_E(t_l) )
\end{split}
\end{equation*}
Therefore,
\begin{equation*}
    \nabla_{u_{t, t_{L}}} \ell(f^L) = \nabla \ell(f^L)^T (W_E)^T W^{V} (  \alpha_t \hat{p}_{t}  W_E(t)  - \hat{p}_t \sum_{l=1}^L \hat{p}_{t_l} W_E(t_l) )
\end{equation*}
where $\alpha_t = \sum_{l=1}^L \mathbf{1}[t_l = t]$.
\end{proof}

% \newpage
\section{Additional experiments -- context hijacking}
\label{sec: additional_hijacking_expts}

In this section, we show the results of additional context hijacking experiments on the \cfd dataset \citep{meng2022locating}.

\paragraph{Reverse context hijacking}

In \cref{fig:sub_false_accs}, we saw the effects of hijacking by adding in ``Do not think of \{target\_false\}.''  to each context.
Now, we measure the effect of the reverse: What if we prepend ``Do not think of \{target\_true\}.'' ? 

Based on the study in this paper on how associative memory works in LLMs, we should expect the efficacy score to decrease. 
Indeed, this is what happens, as we see in \cref{fig:sub_true_accs}.

\begin{figure}[!h]
    \centering\includegraphics[width=.7\textwidth]{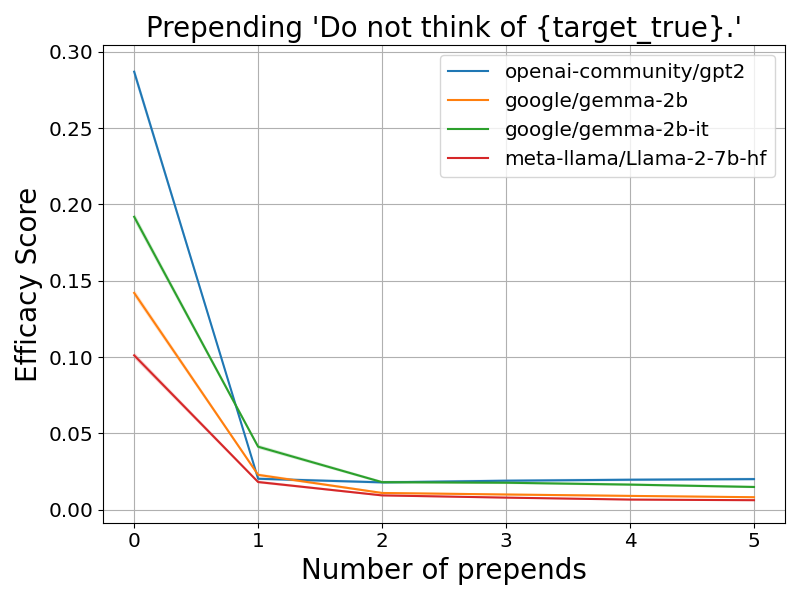}
    \caption{Prepending `Do not think of \{target\_true\}.' can increase the chance of LLMs to output correct tokens.
    This figure shows efficacy score versus the number of prepends for various LLMs on the \cfd dataset with the reverse context hijacking scheme.}
    \label{fig:sub_true_accs}
\end{figure}

\paragraph{Hijacking based on relation IDs}

We first give an example of each of the 4 relation IDs we hijack in \cref{tab: example_rel_id}.

\begin{table}[!h]
\caption{Examples of contexts in Relation IDs from \cfd}
\label{tab: example_rel_id}
\begin{center}
\begin{footnotesize}
% \begin{sc}
\begin{tabular}{cccc}
\toprule
\sc{Relation ID $\relation$} & \sc{Context $\prompt$} & \sc{True target $\targetTrue$} & \sc{False target $\targetFalse$}\\
\midrule
P190 & Kharkiv is a twin city of  & Warsaw & Athens\\
P103 & The native language of Anatole France is & French & English\\
P641 & Hank Aaron professionally plays the sport & baseball & basketball\\
P131 & Kalamazoo County can be found in & Michigan & Indiana\\
\bottomrule
\end{tabular}
\end{footnotesize}
\end{center}
\end{table}

\begin{table}[!h]
\caption{Examples of hijack and reverse hijack formats based on Relation IDs}
\label{tab: hijack_format}
\begin{center}
\begin{footnotesize}
% \begin{sc}
\begin{tabular}{ccc}
\toprule
\sc{Relation ID $\relation$} & \sc{Context Hijack sentence} & \sc{Reverse Context Hijack sentence}\\
\midrule
P190 & The twin city of \{subject\} is not \{target\_false\} & The twin city of \{subject\} is \{target\_true\}\\
P103 & \{subject\} cannot speak \{target\_false\} & \{subject\} can speak \{target\_true\} \\
P641 & \{subject\} does not play \{target\_false\} & \{subject\} plays \{target\_true\}\\
P131 & \{subject\} is not located in \{target\_false\} & \{subject\} is located in \{target\_true\}\\
\bottomrule
\end{tabular}
\end{footnotesize}
\end{center}
\end{table}

Similar to \cref{fig:P190_accs}, we repeat the hijacking experiments where we prepend factual sentences generated from the relation ID. We use the format illustrated in \cref{tab: hijack_format} for the prepended sentences. We experiment with 3 other relation IDs and we see similar trends for all the LLMs
in \cref{fig:sentence_false_P103}, \ref{fig:sentence_false_P131}, and \ref{fig:sentence_false_P641}. That is, the efficacy score rises for the first prepend and as we increase the number of prepends, the trend of ES rising continues. Therefore, this confirms our intuition that LLMs can be hijacked by contexts without changing the factual meaning.

Similar to \cref{fig:sub_true_accs}, we experiment with reverse context hijacking where we give the answers based on relation IDs, as shown in \cref{tab: hijack_format}. We again experiment with the same 4 relation IDs and the results are in Figure~\ref{fig:sentence_true_P103} - \ref{fig:sentence_true_P641}.
We see that the efficacy score decreases when we prepend the answer sentence, thereby verifying the observations of this study.

\begin{figure}
    \centering
    \begin{subfigure}{0.45\textwidth}
        \centering
        \includegraphics[width=\textwidth]{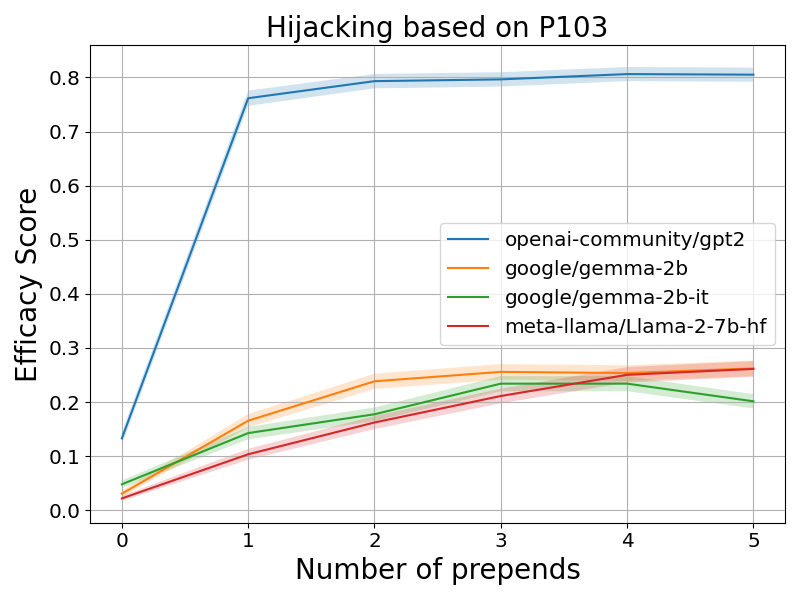}
        \caption{Relation P$103$}
        \label{fig:sentence_false_P103}
    \end{subfigure}
    \begin{subfigure}{0.45\textwidth}
        \centering
        \includegraphics[width=\textwidth]{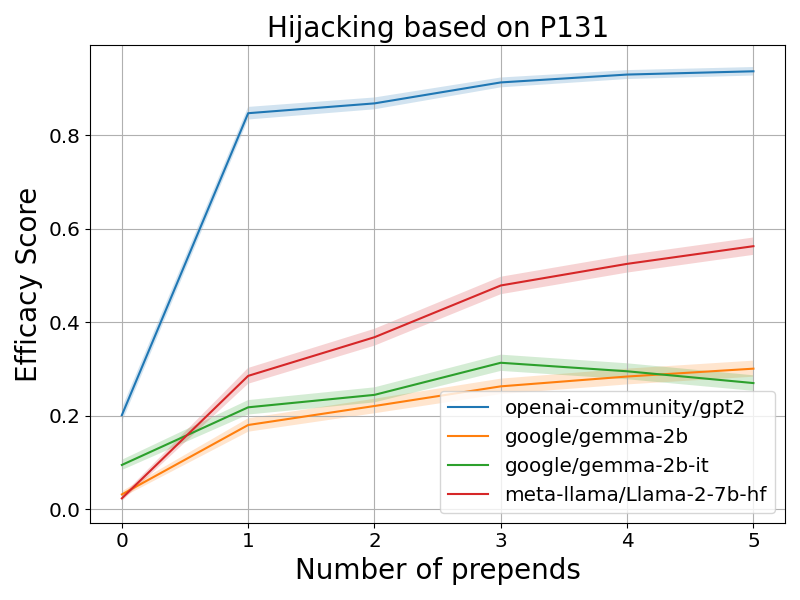}
        \caption{Relation P$132$}
        \label{fig:sentence_false_P131}
    \end{subfigure}
    \begin{subfigure}{0.45\textwidth}
        \centering
        \includegraphics[width=\textwidth]{pics/hijack/sentence_false_P190.png}
        \caption{Relation P$190$}
        \label{fig:sentence_false_P190}
    \end{subfigure}
    \begin{subfigure}{0.45\textwidth}
        \centering
        \includegraphics[width=\textwidth]{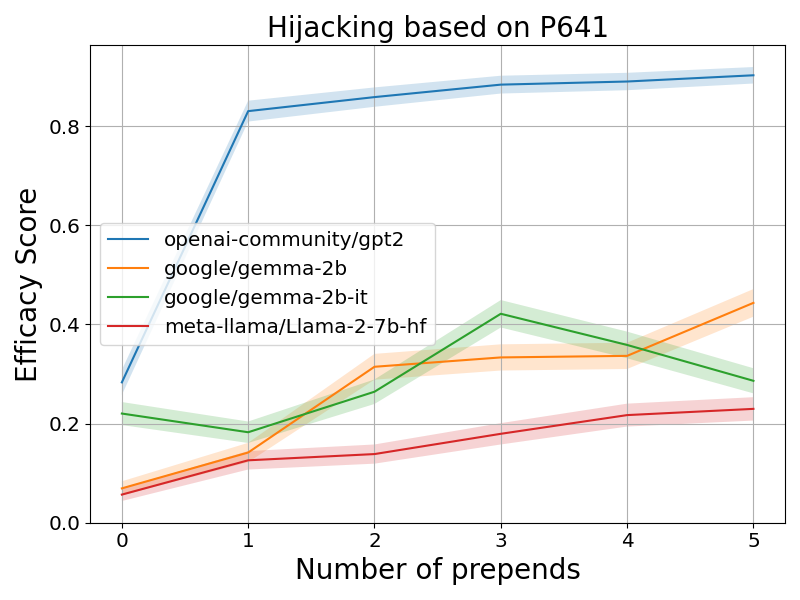}
        \caption{Relation P$641$}
        \label{fig:sentence_false_P641}
    \end{subfigure}
    \caption{Context hijacking based on relation IDs can result in LLMs output incorrect tokens. This figure shows efficacy score versus the number of prepends for various LLMs on the \cfd dataset with hijacking scheme presented in \cref{tab: hijack_format}.} 
    \label{fig:sentence_false_relation}
\end{figure}

\begin{figure}[!h]
    \centering
    \begin{subfigure}{0.45\textwidth}
        \centering
        \includegraphics[width=\textwidth]{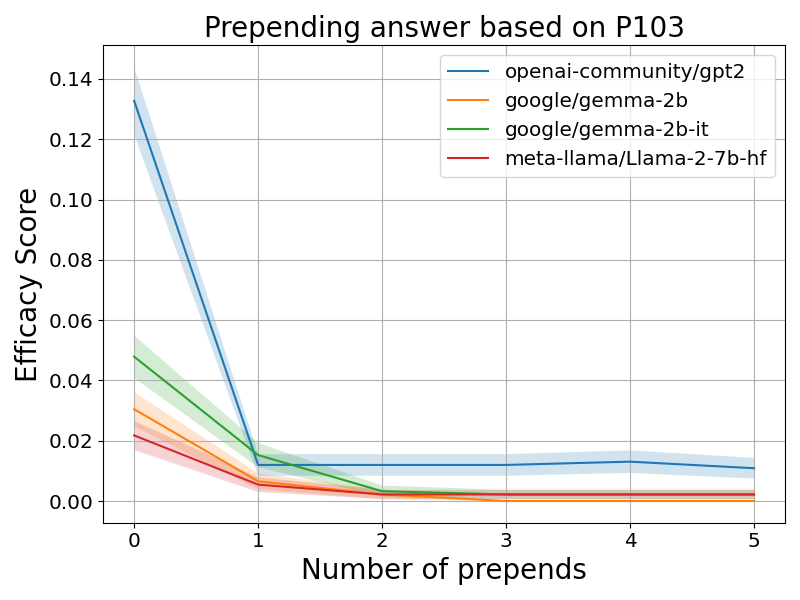}
        \caption{Relation P$103$}
        \label{fig:sentence_true_P103}
    \end{subfigure}
    \begin{subfigure}{0.45\textwidth}
        \centering
        \includegraphics[width=\textwidth]{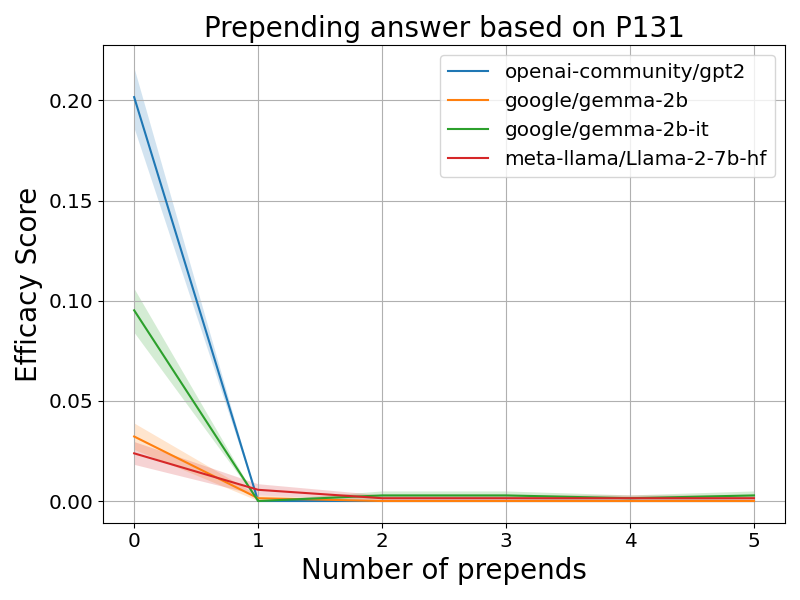}
        \caption{Relation P$132$}
        \label{fig:sentence_true_P131}
    \end{subfigure}
    \begin{subfigure}{0.45\textwidth}
        \centering
        \includegraphics[width=\textwidth]{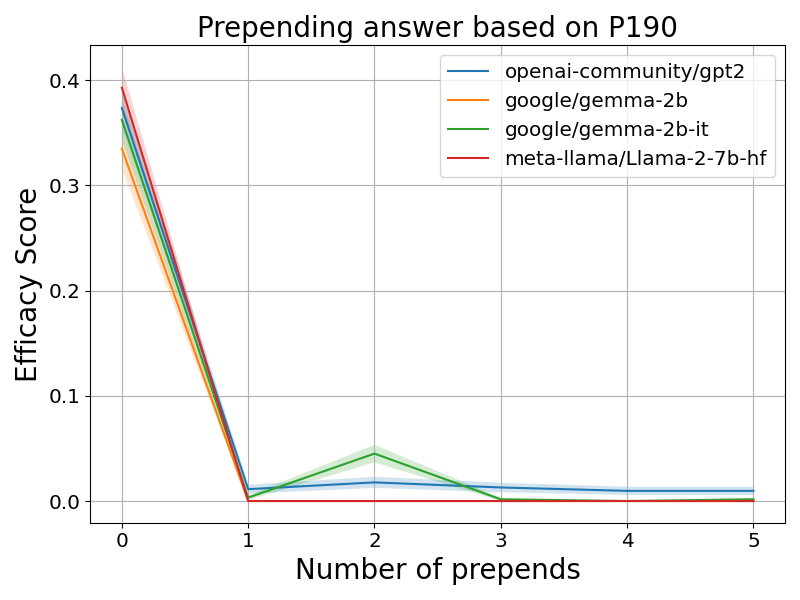}
        \caption{Relation P$190$}
        \label{fig:sentence_true_P190}
    \end{subfigure}
    \begin{subfigure}{0.45\textwidth}
        \centering
        \includegraphics[width=\textwidth]{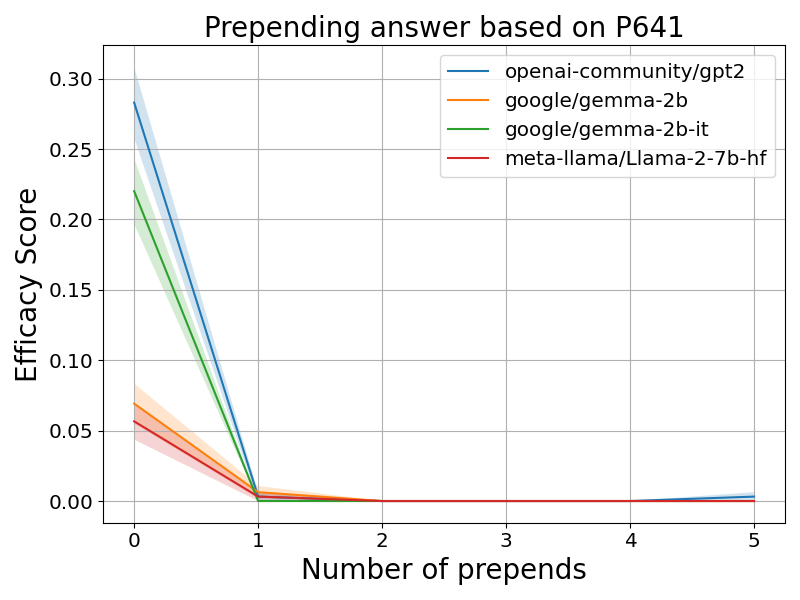}
        \caption{Relation P$641$}
        \label{fig:sentence_true_P641}
    \end{subfigure}
    \caption{Reverse context hijacking based on relation IDs can result in LLMs to be more likely to be correct. This figure shows efficacy score versus the number of prepends for various LLMs on the \cfd dataset with the reverse hijacking scheme presented in \cref{tab: hijack_format}.} 
    \label{fig:sentence_true_relation}
\end{figure}

\paragraph{Hijacking without exact target words}
So far, the experiments use prompts that either contain true or false target words. It turns out, the inclusion of exact target words are not necessary. To see this, we experiment a variant of the generic hijacking and reverse hijacking experiments. But instead of saying ``Do not think of \{target\_false\}'' or ``Do not think of \{target\_true\}''. We replace target words with words that are semantically close. Specifically, for relation P$1412$, we replace words representing language (e.g., ``French'') with their associated country name (e.g., ``France''). As shown in \cref{fig:1412}, context hijacking and reverse hijacing still work in this case.

\begin{figure}
    \centering
    \begin{subfigure}[b]{0.45\textwidth}
        \centering
        \includegraphics[width=\textwidth]{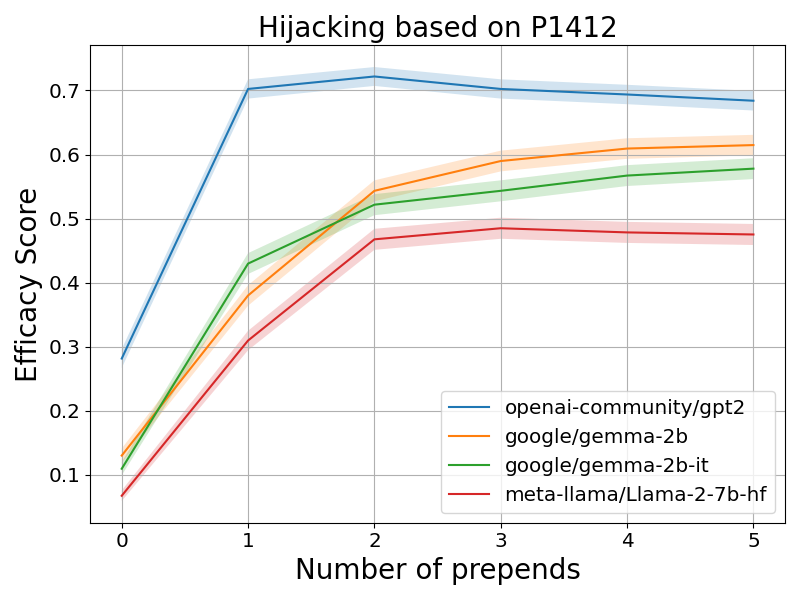}
        \caption{Hijacking P$1412$}
        \label{fig:1412}
    \end{subfigure}
    \begin{subfigure}[b]{0.45\textwidth}
        \centering
        \includegraphics[width=\textwidth]{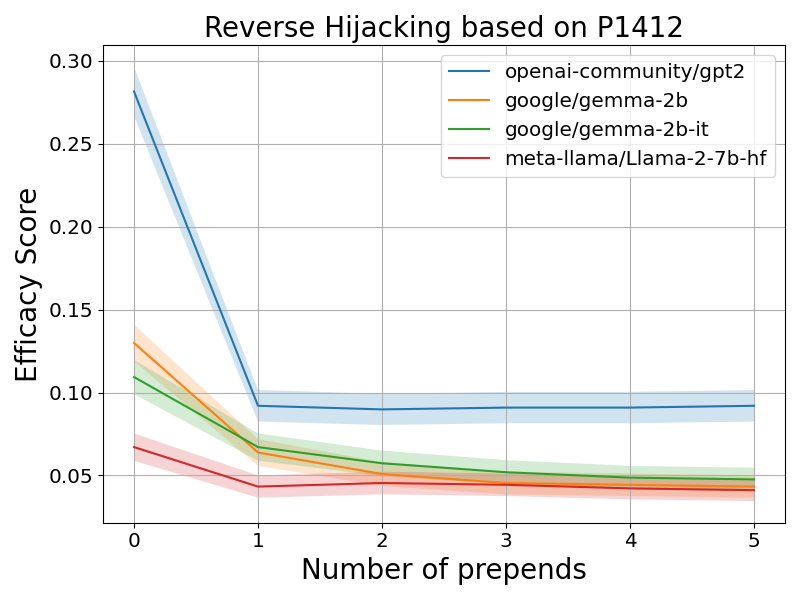}
        \caption{Reverse hijacking P$1412$}
        \label{fig:1412}
    \end{subfigure}
    \caption{Hijacking and reverse hijacking experiments on relation P$1412$ show that context hijacking does not require exact target word to appear in the context.
    This figure shows efficacy score versus the number of prepends for various LLMs on the \cfd dataset.} 
    \label{fig:1412}
\end{figure}

\newpage
\section{Additional experiments and figures -- latent concept association}
\label{append:exp-latent}
In this appendix section, we present additional experimental details and results from the synthetic experiments on latent concept association.

\paragraph{Experimental setup}
Synthetic data are generated following the model in \cref{sec:setup}. Unless otherwise stated, the default setup has $\omega = 0.5$, $\beta=1$ and $\tNe(i) = \vocab \setminus \{ i \}$ and $L=256$. The default hidden dimension of the one-layer transformer is also set to be $256$. The model is optimized using AdamW \cite{loshchilov2017decoupled} where the learning rate is chosen from $\{0.01, 0.001 \}$. The evaluation dataset is drawn from the same distribution as the training dataset and consists of $1024$ $(x,y)$ pairs. Although theoretical results in \cref{sec:theory} may freeze certain parts of the network for simplicity, in this section, unless otherwise specified, all layers of the transformers are trained jointly. Also, in this section, we typically report accuracy which is $1-\textrm{error}$.

\subsection{On the value matrix $W_V$}
\label{append:expwv}
In this section, we provide additional figures of \cref{sec:expwv}. Specifically, \cref{fig:identity} shows that fixing the value matrix to be the identity will negatively impact accuracy. \cref{fig:dim_wv_replace} indicates that replacing trained value matrices with constructed ones can preserve accuracy to some extent. \cref{fig:dim_wv_replace_angle} suggests that trained value matrices and constructed ones share similar low-rank approximations. For the last two sets of experiments, we  consider randomly constructed value matrix, where the outer product pairs are chosen randomly, defined formally as follows:

\begin{equation*}
    W_V = \sum_{i \in [\vocab]} W_E(i) (\sum_{\{j\} \sim \textrm{Unif}([\vocab])^{|\tNeOne(i)|} }W_E(j)^T) 
\end{equation*}

\begin{figure}
    \centering
    \begin{subfigure}[b]{0.45\textwidth}
        \centering
        \includegraphics[width=\textwidth]{pics/identity_training_wv/Wv_bar_l64.png}
        \caption{$L=64$}
        \label{fig:identity_64}
    \end{subfigure}
    \begin{subfigure}[b]{0.45\textwidth}
        \centering
        \includegraphics[width=\textwidth]{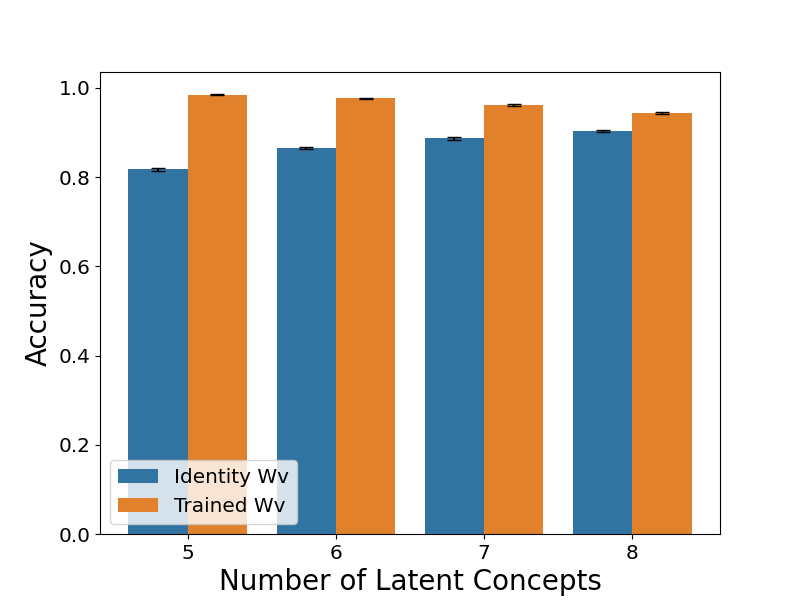}
        \caption{$L=128$}
        \label{fig:identity_128}
    \end{subfigure}
    \caption{Fixing the value matrix $W_V$ as the identity matrix results in lower accuracy compared to training $W_V$, especially for smaller context length $L$. The figure reports accuracy for both fixed and trained $W_V$ settings, with standard errors calculated over $10$ runs.}
    \label{fig:identity}
\end{figure}

\begin{figure}
    \centering
    \begin{subfigure}[b]{0.45\textwidth}
        \centering
        \includegraphics[width=\textwidth]{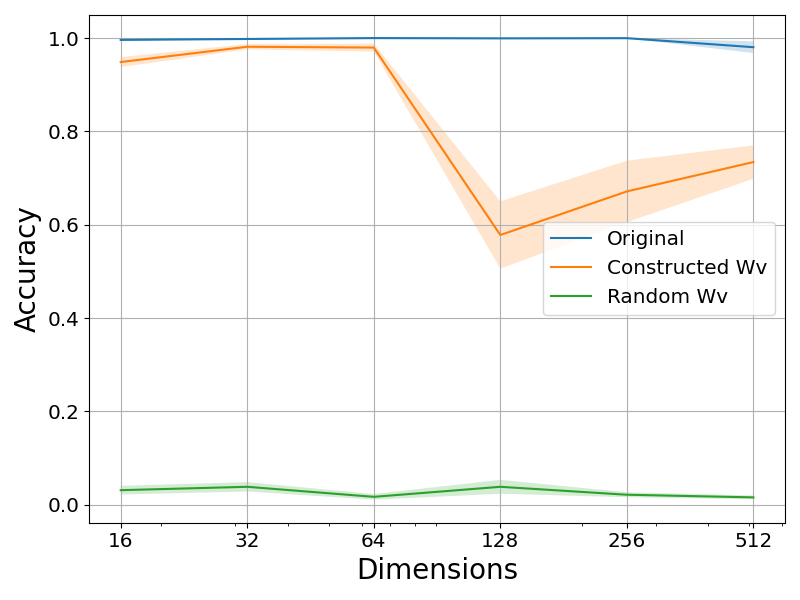}
        \caption{$m=5$}
        \label{fig:dim_wv_replace_acc_5}
    \end{subfigure}
    \begin{subfigure}[b]{0.45\textwidth}
        \centering
        \includegraphics[width=\textwidth]{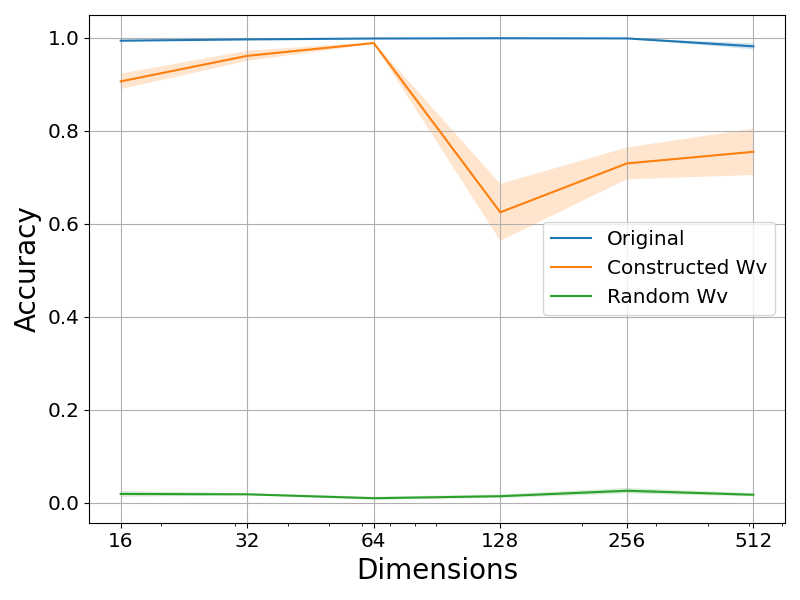}
        \caption{$m=6$}
        \label{fig:dim_wv_replace_acc_6}
    \end{subfigure}

    \begin{subfigure}[b]{0.45\textwidth}
        \centering
        \includegraphics[width=\textwidth]{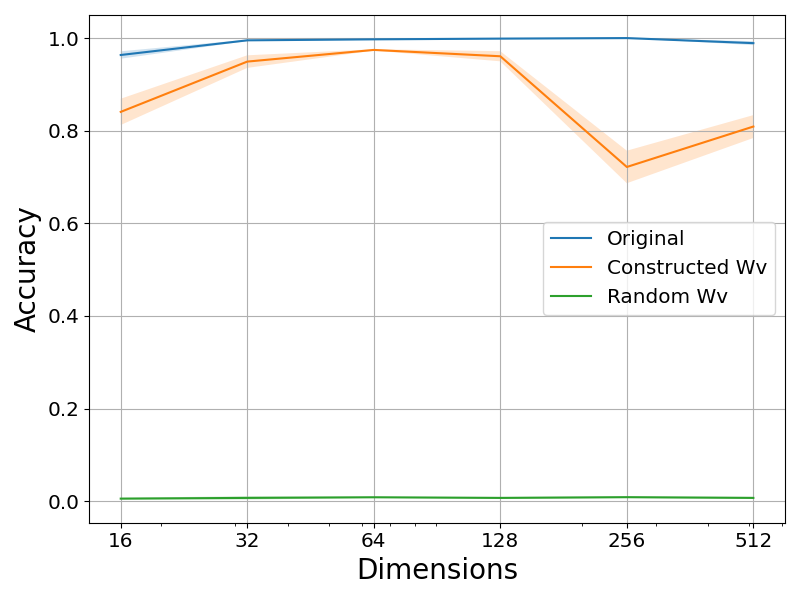}
        \caption{$m=7$}
        \label{fig:dim_wv_replace_acc_7}
    \end{subfigure}
    \begin{subfigure}[b]{0.45\textwidth}
        \centering
        \includegraphics[width=\textwidth]{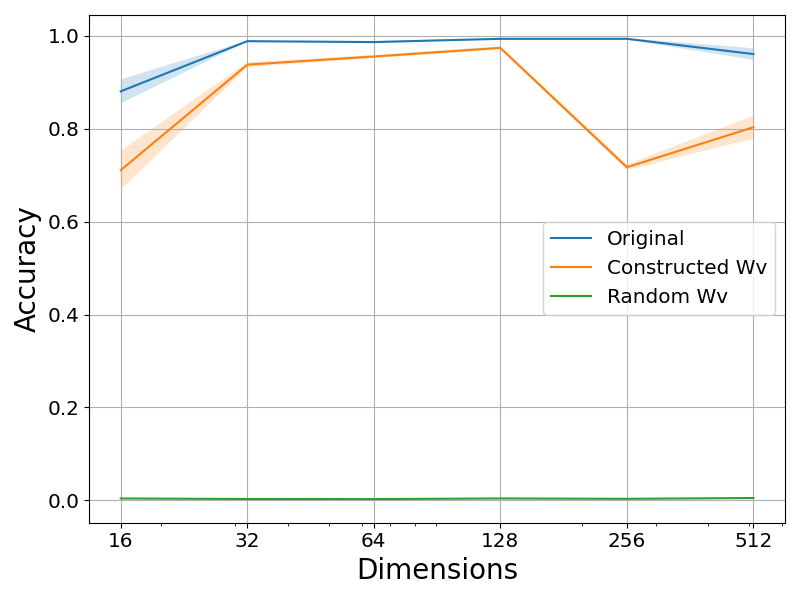}
        \caption{$m=8$}
        \label{fig:dim_wv_replace_acc_8}
    \end{subfigure}
    \caption{When the value matrix is replaced with the constructed one in trained transformers, the accuracy does not significantly decrease compared to replacing the value matrix with randomly constructed ones.
    The graph reports accuracy under different embedding dimensions and standard errors are over $5$ runs.} 
    \label{fig:dim_wv_replace}
\end{figure}

\begin{figure}
    \centering
    \begin{subfigure}[b]{0.45\textwidth}
        \centering
        \includegraphics[width=\textwidth]{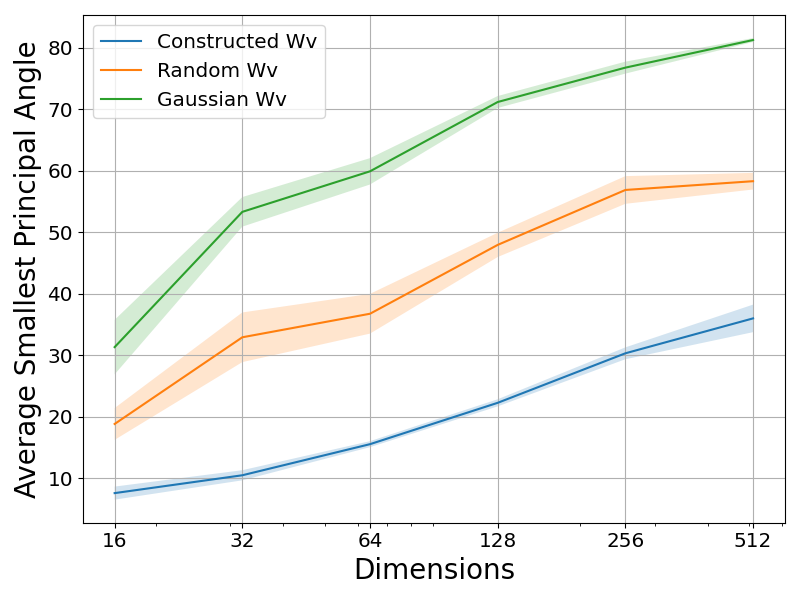}
        \caption{$m=5$}
        \label{fig:dim_wv_replace_angle_5}
    \end{subfigure}
    \begin{subfigure}[b]{0.45\textwidth}
        \centering
        \includegraphics[width=\textwidth]{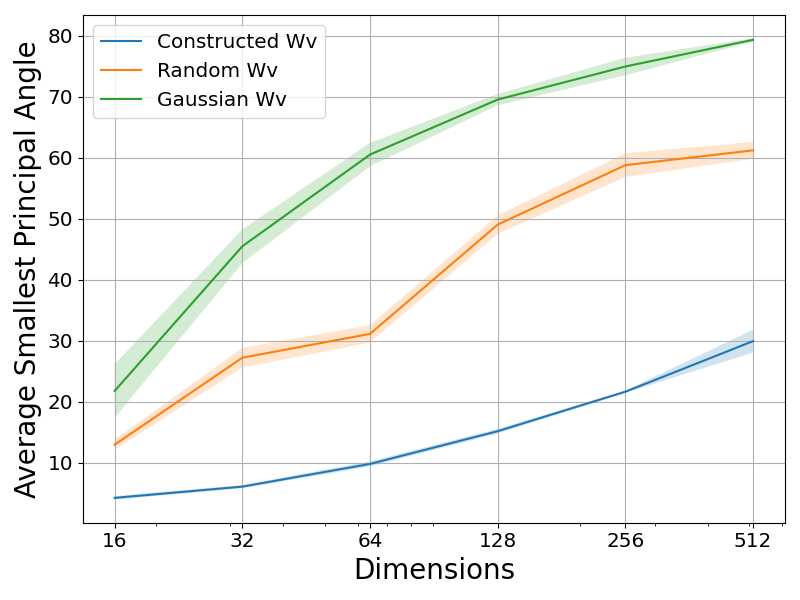}
        \caption{$m=6$}
        \label{fig:dim_wv_replace_angle_6}
    \end{subfigure}

    \begin{subfigure}[b]{0.45\textwidth}
        \centering
        \includegraphics[width=\textwidth]{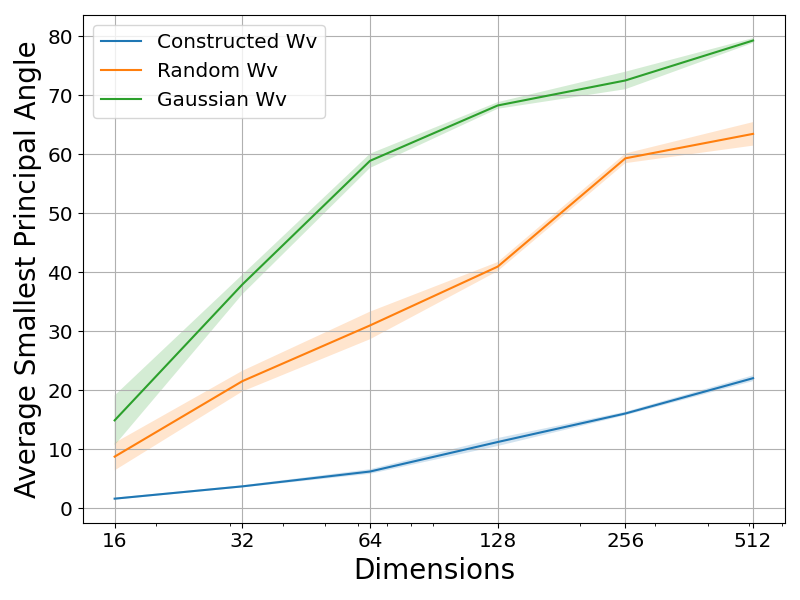}
        \caption{$m=7$}
        \label{fig:dim_wv_replace_angle_7}
    \end{subfigure}
    \begin{subfigure}[b]{0.45\textwidth}
        \centering
        \includegraphics[width=\textwidth]{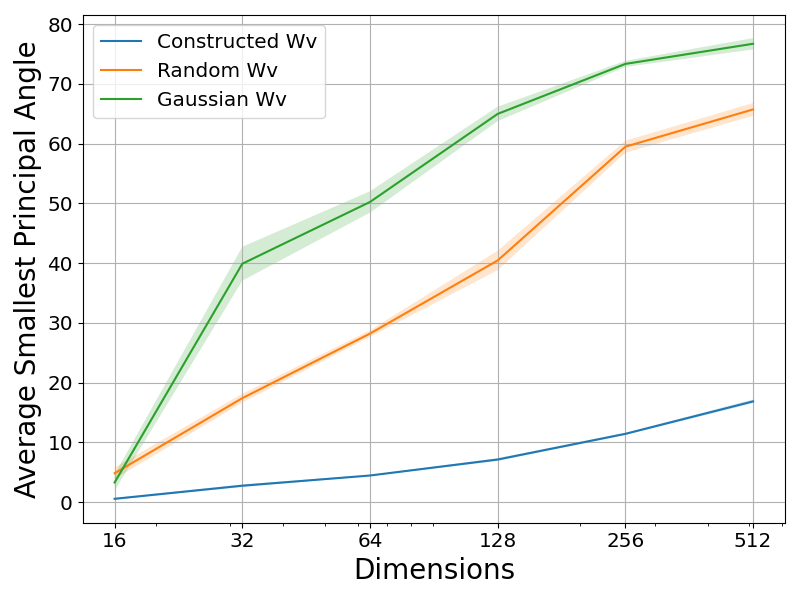}
        \caption{$m=8$}
        \label{fig:dim_wv_replace_angle_8}
    \end{subfigure}
    \caption{The constructed value matrix $W_V$ has similar low rank approximation with the trained value matrix. The figure displays average smallest principal angles between low-rank approximations of trained value matrices and those of constructed, randomly constructed, and Gaussian-initialized value matrices.
    Standard errors are over $5$ runs.} 
    \label{fig:dim_wv_replace_angle}
\end{figure}

\subsection{On the embeddings}
\label{appen:emb}
This section provides additional figures from \cref{sec:expemb}. \cref{fig:dim_acc} shows that in the underparameterized regime, embedding training is required. \cref{fig:hamming} indicates that the embedding structure in the underparameterized regime roughly follows \cref{eqn:emb-linear}. Finally \cref{fig:no_wv_hamming} shows that, when the value matrix is fixed to the identity, the relationship between inner product of embeddings and their corresponding Hamming distance is mostly linear.

\begin{figure}
    \centering
    \begin{subfigure}[b]{0.45\textwidth}
        \centering
        \includegraphics[width=\textwidth]{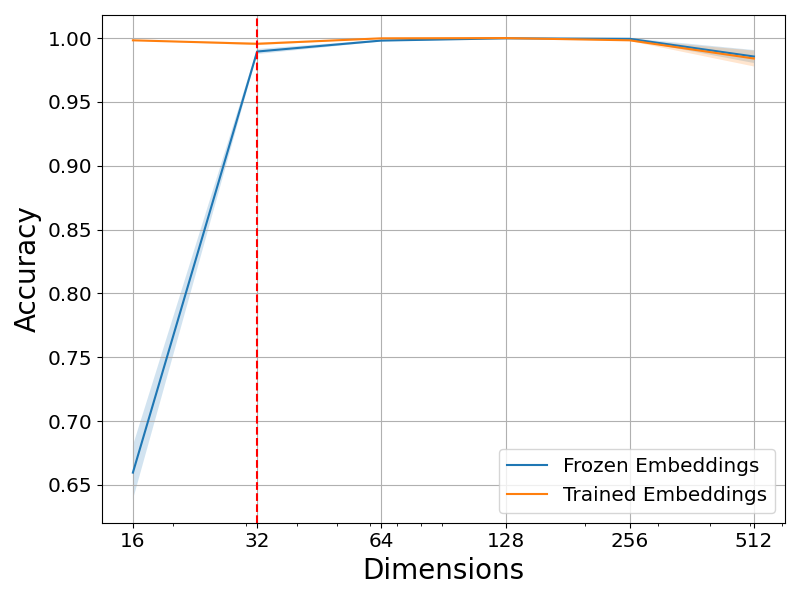}
        \caption{$m=5$}
        \label{fig:dim_acc_5}
    \end{subfigure}
    \begin{subfigure}[b]{0.45\textwidth}
        \centering
        \includegraphics[width=\textwidth]{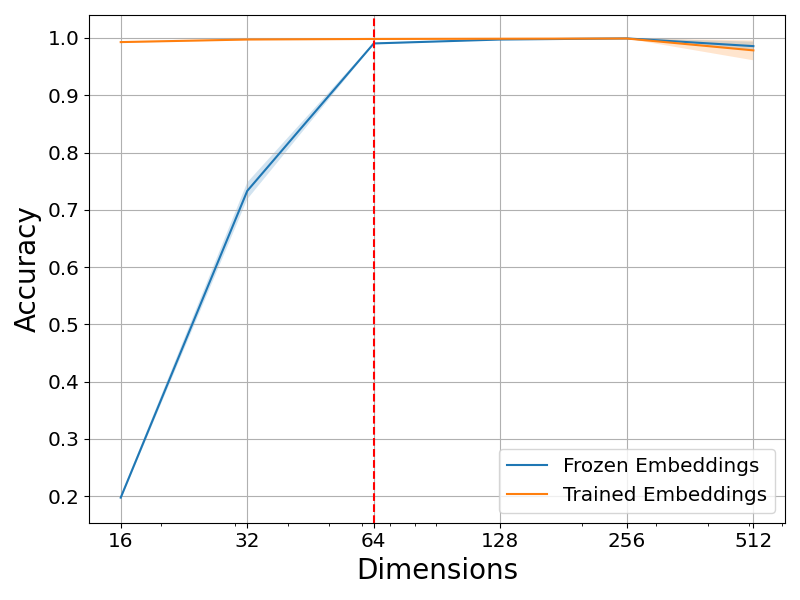}
        \caption{$m=6$}
        \label{fig:dim_acc_6}
    \end{subfigure}

    \begin{subfigure}[b]{0.45\textwidth}
        \centering
        \includegraphics[width=\textwidth]{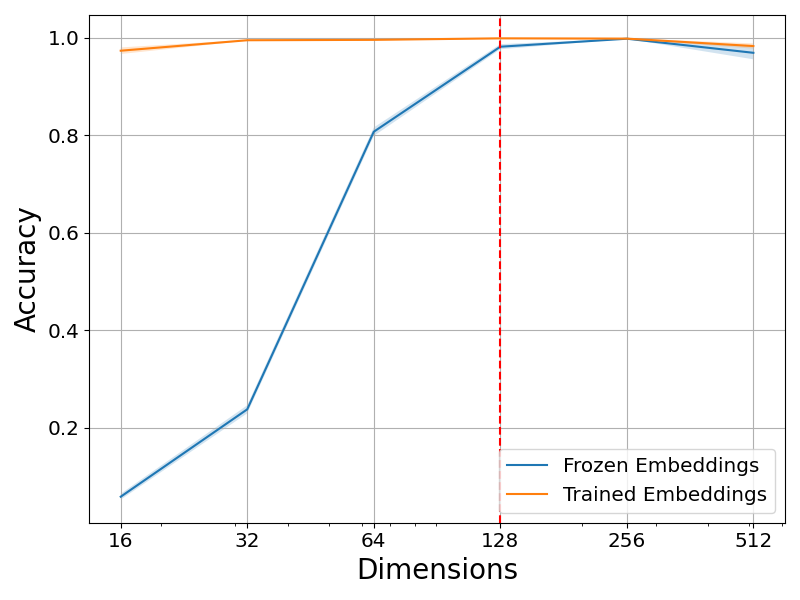}
        \caption{$m=7$}
        \label{fig:dim_acc_7}
    \end{subfigure}
    \begin{subfigure}[b]{0.45\textwidth}
        \centering
        \includegraphics[width=\textwidth]{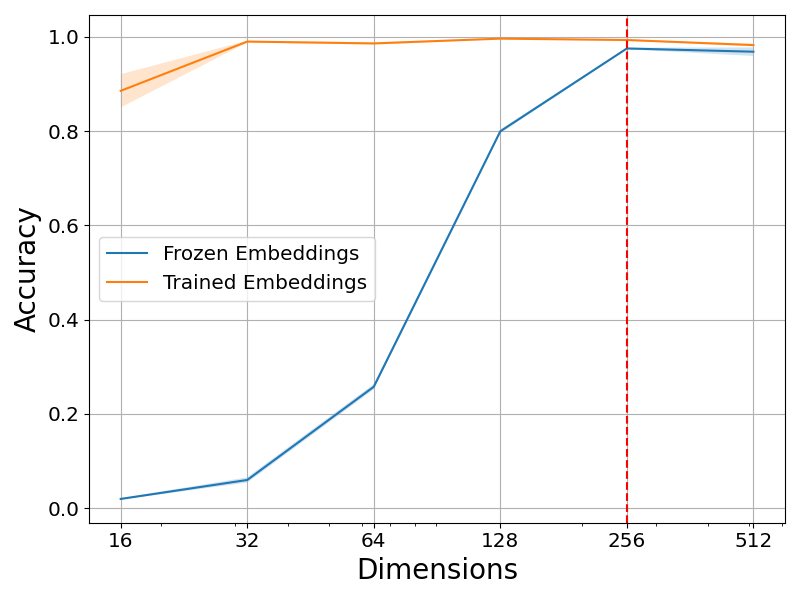}
        \caption{$m=8$}
        \label{fig:dim_acc_8}
    \end{subfigure}
    \caption{In the underparameterized regime $(d < \vocab)$, freezing embeddings to initializations causes a significant decrease in performance. The graph reports accuracy with different embedding dimensions and the standard errors are over $5$ runs. Red lines indicate when $d = \vocab$.} 
    \label{fig:dim_acc}
\end{figure}

\begin{figure}
    \centering
    \begin{subfigure}[b]{0.45\textwidth}
        \centering
        \includegraphics[width=\textwidth]{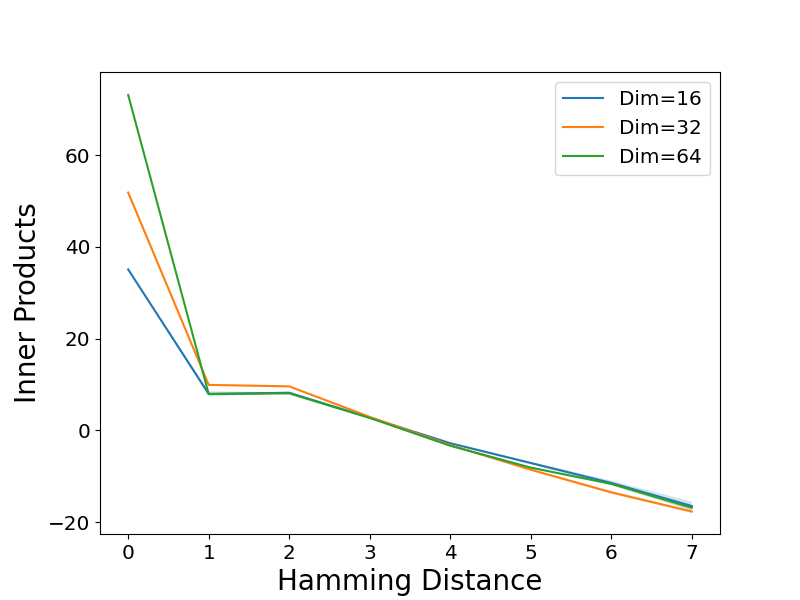}
        \caption{$m=7$}
        \label{fig:hamming_7}
    \end{subfigure}
    \begin{subfigure}[b]{0.45\textwidth}
        \centering
        \includegraphics[width=\textwidth]{pics/hamming/hamming_inner_n8.png}
        \caption{$m=8$}
        \label{fig:hamming_8}
    \end{subfigure}
    \caption{The relationship  between inner products of embeddings and corresponding Hamming distances of tokens can be approximated by \cref{eqn:emb-linear}.
    The graph displays the average inner product between embeddings of two tokens against the corresponding Hamming distance between these tokens. Standard errors are over $5$ runs.} 
    \label{fig:hamming}
\end{figure}

\begin{figure}
    \centering
    \begin{subfigure}[b]{0.45\textwidth}
        \centering
        \includegraphics[width=\textwidth]{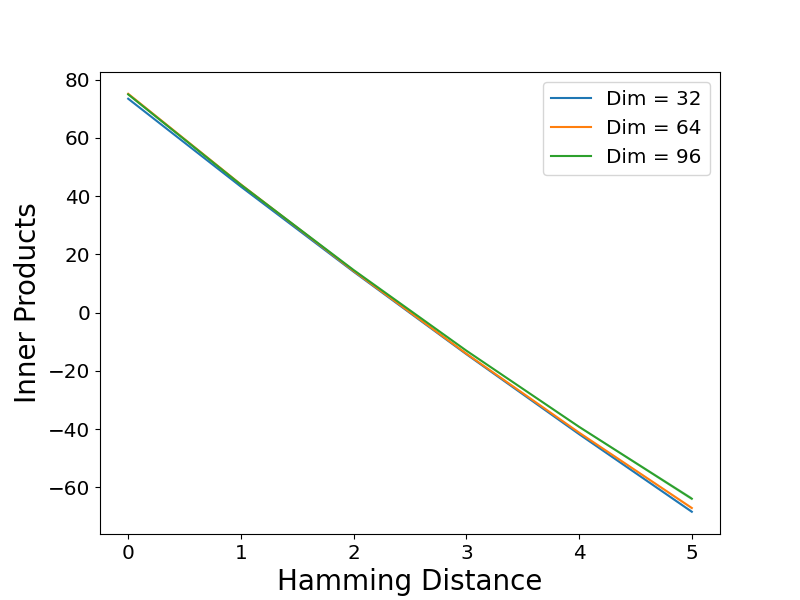}
        \caption{$m=5$}
        \label{fig:no_wv_hamming_5}
    \end{subfigure}
    \begin{subfigure}[b]{0.45\textwidth}
        \centering
        \includegraphics[width=\textwidth]{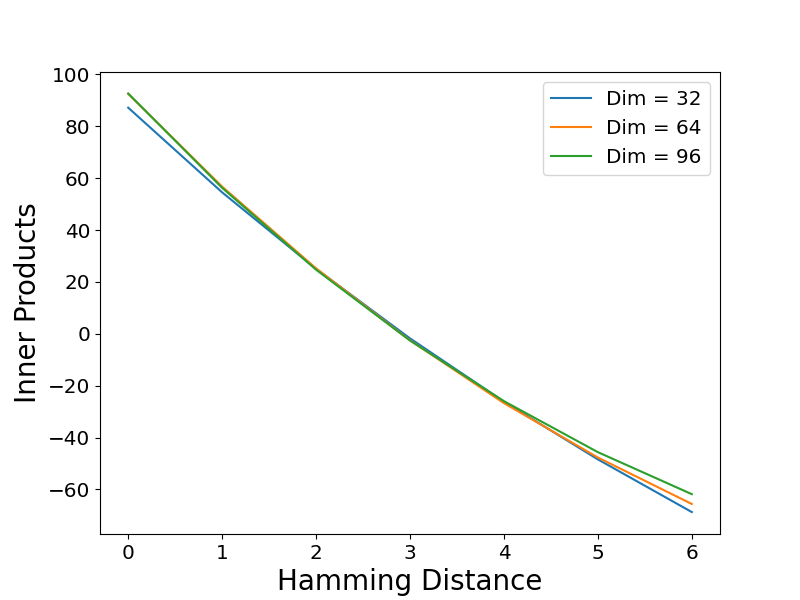}
        \caption{$m=6$}
        \label{fig:no_wv_hamming_6}
    \end{subfigure}

    \begin{subfigure}[b]{0.45\textwidth}
        \centering
        \includegraphics[width=\textwidth]{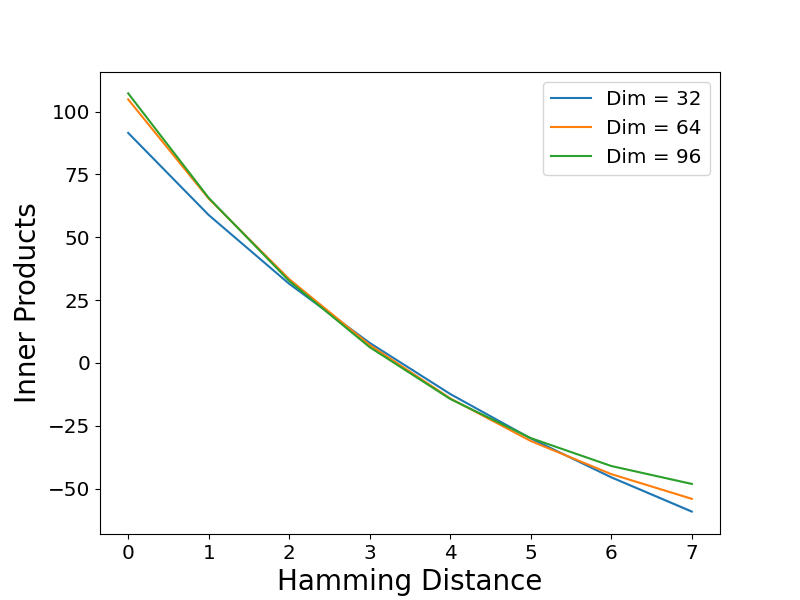}
        \caption{$m=7$}
        \label{fig:no_wv_hamming_7}
    \end{subfigure}
    \begin{subfigure}[b]{0.45\textwidth}
        \centering
        \includegraphics[width=\textwidth]{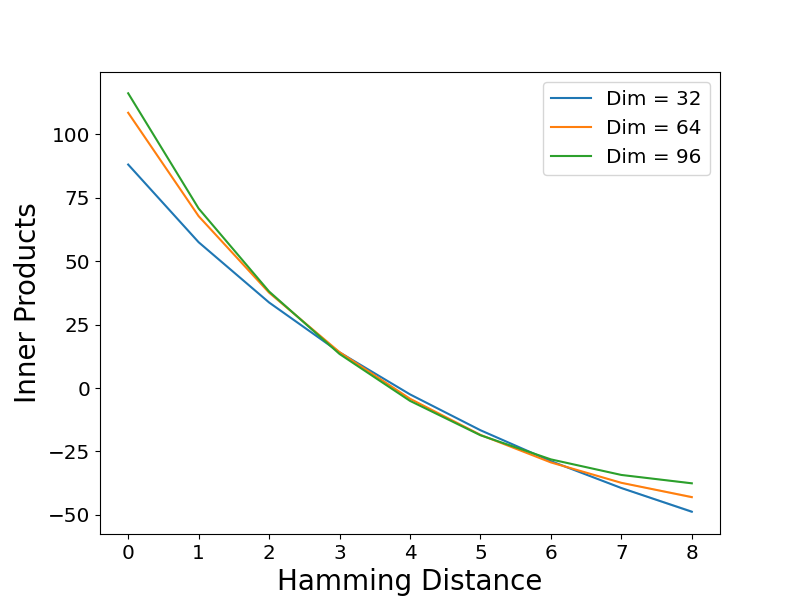}
        \caption{$m=8$}
        \label{fig:no_wv_hamming_8}
    \end{subfigure}
    \caption{The relationship  between inner products of embeddings and corresponding Hamming distances of tokens is mostly linear when the value matrix $W_V$ is fixed to be the identity.
    The graph displays the average inner product between embeddings of two tokens against the corresponding Hamming distance between these tokens. Standard errors are over $10$ runs.} 
    \label{fig:no_wv_hamming}
\end{figure}

\subsection{On the attention selection mechanism}
\label{appen:expatten}
This section provides additional figures from \cref{sec:expatten}. Figure~\ref{fig:cluster_1}-\ref{fig:cluster_2} show that attention mechanism selects tokens in the same cluster as the last token. In particular, for \cref{fig:cluster_2}, we extend experiments to consider cluster structures that depend on the first two latent variables. In other words, for any latent vector $z^{*}$, we have 
\begin{equation*}
    \latNe(z^{*}) = \{ z: z^{*}_1 = z_1 \text{ and } z^{*}_2 = z_2\} \setminus \{ z^{*} \}
\end{equation*}

\begin{figure}
    \centering
    \begin{subfigure}[b]{0.45\textwidth}
        \centering
        \includegraphics[width=\textwidth]{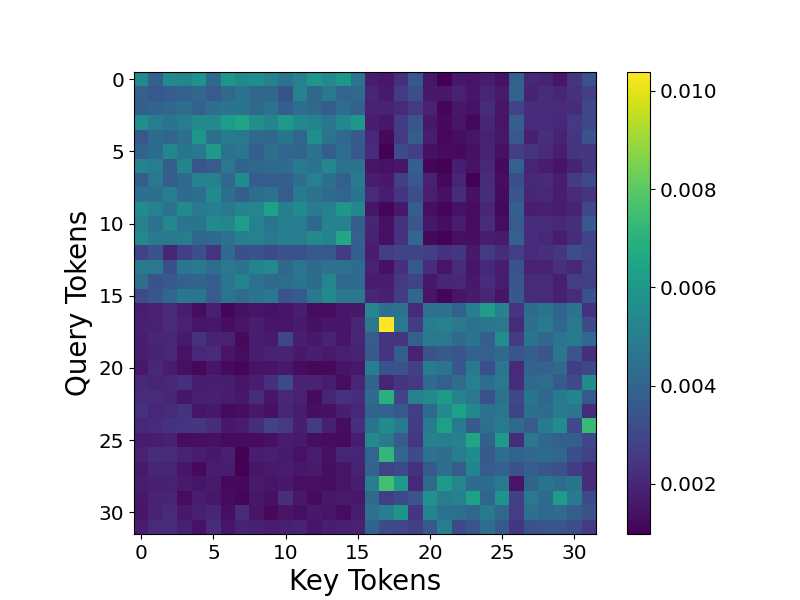}
        \caption{$m=5$}
        \label{fig:cluster_1_5}
    \end{subfigure}
    \begin{subfigure}[b]{0.45\textwidth}
        \centering
        \includegraphics[width=\textwidth]{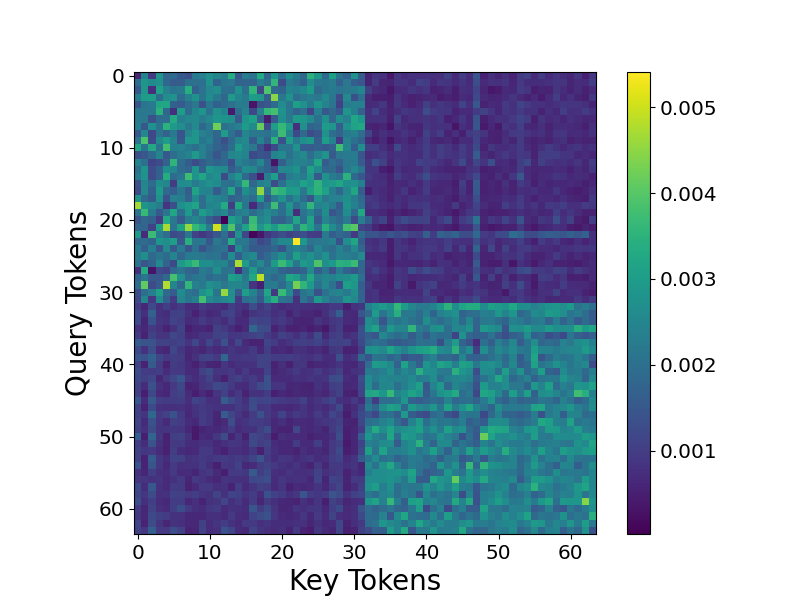}
        \caption{$m=6$}
        \label{fig:cluster_1_6}
    \end{subfigure}

    \begin{subfigure}[b]{0.45\textwidth}
        \centering
        \includegraphics[width=\textwidth]{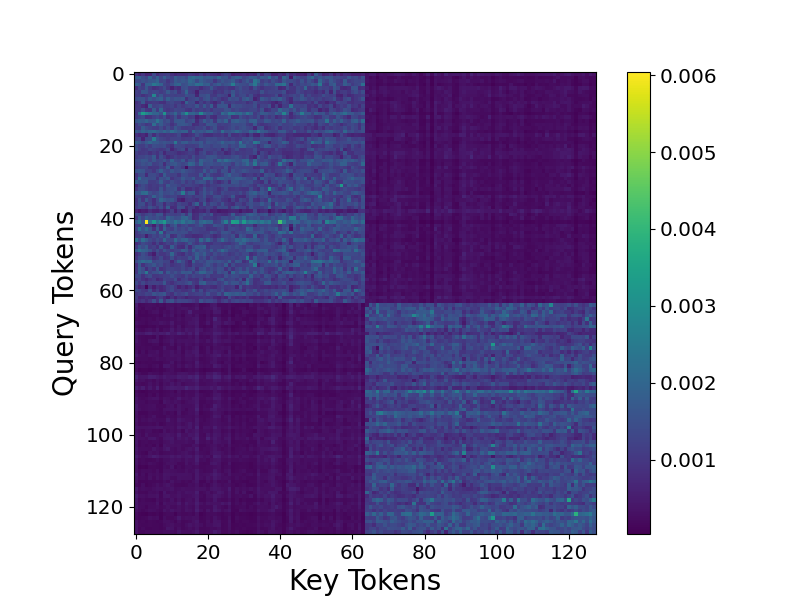}
        \caption{$m=7$}
        \label{fig:cluster_1_7}
    \end{subfigure}
    \begin{subfigure}[b]{0.45\textwidth}
        \centering
        \includegraphics[width=\textwidth]{pics/attention_cluster_1/heat_map_n8.png}
        \caption{$m=8$}
        \label{fig:cluster_1_8}
    \end{subfigure}
    \caption{The attention patterns show the underlying cluster structure of the data generating process. Here, for any latent vector, we have $\latNe(z^{*}) = \{ z: z^{*}_1 = z_1 \} \setminus \{ z^{*} \}$.
    The figure shows attention score heat maps that are averaged over $10$ runs.} 
    \label{fig:cluster_1}
\end{figure}

\begin{figure}
    \centering
    \begin{subfigure}[b]{0.45\textwidth}
        \centering
        \includegraphics[width=\textwidth]{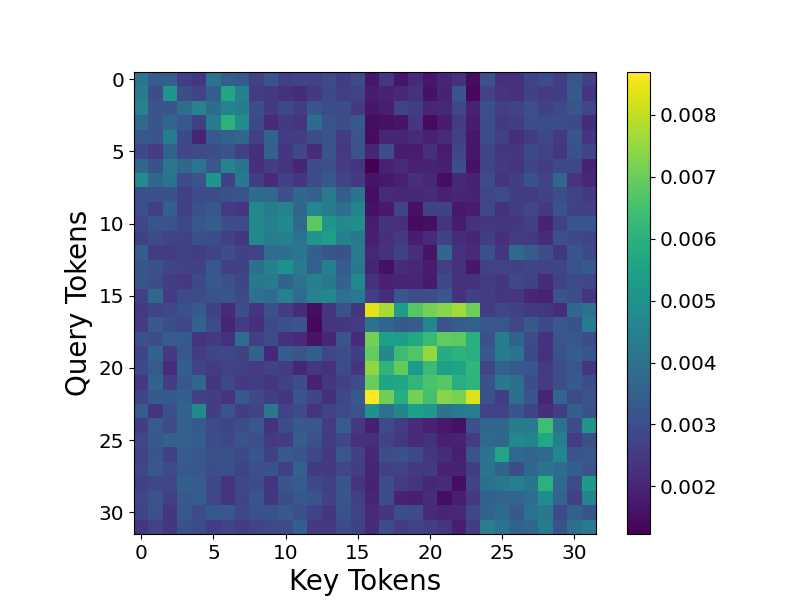}
        \caption{$m=5$}
        \label{fig:cluster_2_5}
    \end{subfigure}
    \begin{subfigure}[b]{0.45\textwidth}
        \centering
        \includegraphics[width=\textwidth]{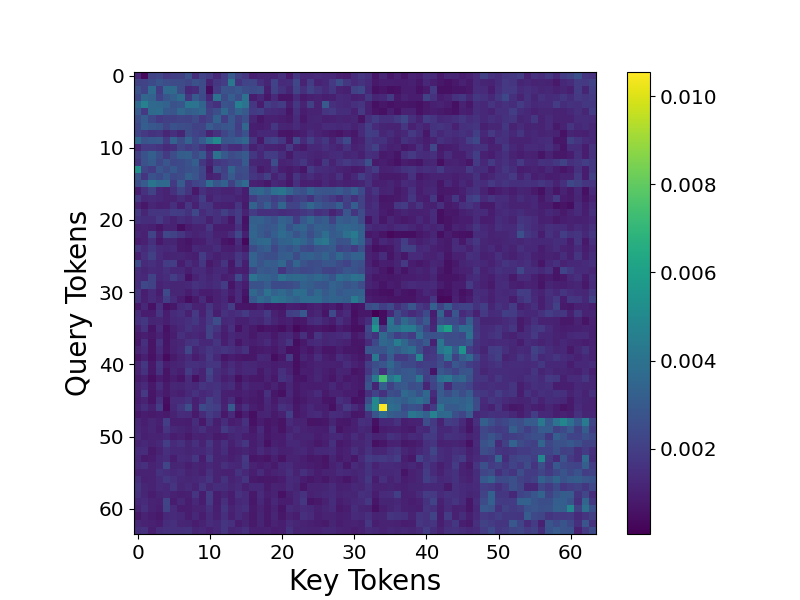}
        \caption{$m=6$}
        \label{fig:cluster_2_6}
    \end{subfigure}

    \begin{subfigure}[b]{0.45\textwidth}
        \centering
        \includegraphics[width=\textwidth]{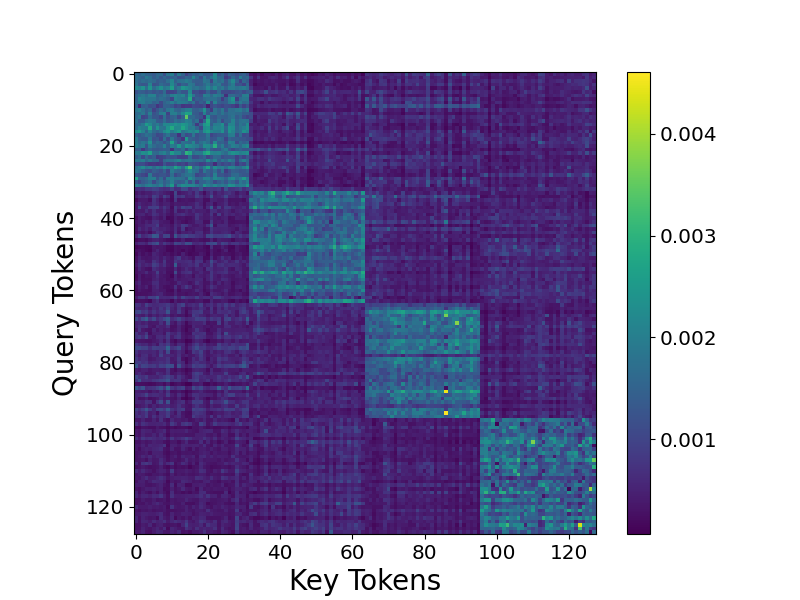}
        \caption{$m=7$}
        \label{fig:cluster_2_7}
    \end{subfigure}
    \begin{subfigure}[b]{0.45\textwidth}
        \centering
        \includegraphics[width=\textwidth]{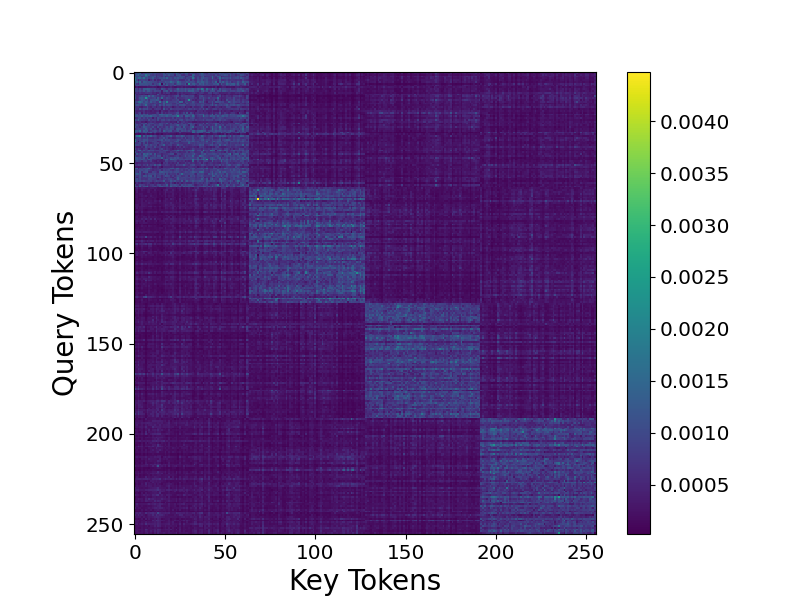}
        \caption{$m=8$}
        \label{fig:cluster_2_8}
    \end{subfigure}
    \caption{The attention patterns show the underlying cluster structure of the data generating process. Here, for any latent vector, we have $\latNe(z^{*}) = \{ z: z^{*}_1 = z_1 \text{ and } z^{*}_2 = z_2\} \setminus \{ z^{*} \}$.
    The figure shows attention score heat maps that are averaged over $10$ runs.} 
    \label{fig:cluster_2}
\end{figure}

\subsection{Spectrum of embeddings}
\label{appen:spectrum}

We display several plots of embedding spectra (\cref{fig:7_32}, \cref{fig:7_64}, \cref{fig:8_32}, \cref{fig:8_64}) that exhibit eigengaps between the top and bottom eigenvalues, suggesting low-rank structures.

\begin{figure}
    \centering
    \begin{subfigure}[b]{0.45\textwidth}
        \centering
        \includegraphics[width=\textwidth]{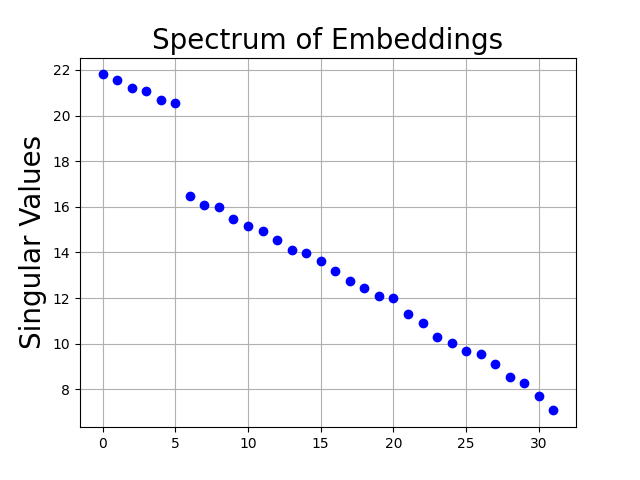}
        \caption{Sample $1$}
        \label{fig:7_32_1}
    \end{subfigure}
    \begin{subfigure}[b]{0.45\textwidth}
        \centering
        \includegraphics[width=\textwidth]{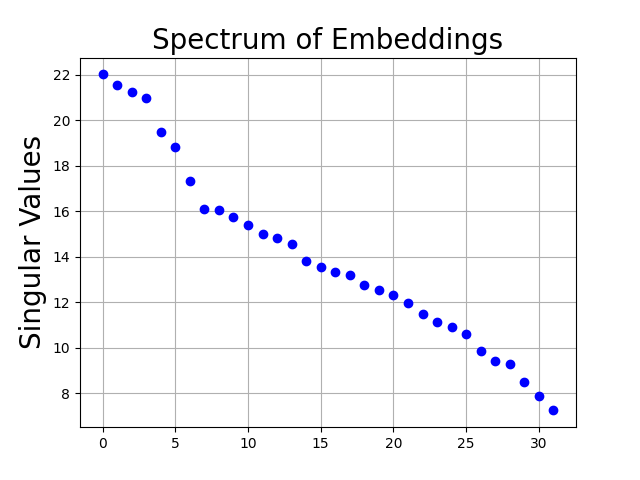}
        \caption{Sample $2$}
        \label{fig:7_32_2}
    \end{subfigure}

    \begin{subfigure}[b]{0.45\textwidth}
        \centering
        \includegraphics[width=\textwidth]{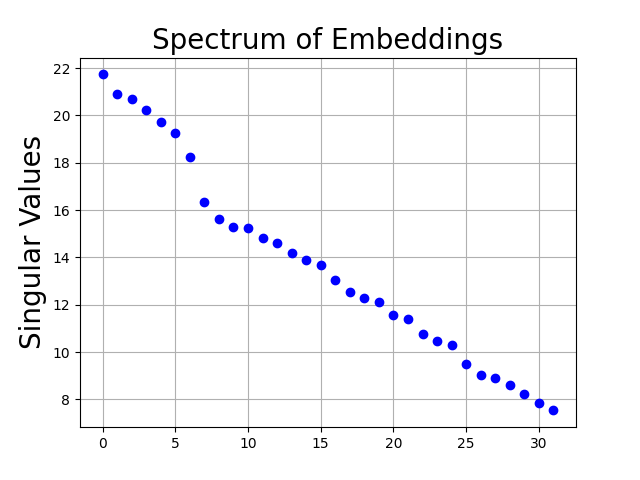}
        \caption{Sample $3$}
        \label{fig:7_32_3}
    \end{subfigure}
    \begin{subfigure}[b]{0.45\textwidth}
        \centering
        \includegraphics[width=\textwidth]{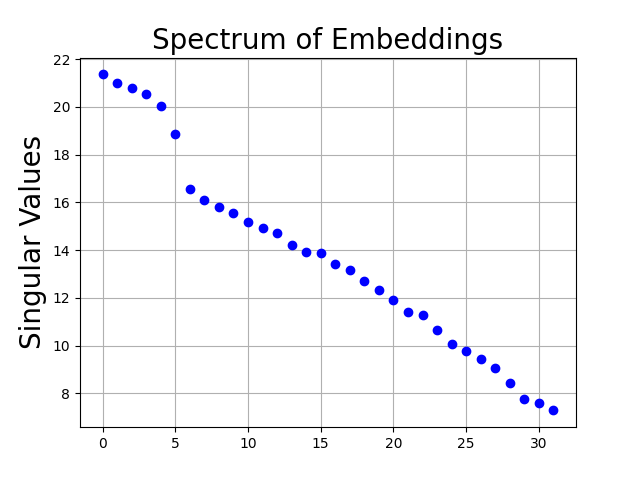}
        \caption{Sample $4$}
        \label{fig:7_32_4}
    \end{subfigure}
    \caption{The spectrum of embedding matrix $W_E$ has eigengaps between the top and bottom eigenvalues, indicating low rank structures. The figure shows results from 4 experimental runs. Number of latent variable $m$ is $7$ and the embedding dimension is $32$.} 
    \label{fig:7_32}
\end{figure}

\begin{figure}
    \centering
    \begin{subfigure}[b]{0.45\textwidth}
        \centering
        \includegraphics[width=\textwidth]{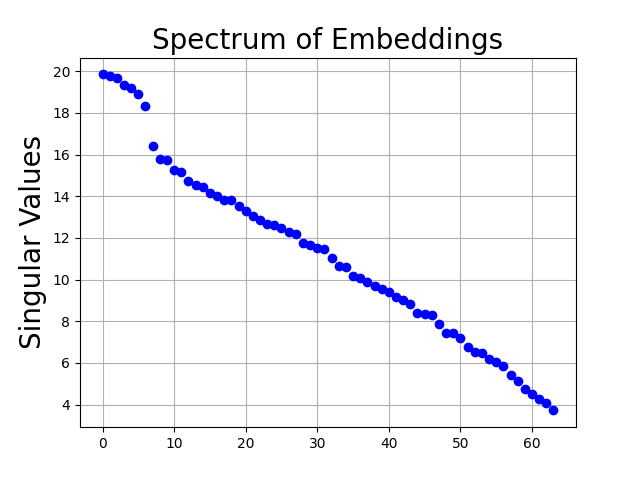}
        \caption{Sample $1$}
        \label{fig:7_64_1}
    \end{subfigure}
    \begin{subfigure}[b]{0.45\textwidth}
        \centering
        \includegraphics[width=\textwidth]{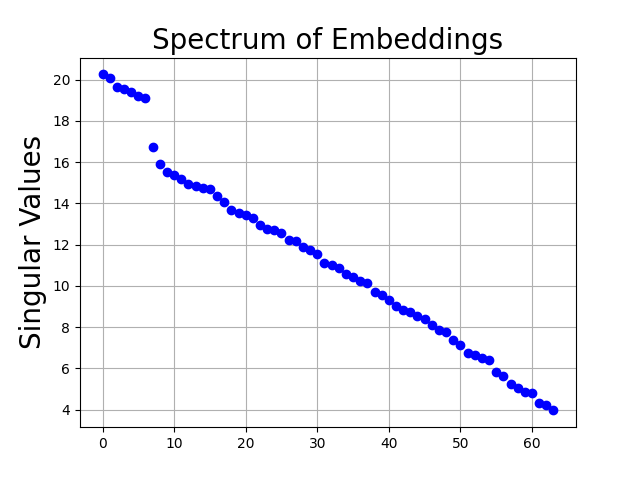}
        \caption{Sample $2$}
        \label{fig:7_64_2}
    \end{subfigure}

    \begin{subfigure}[b]{0.45\textwidth}
        \centering
        \includegraphics[width=\textwidth]{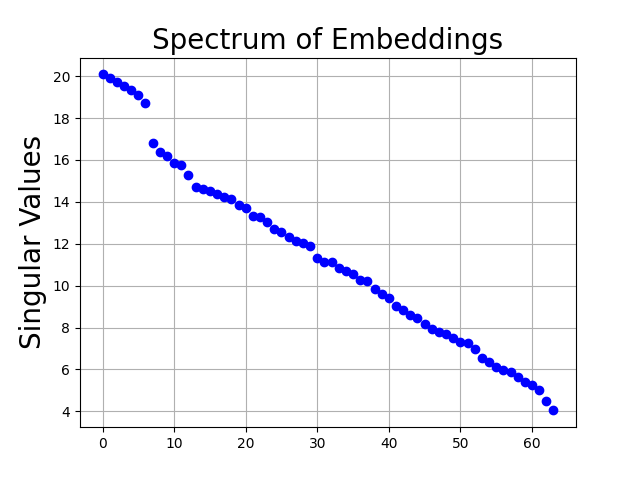}
        \caption{Sample $3$}
        \label{fig:7_64_3}
    \end{subfigure}
    \begin{subfigure}[b]{0.45\textwidth}
        \centering
        \includegraphics[width=\textwidth]{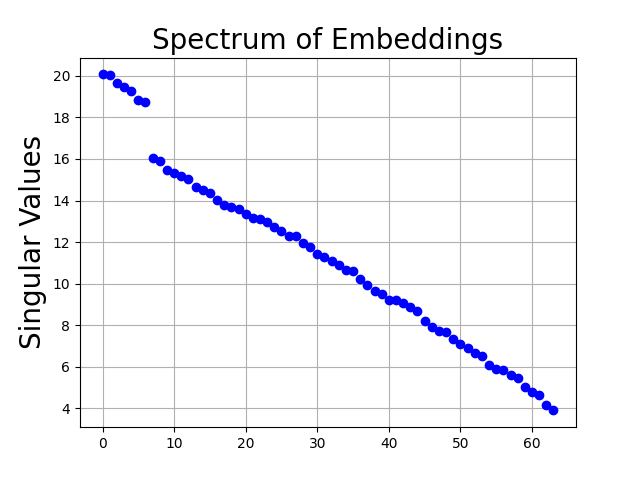}
        \caption{Sample $4$}
        \label{fig:7_64_4}
    \end{subfigure}
    \caption{The spectrum of embedding matrix $W_E$ has eigengaps between the top and bottom eigenvalues, indicating low rank structures. The figure shows results from 4 experimental runs. Number of latent variable $m$ is $7$ and the embedding dimension is $64$.} 
    \label{fig:7_64}
\end{figure}

\begin{figure}
    \centering
    \begin{subfigure}[b]{0.45\textwidth}
        \centering
        \includegraphics[width=\textwidth]{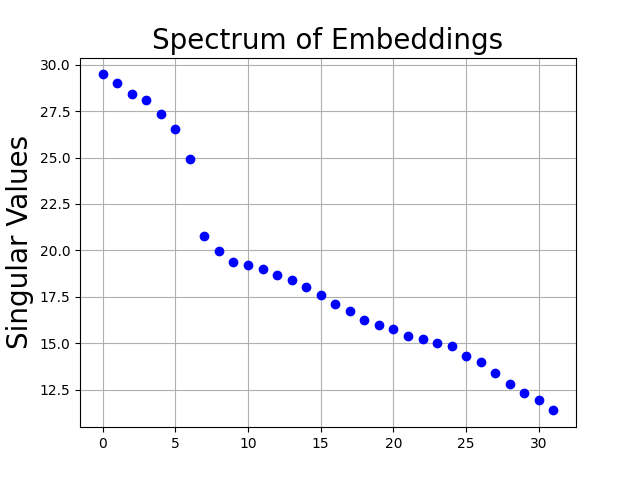}
        \caption{Sample $1$}
        \label{fig:8_32_1}
    \end{subfigure}
    \begin{subfigure}[b]{0.45\textwidth}
        \centering
        \includegraphics[width=\textwidth]{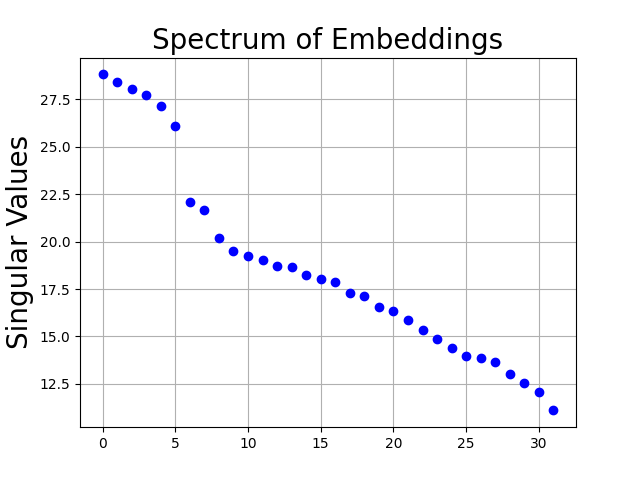}
        \caption{Sample $2$}
        \label{fig:8_32_2}
    \end{subfigure}

    \begin{subfigure}[b]{0.45\textwidth}
        \centering
        \includegraphics[width=\textwidth]{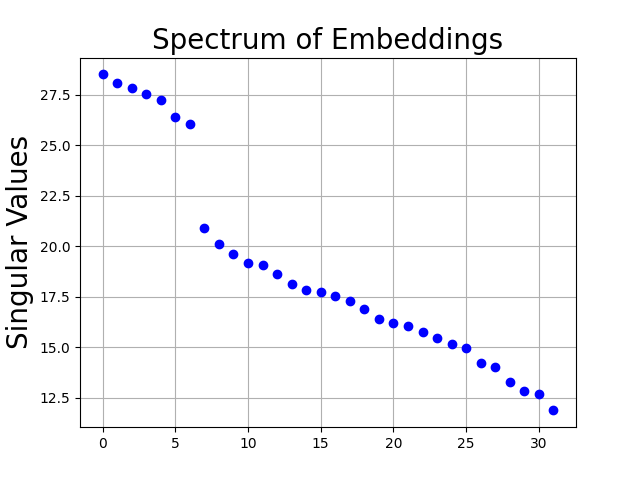}
        \caption{Sample $3$}
        \label{fig:8_32_3}
    \end{subfigure}
    \begin{subfigure}[b]{0.45\textwidth}
        \centering
        \includegraphics[width=\textwidth]{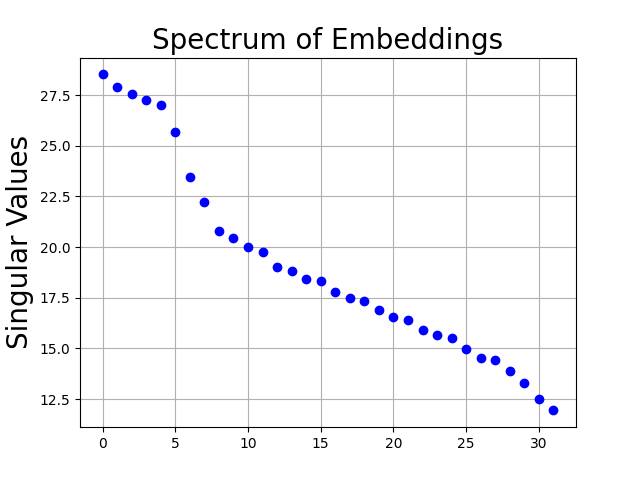}
        \caption{Sample $4$}
        \label{fig:8_32_4}
    \end{subfigure}
    \caption{The spectrum of embedding matrix $W_E$ has eigengaps between the top and bottom eigenvalues, indicating low rank structures. The figure shows results from 4 experimental runs. Number of latent variable $m$ is $8$ and the embedding dimension is $32$.} 
    \label{fig:8_32}
\end{figure}

\begin{figure}
    \centering
    \begin{subfigure}[b]{0.45\textwidth}
        \centering
        \includegraphics[width=\textwidth]{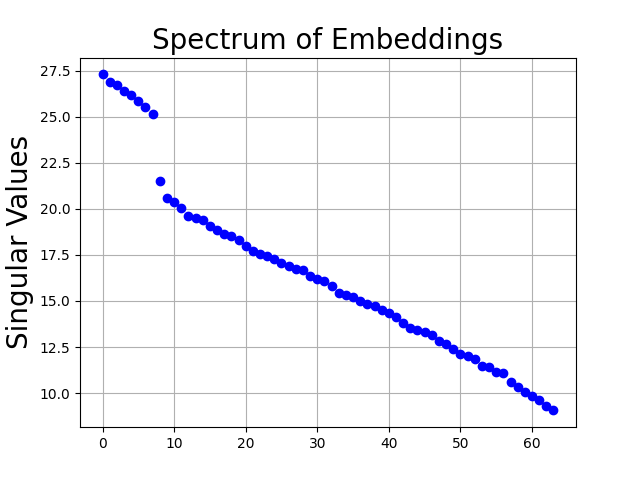}
        \caption{Sample $1$}
        \label{fig:8_64_1}
    \end{subfigure}
    \begin{subfigure}[b]{0.45\textwidth}
        \centering
        \includegraphics[width=\textwidth]{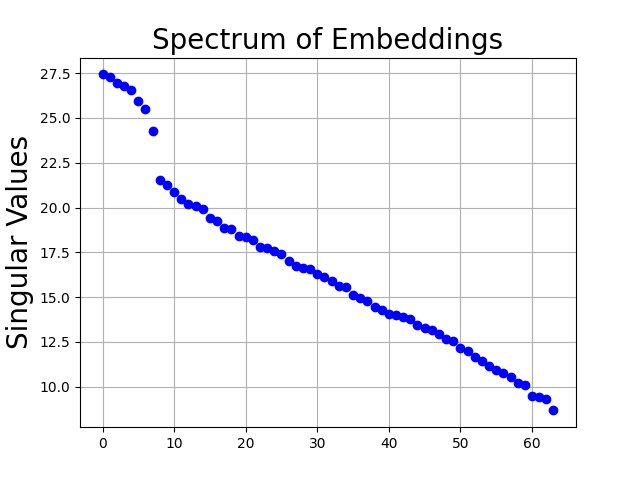}
        \caption{Sample $2$}
        \label{fig:8_64_2}
    \end{subfigure}

    \begin{subfigure}[b]{0.45\textwidth}
        \centering
        \includegraphics[width=\textwidth]{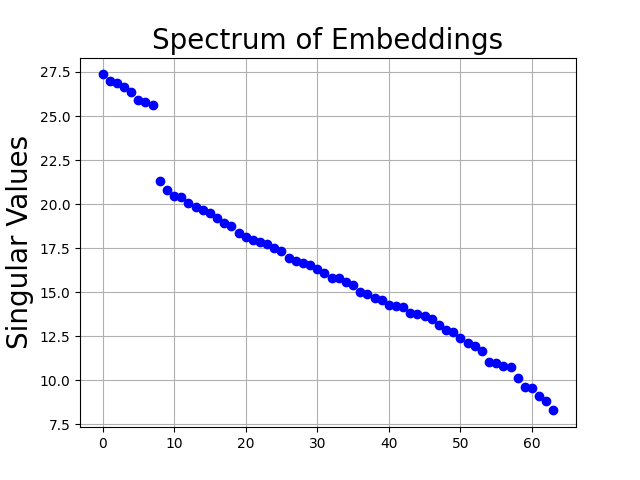}
        \caption{Sample $3$}
        \label{fig:8_64_3}
    \end{subfigure}
    \begin{subfigure}[b]{0.45\textwidth}
        \centering
        \includegraphics[width=\textwidth]{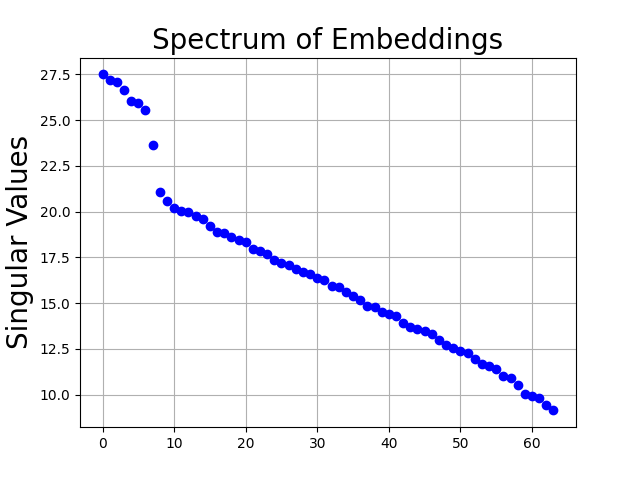}
        \caption{Sample $4$}
        \label{fig:8_64_4}
    \end{subfigure}
    \caption{The spectrum of embedding matrix $W_E$ has eigengaps between the top and bottom eigenvalues, indicating low rank structures. The figure shows results from 4 experimental runs. Number of latent variable $m$ is $8$ and the embedding dimension is $64$.} 
    \label{fig:8_64}
\end{figure}

\subsection{Context hijacking in latent concept association}
\label{appen:hijack-latent}
In this section, we want to simulate context hijacking in the latent concept association model. To achieve that, we first sample two output tokens $y^1$ (true target) and $y^2$ (false target) and then generate contexts $x^1 = (t^{1}_1, ..., t^{1}_L)$ and $x^2 = (t^{2}_1, ..., t^{2}_L)$ from $p(x^1|y^1)$ and $p(x^2|y^2)$. Then we mix the two contexts with rate $p_{m}$. In other words, for the final mixed context $x = (t_1, ..., t_L)$, $t_l$ has probability $1-p_{m}$ to be $t^1_l$ and $p_{m}$ probability to be $t^2_l$. \cref{fig:confusion} shows that, as the mixing rate increases from $0.0$ to $1.0$, the trained transformer tends to favor predicting false targets. This mirrors the phenomenon of context hijacking in LLMs.

\begin{figure}
    \centering
    \begin{subfigure}[b]{0.45\textwidth}
        \centering
        \includegraphics[width=\textwidth]{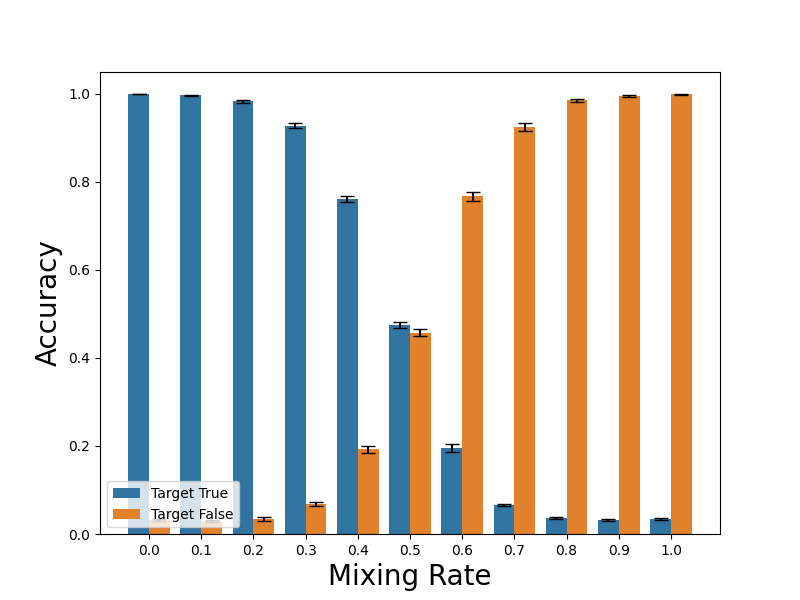}
        \caption{$m=5$}
        \label{fig:confusion_5}
    \end{subfigure}
    \begin{subfigure}[b]{0.45\textwidth}
        \centering
        \includegraphics[width=\textwidth]{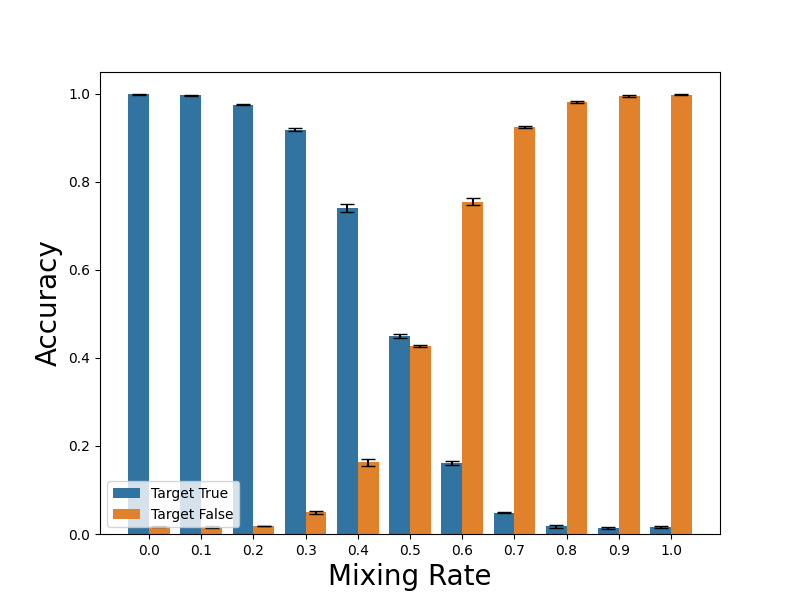}
        \caption{$m=6$}
        \label{fig:confusion_6}
    \end{subfigure}

    \begin{subfigure}[b]{0.45\textwidth}
        \centering
        \includegraphics[width=\textwidth]{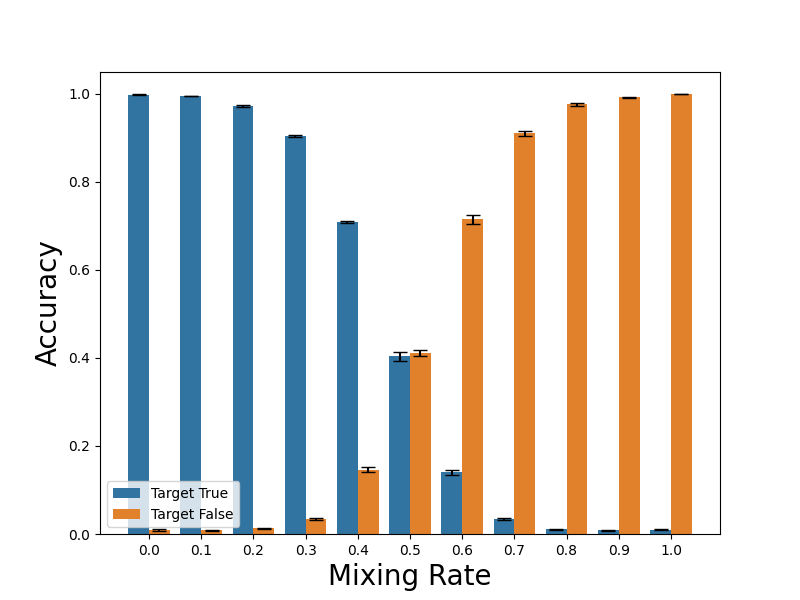}
        \caption{$m=7$}
        \label{fig:confusion_7}
    \end{subfigure}
    \begin{subfigure}[b]{0.45\textwidth}
        \centering
        \includegraphics[width=\textwidth]{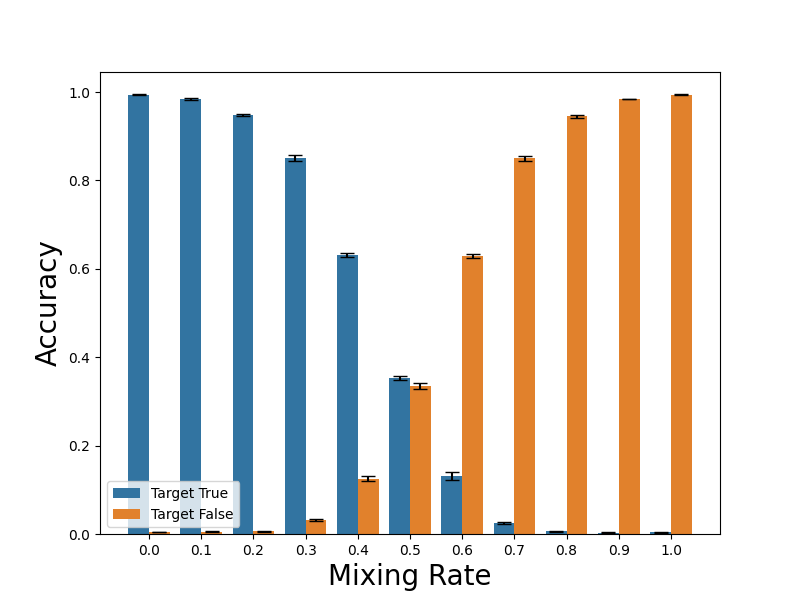}
        \caption{$m=8$}
        \label{fig:confusion_8}
    \end{subfigure}
    \caption{Mixing contexts can cause misclassification. The figure reports accuracy for true target and false target under various context mixing rate. Standard errors are over $5$ runs.} 
    \label{fig:confusion}
\end{figure}

\subsection{On the context lengths}
\label{sec:explength}
As alluded in \cref{sec:misclass}, the memory recall rate is closely related to the KL divergences between context conditional distributions. Because contexts contain mostly i.i.d samples, longer contexts imply larger divergences. This is empirically verified in \cref{fig:length} which demonstrates that longer context lengths can lead to higher accuracy.

\begin{figure}
    \centering
    \begin{subfigure}[b]{0.45\textwidth}
        \centering
        \includegraphics[width=\textwidth]{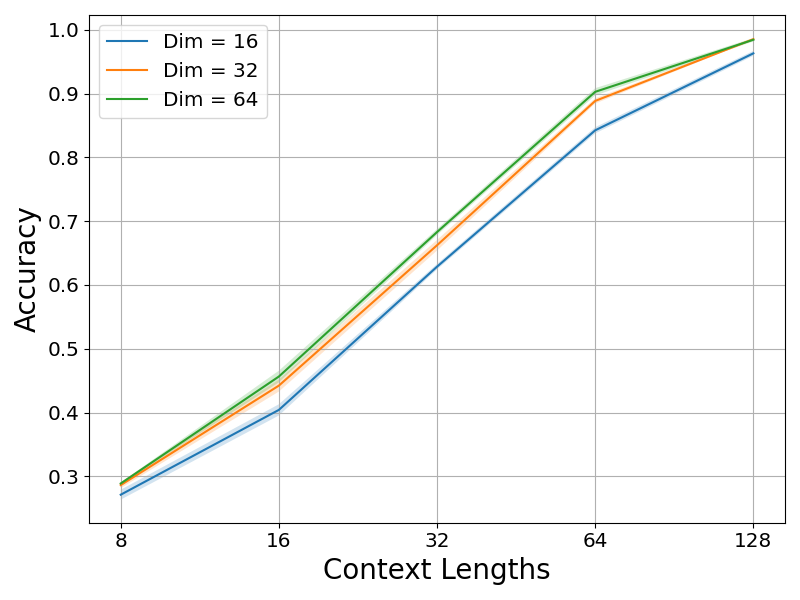}
        \caption{$m=5$}
        \label{fig:length_5}
    \end{subfigure}
    \begin{subfigure}[b]{0.45\textwidth}
        \centering
        \includegraphics[width=\textwidth]{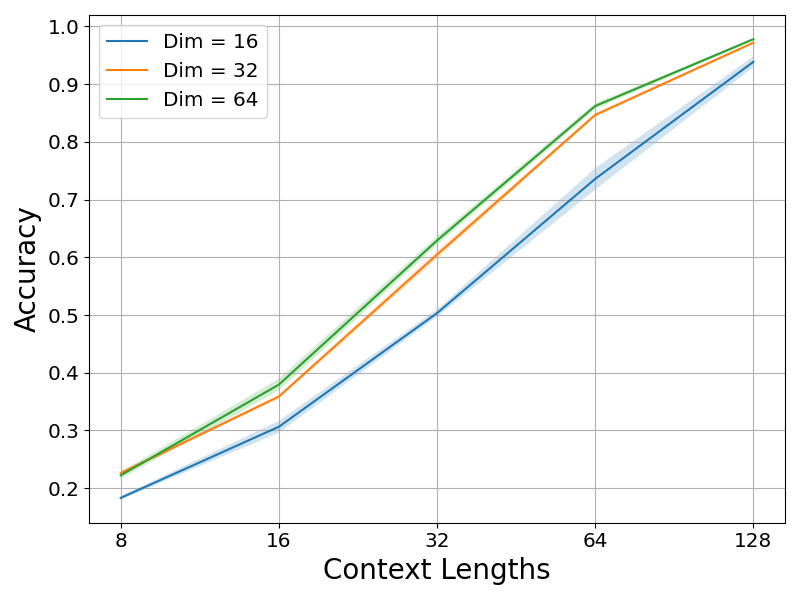}
        \caption{$m=6$}
        \label{fig:length_6}
    \end{subfigure}

    \begin{subfigure}[b]{0.45\textwidth}
        \centering
        \includegraphics[width=\textwidth]{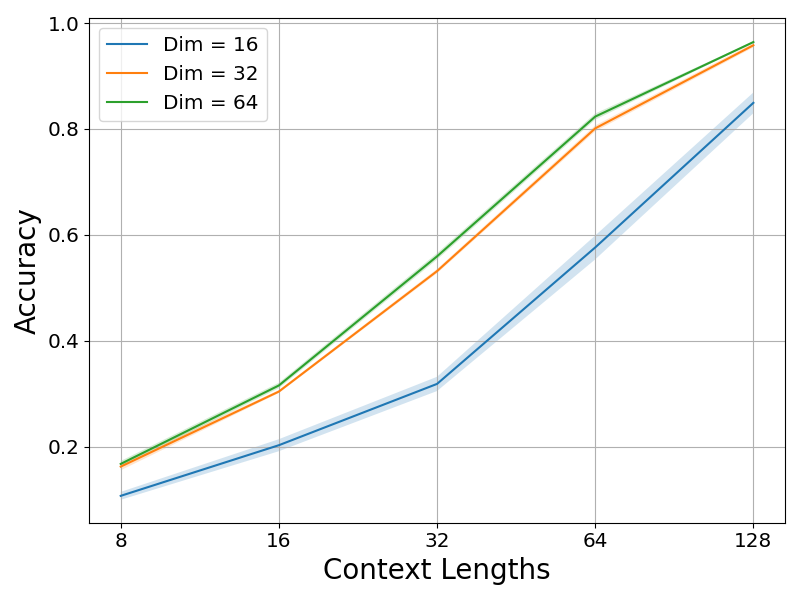}
        \caption{$m=7$}
        \label{fig:length_7}
    \end{subfigure}
    \begin{subfigure}[b]{0.45\textwidth}
        \centering
        \includegraphics[width=\textwidth]{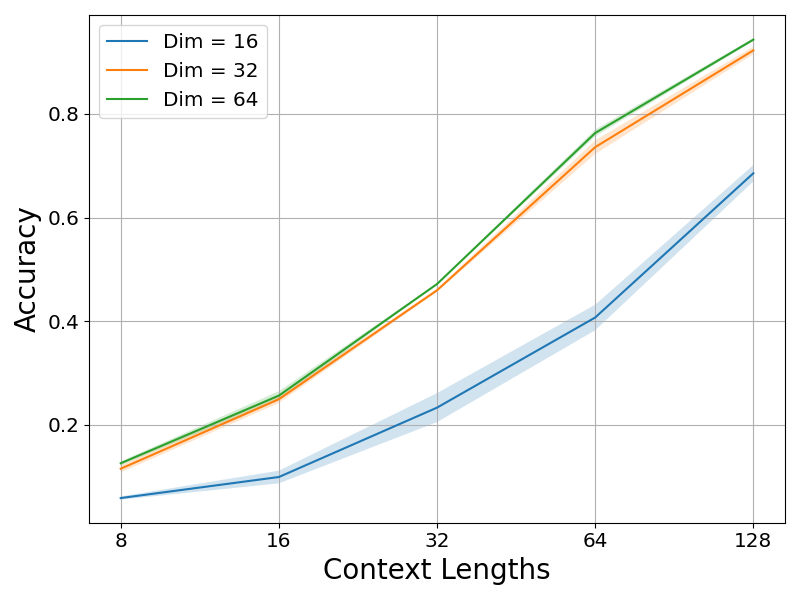}
        \caption{$m=8$}
        \label{fig:length_8}
    \end{subfigure}
    \caption{Increasing context lengths can improve accuracy.
    The figure reports accuracy across various context lengths and dimensions. Standard errors are over $5$ runs.} 
    \label{fig:length}
\end{figure}

\end{document}